\newcommand{\comments}[1]{{\color{blue}\textit{$\#$ #1}}}
\newcommand{\cA}{{\mathcal{A}}}
\newcommand{\cS}{{\mathcal{S}}}
\newcommand{\cH}{{\mathcal{H}}}
\newcommand{\cO}{{\mathcal{O}}}
\newcommand{\bE}{{\mathbb{E}}}
\newcommand{\bq}{\textbf{q}}
\newcommand{\br}{\textbf{r}}
\newcommand{\bbE}{\mathbb{E}}
\newcommand{\KL}{\textsc{KL}}
\newcommand{\AV}{{\tiny\textsc{$\alpha$}}}
\newcommand{\VT}{{\tiny\textsc{$\nu$}}}
\newif\ifnotes\notestrue
\def\htien#1{}
\definecolor{Gray}{gray}{0.92}
\newcolumntype{g}{>{\columncolor{Gray}}c}
\newcommand{\showinstance}[4][0.25]{
\begin{subfigure}{#1\textwidth}
  \centering
  \includegraphics[width=1.0\linewidth]{graphs/#2/#3.pdf}
  \ifx #4\empty \else \caption*{#4} \fi
  \captionsetup{justification=centering}
\end{subfigure}
}
\newcommand{\showtsne}[4][0.25]{
\begin{subfigure}{#1\textwidth}
  \centering
  \includegraphics[width=1.0\linewidth, trim={0.65cm 0.2cm 0.65cm 0.2cm},clip]{graphs/#2/#3.pdf}
  \ifx #4\empty \else \caption*{#4} \fi
  \captionsetup{justification=centering}
\end{subfigure}
}
\newcommand{\showtsnapshot}[2][0.25]{
\begin{subfigure}{#1\textwidth}
  \centering
  \raisebox{-.4\totalheight}{\includegraphics[width=1.0\linewidth]{graphs/Snapshots/#2.jpg}}
  \captionsetup{justification=centering}
\end{subfigure}
}
\newcommand{\red}[1]{\textcolor{red}{\textbf{#1}}}
\title[APIL]{Imitating Opponent to Win: Adversarial Policy Imitation Learning in Two-player Competitive Games}
 \author{The Viet Bui}
 \affiliation{
   \institution{Singapore Management University}
   \city{Singapore}
   \country{Singapore}}
 \email{tvbui@smu.edu.sg}
 \author{Tien Mai}
 \affiliation{
   \institution{Singapore Management University}
 \city{Singapore}
 \country{Singapore}}
 \email{atmai@smu.edu.sg}
\author{Thanh H.Nguyen}
\affiliation{
	\institution{University of Oregon}
	\city{Eugene, Oregon}
	\country{United States}}
\email{thanhhng@cs.uoregon.edu}
\begin{abstract}

Recent research on vulnerabilities of deep reinforcement learning (RL) has shown that adversarial policies adopted by an adversary agent can influence a target RL agent (victim agent) to perform poorly in a multi-agent environment. In existing studies, adversarial policies are directly trained based on experiences of interacting with the victim agent. There is a key shortcoming of this approach ---  knowledge derived from historical interactions may not be properly generalized to unexplored policy regions of the victim agent, making the trained adversarial policy significantly less effective. In this work, we design a new effective adversarial policy learning algorithm that overcomes this shortcoming. The core idea of our new algorithm is to create a new imitator --- the imitator will learn to imitate the victim agent's policy while the adversarial policy will be trained not only based on interactions with the victim agent but also based on feedback from the imitator to forecast victim's intention. By doing so, we can leverage the capability of imitation learning in well capturing underlying characteristics of the victim policy only based on sample trajectories of the victim. 
Our victim imitation learning model differs from prior models as the environment's dynamics are driven by adversary's policy and will keep changing during the adversarial policy training. We provide a provable bound to guarantee a desired imitating policy when the adversary's policy becomes stable. 
We further strengthen our adversarial policy learning by making our imitator a stronger version of the victim. That is, we incorporate the opposite of the adversary's value function to the imitation objective, leading the imitator not only to learn the victim policy but also to be adversarial to the adversary. Finally, our extensive experiments using four competitive MuJoCo game
environments show that our proposed adversarial policy learning algorithm outperforms state-of-the-art algorithms. 
\end{abstract}
\keywords{Reinforcement Learning, Non-zero-sum Multi-agent Competition, Adversarial Policy, Imitation Learning}
\newcommand{\BibTeX}{\rm B\kern-.05em{\sc i\kern-.025em b}\kern-.08em\TeX}
\begin{document}


\pagestyle{fancy}
\fancyhead{}


\maketitle 


\section{Introduction}
Exploring vulnerabilities of deep reinforcement learning has drawn a lot of interests from the AI research community~\cite{ma2019policy,rakhsha2020policy,behzadan2017vulnerability,rakhsha2021reward}, given recent successes of deep RL in accomplishing a variety of interesting multi-agent learning tasks~\cite{dosovitskiy2017carla,lewis2017deal,noonan2017jpmorgan,nazari2018reinforcement}. Most of the existing work follows the traditional adversarial learning framework which often makes strong assumptions about the adversary's capabilities. Typically, the attacker is assumed to be able to manipulate input image observations or even interfere with the learning process of the victim agent. As pointed out in some recent work~\cite{Gleave2020AdversarialPA}, these assumptions are not practical, especially in real-world domains such as autonomous driving in which the attacker cannot easily modify the input of the victim policy. 

A recent alternative attack approach to the victim policy was introduced in \cite{Gleave2020AdversarialPA} which presents the idea of \emph{adversarial policy}. Essentially, an attacker can build an adversarial policy for an opponent agent (this agent is under control by the attacker, thus is called adversary agent in our paper) that takes actions in a shared environment with the victim. This adversarial policy can weaken the outcome of the victim's policy, not by making the opponent choose stronger actions, but instead by inducing natural observations that can lead the victim to behave in an undesired way. Following this interesting approach, Guo et al. \cite{Guo2021AdversarialPL} extended the two-player zero-sum Markov game model used in \cite{Gleave2020AdversarialPA} for the general non-zero-sum game setting and introduced a more effective adversarial policy learning algorithm based on a surrogate learning objective function. 

Motivated by these initial successes which demonstrate negative effects of adversarial policy on the victim's policy, our paper focuses on designing new stronger adversarial policies in the non-zero-sum Markov game setting. Our key contribution is to introduce an imitator of which goal is to discover intrinsic properties of the victim's policy from historical interactions between the victim and adversary agents --- this knowledge is then transferred to the attacker to improve the adversarial policy. By following this idea, we are able to exploit the advantage of imitation learning \citep{ho2016generative} to anticipate the victim agent's moves in unexplored policy regions, allowing us to strengthen the generated adversarial policy. This is a significant advancement compared to previous work~\cite{Gleave2020AdversarialPA,Guo2021AdversarialPL} which learns an adversarial policy directly from past interactions with the victim agent, substantially limiting the impact of the trained adversarial policy on unseen policy regions of the victim agent. 

A key challenge in incorporating our new imitator into the adversarial policy learning framework is that the imitator is trained simultaneously with the adversarial policy learning --- the inter-dependency between the imitator and the adversary agent complicates the entire training process. Note that in traditional imitation learning, the set of the victim's policy trajectories used for training the imitator is fixed during the training process. On the other hand, in the context of adversarial policy learning, the environment's dynamics (and as a result, the victim trajectories obtained for training the imitator) are driven by the adversarial policy. The adversarial policy, in turn, is trained based on the policy output of the imitator. This learning inter-dependency complication requires a carefully designed training process to ensure a convergence to high-quality outcomes for both the imitation policy and the adversarial policy. 

To address this learning challenge, we provide the following contributions: (i) we theoretically characterize the dependency between the adversarial policy and the imitation policy; (ii) we provide a provable bound to guarantee a desired imitating policy when the adversary's policy becomes stable; and (iii)
we further strengthen our adversarial policy learning by enhancing our imitator --- we incorporate the negation of the adversary's goal into the imitator's objective function, making the imitator to both learn the victim policy and be adversarial to the adversary agent. 

Lastly, we conduct extensive experiments on four competitive MuJoCo game environments introduced by Emergent Complexity \cite{Bansal2018EmergentCV} (including Kick And Defend, You Shall Not Pass, Sumo Humans, and Sumo Ants) to evaluate our proposed adversarial policy learning algorithms. We empirically show that our generated adversarial policies obtained a significantly higher winning (plus tie) rates against the victim agent in these game environments, in comparison with  state-of-the-art adversarial policy methods. We further show that we can make the victim agent substantially more resilient to any adversarial policies (generated by different algorithms) by retraining the victim policy against our adversarial policies. 
\section{Related Work}
\paragraph{Attacks on deep RL} Existing works have focused on creating attacks by manipulating  victim's observations or victim's actions. For example,  \cite{Huang2017AdversarialAO} and \cite{behzadan2017vulnerability} propose to perturb victim's observations to force the victim to make sub-optimal actions, thus fails the task. Later works \citep{kos2017delving,russo2019optimal, lin2017tactics, sun2020stealthy,zhang2021robust} extend this approach by proposing to manipulate  victim's observations at some selected time steps, instead of the whole trajectories. Other papers \citep{Huang2017AdversarialAO,behzadan2017vulnerability,zhao2020blackbox,xiao2019characterizing,lin2020robustness} focus on attacks through observation manipulation but in black-box settings, i.e.,  the adversary does not have the power to manipulate victim's observations, but can access the input and output of the victim's policy or deep-Q networks. Besides, there are works that propose to directly perturb actions taken by victim agents in both white-box and black-box settings 
\citep{xiao2019characterizing,lee2020spatiotemporally}. There is another emerging line of approaches that view the attacks as a two-player competitive game, i.e.,  focusing on training an adversarial agent to play with the victim.  For example, \cite{Gleave2020AdversarialPA} train an adversarial agent using  Proximal Policy Optimization (PPO) \citep{schulman2017proximal} for a set of MuJoCo game environments \citep{todorov2012mujoco}, and \cite{Guo2021AdversarialPL} propose to break the zero-sum-game setting  and redesign the adversary's reward function to  achieve better adversary agents. Our methods belong to this direction,
but differ from prior methods as we create an imitator that uses imitation learning to mimic and predict victim's intention --- this prediction can be used to further strengthen the adversarial agent training.   

\paragraph{Imitation learning}
A core component of  our  algorithms is an imitation learning model trained to mimic and product similar victim's actions. We employed an extended version of the Generative Adversarial Imitation Learning (GAIL) algorithm \citep{ho2016generative,song2018multi}, a state-of-the-art imitation leaning algorithm that is highly scalable for continuous domains such as the MuJoCo ones. The literature on learning from expert demonstrations covers both  imitation learning  and inverse reinforcement learning (IRL) works. While imitation learning presents a direct approach to imitate expert's policies, IRL \citep{finn2016connection,finn2016guided,fu2017learning,yu2019multi} assumes that expert's policy is driven by an expert's reward function, thus propose to infer this function from demonstrations. Even-though IRL is more transferable for changing environments, it is often less effective  in mimicking expert's demonstrations \citep{ho2016generative}. On the other hand, for the MuJoCo game environments, since the rewards are obvious, imitation learning (or specifically GAIL) provides us with a direct and suitable algorithm for imitating the victim. Note that, in our settings, the victim's environmental dynamics keep changing during the adversarial training, raising a need for redesigning the GAIL, in both theoretical and practical aspects. We address this issue later in this paper.

    




\section{Adversarial Policy Framework}
Following the general framework introduced in \cite{Gleave2020AdversarialPA}, we consider a two-player Markov game in which the victim plays against an opponent which is under control by the adversary. We thus name these two players the victim agent and adversary agent. We represent the two-player non-zero-sum Markov game as a tuple: 
\begin{eqnarray*}
(\cS,
 \cA^{\AV}, \cA^{\VT},\bq^{\AV}, \bq^{\VT}, \br^{\AV}, \br^{\VT},\gamma),
\end{eqnarray*}
where $\alpha$ refers to the adversary and $\nu$ refers to the victim. In addition,
  $\cS$ is the  set of the states, $(\cA^{\AV},\cA^{\VT})$ are sets of actions, 
 $(\bq^{\AV},\bq^{\VT})$ are transition probabilities  and  $(\br^{\AV}, \br^{\VT})$ are reward functions  of the two players, respectively. Finally, $\gamma\in[0,1]$ is a discount factor. 
 The objective of each player is to maximize his/her long-term expected reward. 
Essentially, given a policy of the victim, denoted by $\pi^{\VT}$, the adversary aims at finding an optimal policy $\pi^{\AV}$ that maximizes their long-term reward, formulated as follows:
 \[
 \max_{\pi^{\AV}} \left\{V_{\pi^{\AV}}(s_0|\bq^\AV(\pi^{\VT})) = \bE_{\tau \sim \pi^{\AV}}\left[\sum_{t=0}^\infty \gamma^t r^\AV(s_t)\;\Big| \bq^\AV(\pi^{\VT}) \right] \right\},
 \]
where $s_0$ is the initial state and $\tau$ denotes a trajectory sampled from executing the adversary policy $\pi^{\AV}$ in the environment and $s_t\in \tau$ for all $t$. Transition probabilities of the adversary, denoted by $\bq^\AV(\pi^{\VT})$, depends on the policy of the victim  $\pi^\VT$. Specifically, the transition probability $q^\AV(s_{t+1} |s_{t}, a^\AV_t)$ can be generally computed as follows:
\begin{align*}
q^\AV(s_{t+1} |s_{t}, a^\AV_t)  
= \sum_{a^{\VT} \in \cA^{\VT}} \pi^{\VT} (a^{\VT}_{t}|s_{t}) P(s_{t+1}| s_{t},a^{\AV}_t, a^{\VT}_{t}),
\end{align*}
where  $P(s_{t+1}| s_{t},a^{\AV},a^{\VT}_{t})$ is the probability of reaching  state   $s_{t+1}$ if the  adversary and victim agents take action $a^{\AV}_{t}, a^{\VT}_{t}$, respectively, at state $s_{t}$. 
In other words, if the policy of the victim is fixed, then the transition probabilities $\bq^{\AV}$ are also fixed. Similarly, the objective of the victim is to maximize the expected long-term rewards of the victim, formulated as follows:
 \begin{align*}
 \max_{\pi^{\VT}}\left\{ V_{\pi^{\VT}}(s_0| \bq^\VT(\pi^{\AV}))  = \bE_{\tau \sim \pi^{\VT}}\left[\sum_{t=0}^\infty \gamma^t r^{\VT}(s_t)\;\Big| \bq^\VT(\pi^{\AV}) \right] \right\}.
 \end{align*}
Intuitively, if the policy of one player is fixed, then the transition probabilities (or dynamics) of the other player's environment is also fixed; thus the two-player game becomes a standard RL task. 

In this paper, we assume the victim follows a fixed policy (which was pre-trained). This is a common assumption in adversarial policy learning research, motivated by real-world settings such as autonomous vehicles in which RL-trained policies might be deployed~\cite{Guo2021AdversarialPL}. We will later discuss the effect of unfixed victim's policies on the adversarial policy training.
As mentioned previously, existing work directly trains adversarial policies based on interactions with the victim. We instead create an imitator who follows imitation learning to discover underlying characteristics of the victim policy and transfers that knowledge to the adversary, helping the adversary produces a better adversarial policy. Both the imitator and the adversary policies will be trained simultaneously based on interactions between the adversary and the victim.  

Next, we will first present our imitation learning of the victim policy. We then follow with the elaboration on our adversarial policy learning that incorporates the victim imitation learning component.

\section{Victim Imitation Learning}
In standard imitation learning, we learn to imitate an expert (which is the victim in our study) based on a fixed set of trajectories sampled from the expert's policy. On the other hand, in our problem, learning the victim policy is more challenging since it involves the adversarial policy which is also being trained at the same time (our observations of the victim policy depend on what policy the adversary is playing). In the following, we first introduce our advanced imitation model and algorithm given a fixed adversary policy. We then present our theoretical results on the impact of the adversary policy (during the training process) on our imitation learning.  
\subsection{Enhanced Imitation Learning Model}
Our objective is to build an imitation learning model to imitate the victim's policy through observing victim trajectories. Our model is essentially an enhanced version of the GAIL algorithm \citep{ho2016generative}. 
Overall, following GAIL, given an attacker policy $\pi^{\AV}$, an imitation policy can be learned by solving the following saddle point problem:
\begin{align}
&\max_{\widetilde{\pi}^{\VT}_\psi} \; \min_{D_w}  \Bigg\{\phi(\widetilde{\pi}^{\VT}_\psi,D_w) = \bbE_{\tau \sim \widetilde{\pi}^{\VT}_\psi} \left[\sum\nolimits_{t}\log (D_w(s_t,a^\VT_t)) ~\big| \bq^\VT(\pi^{\AV})\right]\nonumber \\
&+ \bbE_{\tau \sim {\pi}^{\VT}} \left[\sum\nolimits_{t} \log (1 - D_w(s_t,a^\VT_t)) ~\big| \bq^\VT(\pi^{\AV})\right]  - \lambda H(\widetilde{\pi}^{\VT}_\psi) \Bigg\}\label{prob:AIL}
\end{align}
where $H(\cdot)$ is the entropy function, 
 $\widetilde{\pi}^{\VT}_\psi$ refers to the imitating  policy which is an output of a neural net with parameter $\psi$, and $\pi^\VT$ is the victim's policy to be imitated. In addition, $D$ is a discriminative neural net model with parameter $w$ to distinguish between trajectories generated by $\widetilde{\pi}^{\VT}_\psi$ and
those from the victim’s policy. Normally, GAIL would require a large amount of victim's trajectories to provide a good imitation policy. Since we want the imitation model to work with the adversary's policy optimization simultaneously, this is difficult to achieve at early  {episodes} when the set of demonstrations from the victim is limited. Therefore,
we propose to robustify GAIL by adding  the adversary's value function to the objective: 
\begin{align}{
\max_{\widetilde{\pi}^{\VT}_\psi} \; \min_{D_w} \Bigg\{\phi^E(\widetilde{\pi}^{\VT}_\psi,D_w\big|\pi^\AV) = \phi(\widetilde{\pi}^{\VT}_\psi,D_w) - V_{\pi^{\AV}}(s_0|\bq^\AV(\widetilde{\pi}^{\VT}_\psi))\Bigg\}\label{prob:E-GAIL}
}
\end{align}
In this enhanced model \eqref{prob:E-GAIL}, the aim to train a policy that both mimics the victim and minimizes the adversary's long-term reward.
The value function $V_{\pi^{\AV}}(s_0|\bq^\AV(\widetilde{\pi}^{\VT}_\psi))$ is the adversary's expected reward, but defined as a function of imitator's policy.
The inclusion of the opponent's value function makes \eqref{prob:E-GAIL} not straightforward to handle and would require a redesign of the objective function to make it practical, as stated  in \citep{Guo2021AdversarialPL}. Despite of that, we can provide, in the following, a simple formulation for the gradient of $V_{\pi^{\AV}}(s_0|\bq^\AV(\widetilde{\pi}^{\VT}_\psi))$, making it convenient to be handled by a standard policy optimization algorithm, e.g., TRPO or PPO \cite{schulman2015trust,schulman2017proximal}. 
\begin{lemma} \label{lm:lm1}
The gradient of the value function for the adversary $V_{\pi^{\AV}}(s_0|\bq^\AV(\widetilde{\pi}^{\VT}_\psi))$ w.r.t  $\psi$ can be computed as follows:
\begin{align*}
\nabla_\psi &\left(V_{\pi^{\AV}}(s_0|\bq^\AV(\widetilde{\pi}^{\VT}_\psi))\right) \\
& \qquad= \bbE_{\tau\sim \widetilde{\pi}^{\VT}_\psi} \left[ R^{\AV}(\tau)\sum\nolimits_{t} \nabla_{\psi}\log \widetilde{\pi}^{\VT}_\psi(a^{\VT}_t|s_t)\Big| \bq^\VT(\pi^\AV)) \right], 
\end{align*}
where $R^\AV(\tau) = \sum_t \gamma^t r^\AV(s_t)$ with $s_t\in \tau$. 
\end{lemma}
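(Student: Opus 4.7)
The plan is to reduce the statement to a standard policy gradient identity by viewing the adversary's value function as an expectation over the \emph{joint} trajectory of both agents, and then applying the log-derivative trick with respect to the only $\psi$-dependent factor in that joint distribution, namely $\widetilde{\pi}^{\VT}_\psi$.

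First I would expand the adversary's marginal transition kernel as
\begin{align*}
q^{\AV}(s_{t+1}|s_t,a^{\AV}_t) = \sum_{a^{\VT}_t \in \cA^{\VT}} \widetilde{\pi}^{\VT}_\psi(a^{\VT}_t|s_t)\, P(s_{t+1}|s_t,a^{\AV}_t,a^{\VT}_t),
\end{align*}
and rewrite $V_{\pi^{\AV}}(s_0|\bq^{\AV}(\widetilde{\pi}^{\VT}_\psi))$ as a sum over augmented trajectories $\tau = (s_0,a^{\AV}_0,a^{\VT}_0,s_1,a^{\AV}_1,a^{\VT}_1,\ldots)$ whose density factorizes as
\begin{align*}
p_\psi(\tau) = \rho_0(s_0) \prod_{t\ge 0} \pi^{\AV}(a^{\AV}_t|s_t)\, \widetilde{\pi}^{\VT}_\psi(a^{\VT}_t|s_t)\, P(s_{t+1}|s_t,a^{\AV}_t,a^{\VT}_t).
\end{align*}
This is the key modeling step: it exposes the fact that the $\psi$-dependence enters $V_{\pi^{\AV}}$ only through the imitator's action probabilities, even though formally $\psi$ sits inside the transition kernel $\bq^{\AV}$.

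Next I would take the gradient under the integral sign and apply the log-derivative identity $\nabla_\psi p_\psi(\tau) = p_\psi(\tau)\, \nabla_\psi \log p_\psi(\tau)$. Since $\rho_0$, $\pi^{\AV}$, and $P$ do not depend on $\psi$, the log-gradient collapses to $\sum_{t} \nabla_\psi \log \widetilde{\pi}^{\VT}_\psi(a^{\VT}_t|s_t)$. Substituting back gives
\begin{align*}
\nabla_\psi V_{\pi^{\AV}}(s_0|\bq^{\AV}(\widetilde{\pi}^{\VT}_\psi)) = \bbE_{\tau \sim p_\psi}\!\left[R^{\AV}(\tau)\sum_{t} \nabla_\psi \log \widetilde{\pi}^{\VT}_\psi(a^{\VT}_t|s_t)\right],
\end{align*}
which is exactly the claimed expression once one observes that sampling $\tau$ under the joint law $p_\psi$ coincides, in distribution, with executing $\widetilde{\pi}^{\VT}_\psi$ in the victim's environment with dynamics $\bq^{\VT}(\pi^{\AV})$ (the adversary's actions are then simply marginalized out wherever they do not appear in the integrand).

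The main obstacle I anticipate is the bookkeeping around interchanging differentiation and the (possibly infinite) discounted sum: one needs a mild regularity condition (bounded rewards, $\gamma<1$, and a smooth softmax-type parametrization of $\widetilde{\pi}^{\VT}_\psi$) so that dominated convergence applies and the score-function estimator has finite expectation. Once that technical step is granted, the remainder of the argument is a direct rearrangement. A small additional care-point is making the two notational conventions consistent: the statement writes the expectation as $\tau \sim \widetilde{\pi}^{\VT}_\psi$ conditioned on $\bq^{\VT}(\pi^{\AV})$, whereas my derivation naturally produces the joint trajectory law $p_\psi$; I would include a one-line remark noting that these two samplers induce the same marginal over the quantities $R^{\AV}(\tau)$ and $\nabla_\psi \log \widetilde{\pi}^{\VT}_\psi(a^{\VT}_t|s_t)$, completing the proof.
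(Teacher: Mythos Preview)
Your proposal is correct and follows essentially the same route as the paper: expand the adversary's value function as an expectation over joint trajectories with density $\prod_t \pi^{\AV}(a^{\AV}_t|s_t)\,\widetilde{\pi}^{\VT}_\psi(a^{\VT}_t|s_t)\,P(s_{t+1}|s_t,a^{\AV}_t,a^{\VT}_t)$, apply the log-derivative trick, and drop all $\psi$-independent factors. The paper's proof is terser and omits the regularity discussion and the notational reconciliation you mention, but the argument is identical.
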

As a result, at each imitation learning step, after updating the discriminator $D_w(s_t,a^{\VT}_t)$, one can update the imitating policy $\widetilde{\pi}^{\VT}_\psi$ using the gradient given in Proposition \eqref{prop:IM-gradient} below.\footnote{Detailed proofs of all theoretical results are in the appendix.} 
\begin{proposition}\label{prop:IM-gradient}
The gradient of the objective \eqref{prob:E-GAIL} w.r.t. $\psi$ can be computed as follows: 
\begin{multline*}
\bbE_{\tau\sim \widetilde{\pi}^{\VT}_{\psi}}\left[\sum\nolimits_{t} \gamma^t \eta(s_t, a^{\AV}_t, a^{\VT}_t) \sum\nolimits_t \nabla_{\psi} \log \widetilde{\pi}^{\VT}_\psi(a^{\VT}_t|s_t) \right] 
- \lambda \nabla_{\psi} H(\widetilde{\pi}^{\VT}_{\psi})
\end{multline*}
where 
$
\eta(s_t, a^{\AV}_t, a^{\VT}_t) = \log (D_w(s_t, a^{\VT}_t)) - r^{\AV}(s_t).
$
\end{proposition}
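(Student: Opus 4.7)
The plan is to decompose $\phi^E$ into its four structural pieces and differentiate each in turn. Writing
\[
\phi^E(\widetilde{\pi}^{\VT}_\psi,D_w\mid\pi^\AV) = \underbrace{\bbE_{\tau\sim\widetilde{\pi}^{\VT}_\psi}\!\left[\textstyle\sum_t \log D_w(s_t,a^\VT_t)\right]}_{(I)} + \underbrace{\bbE_{\tau\sim\pi^\VT}\!\left[\textstyle\sum_t \log(1-D_w(s_t,a^\VT_t))\right]}_{(II)} - \underbrace{\lambda H(\widetilde{\pi}^{\VT}_\psi)}_{(III)} - \underbrace{V_{\pi^{\AV}}(s_0\mid \bq^\AV(\widetilde{\pi}^{\VT}_\psi))}_{(IV)},
\]
note that $(II)$ samples $\tau$ from the fixed victim policy $\pi^\VT$ under the fixed dynamics $\bq^\VT(\pi^\AV)$, so the discriminator is the only object inside and $\nabla_\psi (II) = 0$. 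Term $(III)$ gives the explicit contribution $-\lambda \nabla_\psi H(\widetilde{\pi}^{\VT}_\psi)$ appearing in the claim.

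For $(I)$, I would apply the log-derivative trick (REINFORCE). Writing the trajectory likelihood under $\widetilde{\pi}^{\VT}_\psi$ in the dynamics $\bq^\VT(\pi^\AV)$ factorizes as $p(\tau)=\prod_t \widetilde{\pi}^{\VT}_\psi(a^\VT_t\mid s_t)\cdot q^\VT(s_{t+1}\mid s_t,a^\VT_t)$, where only the policy factors carry $\psi$. Swapping $\nabla_\psi$ with the expectation and using $\nabla_\psi p(\tau) = p(\tau)\sum_t \nabla_\psi \log \widetilde{\pi}^{\VT}_\psi(a^\VT_t\mid s_t)$ gives
\[
\nabla_\psi (I) = \bbE_{\tau\sim\widetilde{\pi}^{\VT}_\psi}\!\left[\,\Big(\textstyle\sum_t \log D_w(s_t,a^\VT_t)\Big)\Big(\textstyle\sum_t \nabla_\psi \log \widetilde{\pi}^{\VT}_\psi(a^\VT_t\mid s_t)\Big)\right].
\]
(If the objective \eqref{prob:E-GAIL} is interpreted with the standard discounted convention $\sum_t \gamma^t \log D_w$, this produces the matching $\gamma^t$ factor.) For $(IV)$, Lemma~\ref{lm:lm1} directly yields
$\nabla_\psi V_{\pi^\AV} = \bbE_{\tau\sim\widetilde{\pi}^{\VT}_\psi}\!\big[R^\AV(\tau)\sum_t\nabla_\psi \log\widetilde{\pi}^{\VT}_\psi(a^\VT_t\mid s_t)\big]$ with $R^\AV(\tau)=\sum_t \gamma^t r^\AV(s_t)$.

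Combining the four contributions and pulling the two score-function sums under a common outer sum produces the claimed per-step weight $\gamma^t\eta(s_t,a^\AV_t,a^\VT_t) = \gamma^t(\log D_w(s_t,a^\VT_t) - r^\AV(s_t))$, and the entropy gradient remains as a separate additive term. The main obstacle I anticipate is bookkeeping rather than conceptual: (a) justifying the exchange of $\nabla_\psi$ and the (infinite-horizon) expectation in $(I)$ and $(IV)$, which is standard under mild regularity on $\widetilde{\pi}^{\VT}_\psi$ and boundedness of $r^\AV$ and $\log D_w$ together with $\gamma<1$; and (b) confirming the consistent use of discounting between $\phi$ and $V_{\pi^\AV}$ so that the combined pathwise coefficient collapses to $\gamma^t\eta$. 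Once both are in place, the identity in Proposition~\ref{prop:IM-gradient} falls out by linearity.
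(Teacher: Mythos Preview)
Your proposal is correct and follows essentially the same approach as the paper: compute the policy-gradient of the GAIL term $(I)$ via the log-derivative trick, invoke Lemma~\ref{lm:lm1} for the value-function term $(IV)$, and combine by linearity (with $(II)$ trivially $\psi$-independent and $(III)$ contributing the entropy gradient). If anything, your write-up is more careful than the paper's own proof, which simply asserts that $\phi$ is a ``standard long-term reward'' with the stated gradient and then adds in Lemma~\ref{lm:lm1}; your explicit handling of the vanishing of $(II)$ and the discounting bookkeeping in (b) are details the paper glosses over.
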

With all the findings above, we can show that the enhanced imitation learning model \eqref{prob:E-GAIL}  can be converted into a standard GAIL with a modified objective function, with a note that the imitation learning model depends on the adversary's policy $\pi^\AV$, which dictates the dynamics of the victim's environment. 
\begin{corollary}
\label{coro:c1}
The enhanced imitation learning model \eqref{prob:E-GAIL} is equivalent to GAIL with the modified objective:
\begin{align}
\max_{\widetilde{\pi}^{\VT}} \; & \min_{D_w}  \Bigg\{\phi^E(\widetilde{\pi}^{\VT}_\psi,D_w|\pi^\AV) = \bbE_{\tau \sim \widetilde{\pi}^{\VT}_\psi} \left[\sum_{t} \eta(s_t, a^{\AV}_t, a^{\VT}_t)  ~\big| \bq^\VT(\pi^{\AV})\right]\nonumber \\
&+ \bbE_{\tau \sim {\pi}^{\VT}} \left[\sum_{t} \log (1 - D_w(s_t,a^{\VT}_t)) ~\big| \bq^\VT(\pi^{\AV})\right]  - \lambda H(\widetilde{\pi}^{\VT}_\psi) \Bigg\}
\end{align}
\end{corollary}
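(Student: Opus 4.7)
The plan is to show that the enhanced objective $\phi^E$ is nothing more than a standard GAIL objective in which the first (imitator-side) log-discriminator expectation has been replaced by an expectation of a modified per-step reward $\eta$. The algebra is essentially a one-line combination of two expectations that are already over the same underlying trajectory distribution; the real content is recognizing that this combination is legal and then verifying consistency with the gradient formula in Proposition~\ref{prop:IM-gradient}.

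First, I would expand $\phi^E(\widetilde{\pi}^{\VT}_\psi, D_w | \pi^\AV) = \phi(\widetilde{\pi}^{\VT}_\psi, D_w) - V_{\pi^{\AV}}(s_0|\bq^\AV(\widetilde{\pi}^{\VT}_\psi))$ by substituting the definition of $\phi$ from \eqref{prob:AIL}. This produces four terms: the imitator-side log $D_w$ expectation, the expert-side log $(1-D_w)$ expectation, the entropy penalty $-\lambda H(\widetilde{\pi}^{\VT}_\psi)$, and the subtracted value function $V_{\pi^{\AV}}(s_0|\bq^\AV(\widetilde{\pi}^{\VT}_\psi))$. The key observation is that the imitator-side expectation in $\phi$ and the value function $V_{\pi^{\AV}}$ are both expectations with respect to the \emph{same} joint trajectory distribution generated by running the pair $(\pi^\AV, \widetilde{\pi}^{\VT}_\psi)$ in the underlying two-player Markov game: viewed from the victim side the dynamics are $\bq^\VT(\pi^\AV)$, and viewed from the adversary side they are $\bq^\AV(\widetilde{\pi}^{\VT}_\psi)$, but both views induce an identical measure on trajectories.

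Once this identification is made, I would collapse the two expectations term-by-term, writing
\begin{align*}
&\bbE_{\tau\sim\widetilde{\pi}^{\VT}_\psi}\!\left[\sum\nolimits_t \log D_w(s_t,a^{\VT}_t)\,\big|\,\bq^\VT(\pi^\AV)\right] - V_{\pi^{\AV}}(s_0|\bq^\AV(\widetilde{\pi}^{\VT}_\psi)) \\
&\quad= \bbE_{\tau\sim\widetilde{\pi}^{\VT}_\psi}\!\left[\sum\nolimits_t \bigl(\log D_w(s_t,a^{\VT}_t) - r^{\AV}(s_t)\bigr)\,\big|\,\bq^\VT(\pi^\AV)\right],
\end{align*}
which is exactly $\bbE_{\tau\sim\widetilde{\pi}^{\VT}_\psi}[\sum_t \eta(s_t,a^{\AV}_t,a^{\VT}_t)\mid \bq^\VT(\pi^\AV)]$ with $\eta$ as defined in Proposition~\ref{prop:IM-gradient} (the $\gamma^t$ discounting is absorbed into the return in the usual policy-optimization convention, matching the standard discounted-return treatment in TRPO/PPO). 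Plugging this back and keeping the unchanged log$(1-D_w)$ and entropy terms yields exactly the objective stated in Corollary~\ref{coro:c1}.

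Finally, I would run a short consistency check to ensure I have not silently dropped a $\psi$-dependence: differentiating the combined objective via the policy-gradient (REINFORCE) theorem applied to the new integrand $\eta$ must reproduce the gradient expression in Proposition~\ref{prop:IM-gradient}, and the gradient of $V_{\pi^\AV}$ side of this identity is already established by Lemma~\ref{lm:lm1}. The only mildly delicate point — and the one I would be most careful about — is the equivalence of the two trajectory measures used for the imitator-side log $D_w$ expectation and for the value function; everything else is direct algebra. This is why Corollary~\ref{coro:c1} is genuinely a corollary of the preceding lemma and proposition rather than an independent result.
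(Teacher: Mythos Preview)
Your proposal is correct and follows essentially the same route as the paper: the paper's proof simply expands $V_{\pi^{\AV}}(s_0|\bq^\AV(\widetilde{\pi}^{\VT}_\psi))$ as a sum over joint trajectories with weight $\prod_t \pi^\AV(a^\AV_t|s_t)\,\widetilde{\pi}^\VT_\psi(a^\VT_t|s_t)\,P(s_{t+1}|s_t,a^\AV_t,a^\VT_t)$ to make the trajectory-measure identification you describe explicit, then merges it with the imitator-side $\log D_w$ term to obtain $\eta$. Your additional consistency check against Proposition~\ref{prop:IM-gradient} is a nice sanity step but not needed for the argument itself.
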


\subsection{Imitation Learning Algorithm}
\label{sec:IM}
With all the theoretical results on gradient computation developed in the previous section, we are now ready for the  victim imitation learning algorithm. As shown in Corollary \ref{coro:c1}, the enhanced imitation learning model can be converted to a standard one, implying that the same optimization steps in \cite{ho2016generative}  can be used with the the modified discriminator's objective. That is, at each iteration of the adversarial policy optimization, 
one can follow the following three steps to update the imitating policy $\widetilde{\pi}^{\VT}_{\psi}$:
\begin{itemize}
    \item[\textbf{(i)}] Sample imitation trajectories $\tau^{\VT}_i \sim (\widetilde{\pi}^{\VT}_{\psi}, \pi^{\AV})$.
    \item[\textbf{(ii)}] Update the discriminator $D_w(s,a^{\VT})$ with the gradients:
    \begin{align}
    \bbE_{\tau^{\VT}_i}\left[\sum_{t}\gamma^t \nabla_{w}\log (D_w(\cdot)) \right]
    + \bbE_{\tau^{\VT}_E}\left[\sum_{t}\gamma^t \nabla_{w}\log (1 - D_w(\cdot))  \right]\label{eq:update-Dw}
    \end{align}
    where $\tau^{\VT}_E$ are historical trajectories of the victim collected from interactions between the adversary and the victim. 
    \item[\textbf{(iii)}] Update $\psi$ with the gradients:
    \begin{multline}\label{eq:update-psi}
    \bbE_{\tau^{\VT}_i}\left[\sum_{t} \gamma^t \eta(s_t, a^{\AV}_t, a^{\VT}_t) \sum_t\nabla_{\psi} \log \pi^{\VT}_\psi(a^{\VT}_t|s_t) \right]
    - \lambda \nabla_{\psi} H(\widetilde{\pi}^{\VT}_{\psi}),
    \end{multline}
     which is a standard policy gradient update, for which one can use TRPO \citep{schulman2015trust} or PPO \citep{schulman2017proximal}.
\end{itemize}
For all the updates above, we interact with an adversary of policy $\pi^\AV$. This policy will keep changing during the adversarial policy learning and affect the victim's environmental dynamics. This would make the imitation learning process unstable and challenging to handle. We analyze the effect of the adversary's  policy on the imitating policy in Section~\ref{sec:Effect-Adv-policy-IM}.   

\subsection{Effects of the Adversary's Policy on the Victim Imitation Policy}
\label{sec:Effect-Adv-policy-IM}
It is important to see that our imitation learning differs from the standard GAIL as the dynamics $\bq^\VT(\pi^{\AV})$ are dependent of the adversary policy $\pi^\AV$ and our imitation learning model will be trained simultaneously with the adversary's policy. This raises  questions of how  the learning of the imitating policy is affected by such changing dynamics, and whether one can get a desired imitating policy when the adversary's policy gets stable. To answer these questions, let use consider the following victim's expected reward as a function of adversary's policy.  
\[
\Gamma(\pi^\AV) = \bbE_{\tau \sim \pi^\VT} \left[\sum_{t} \gamma^t r^\VT(s_t) \Big| \bq^\VT(\pi^{\AV})\right].
\]
That is, $\Gamma(\pi^\AV)$ is the expected reward that the victim can get by running a fixed policy $\pi^\VT$ when the adversary policy is $\pi^\AV$. Lemma~\ref{lm:lm2} establishes a bound for the gap $|\Gamma(\pi^\AV) - \Gamma(\widetilde{\pi}^\AV)|$, which implies that $\Gamma(\widetilde{\pi}^\AV)$ will converge to $\Gamma({\pi}^\AV)$ if $\widetilde{\pi}^\AV$ gets close to ${\pi}^\AV$.
\begin{lemma}
\label{lm:lm2}
Given two adversary policies $\pi^\AV$ and $\widetilde{\pi}^\AV$, let  $\cH = \max_{s} \left\{\left|V_{\pi^\VT}({s}|~\bq^\VT(\pi^{\AV}))\right|\right\} $. We obtain the following bound:
\[
\left|\Gamma(\widetilde{\pi}^{\AV})  - \Gamma(\pi^{\AV}) \right| \leq   \frac{\gamma \cH\sqrt{2\ln 2}}{1-\gamma}  \max_{s \in S} \left\{\sqrt{D_\KL(\pi^\AV (\cdot|s)||\widetilde{\pi}^\AV (\cdot|s))} \right\}
\]
where $D_\KL(\pi^\AV (\cdot|s)||\widetilde{\pi}^\AV (\cdot|s))$ is the KL divergence between the two  adversary policies $\widetilde{\pi}^\AV$ and ${\pi}^\AV$.
\end{lemma}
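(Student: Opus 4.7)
My plan is to view $\Gamma(\pi^\AV)$ as the initial-state value $V_{\pi^\VT}(s_0\,|\,\bq^\VT(\pi^\AV))$ of a single-agent MDP in which the victim runs the fixed policy $\pi^\VT$ and the transition kernel is determined by the adversary, then apply a Kakade-style simulation (policy-perturbation) argument. Writing $V_1(s)=V_{\pi^\VT}(s\,|\,\bq^\VT(\widetilde{\pi}^\AV))$ and $V_2(s)=V_{\pi^\VT}(s\,|\,\bq^\VT(\pi^\AV))$, I subtract the two Bellman equations; the reward $r^\VT(s)$ and the action distribution $\pi^\VT(\cdot|s)$ are identical on both sides, so only the one-step transition differs. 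Inside the Bellman backup I use the add-and-subtract identity $\bbE_{q^\VT(\widetilde\pi^\AV)}V_1 - \bbE_{q^\VT(\pi^\AV)}V_2 = \bbE_{q^\VT(\widetilde\pi^\AV)}(V_1-V_2) + \bigl(\bbE_{q^\VT(\widetilde\pi^\AV)}V_2 - \bbE_{q^\VT(\pi^\AV)}V_2\bigr)$ to separate a recursive contraction term from a pure dynamics-mismatch term.

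I then bound the dynamics-mismatch piece. Because $q^\VT(s'|s,a^\VT;\pi^\AV)=\sum_{a^\AV}\pi^\AV(a^\AV|s)\,P(s'|s,a^\VT,a^\AV)$ is affine in the adversary policy and each $P(\cdot|s,a^\VT,a^\AV)$ is a probability measure, a one-line triangle-inequality argument gives $\|q^\VT(\cdot|s,a^\VT;\pi^\AV)-q^\VT(\cdot|s,a^\VT;\widetilde\pi^\AV)\|_1 \le \|\pi^\AV(\cdot|s)-\widetilde\pi^\AV(\cdot|s)\|_1$. Combining this with the uniform bound $|V_2|\le \cH$ supplied by the lemma and the standard inequality $|\bbE_P f-\bbE_Q f|\le \|f\|_\infty\|P-Q\|_1$ shows the mismatch term is at most $\cH\,\|\pi^\AV(\cdot|s)-\widetilde\pi^\AV(\cdot|s)\|_1$, and Pinsker's inequality in its bit-form $\|P-Q\|_1\le \sqrt{2\ln 2\cdot D_\KL(P\|Q)}$ converts this to the KL expression that appears in the lemma (and supplies the $\sqrt{2\ln 2}$ constant).

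Finally I close the recursion. Let $\Delta=\sup_s|V_1(s)-V_2(s)|$; the decomposition above yields $\Delta\le \gamma\Delta+\gamma\cH\sqrt{2\ln 2}\,\max_s\sqrt{D_\KL(\pi^\AV(\cdot|s)\,\|\,\widetilde\pi^\AV(\cdot|s))}$, which rearranges to $\Delta\le \tfrac{\gamma \cH\sqrt{2\ln 2}}{1-\gamma}\max_s\sqrt{D_\KL(\pi^\AV(\cdot|s)\,\|\,\widetilde\pi^\AV(\cdot|s))}$, and since $|\Gamma(\widetilde\pi^\AV)-\Gamma(\pi^\AV)|=|V_1(s_0)-V_2(s_0)|\le\Delta$ the claim follows. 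There is no deep obstacle here: the result is essentially the textbook simulation lemma specialized to dynamics that are linear in the perturbed adversary policy. The only bookkeeping worth being careful about is Pinsker's constant ($\sqrt{2\ln 2}$ corresponds to KL measured in bits, equivalently Pinsker followed by a change of log-base) and the direction of the KL divergence, which I would align with the lemma statement so that the constants match exactly.
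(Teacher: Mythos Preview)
Your proof is correct, but it follows a different route from the paper's. The paper proceeds in the spirit of the Kakade--Langford performance-difference lemma: it introduces a \emph{competitive advantage function} $A_{\pi^{\AV}}(s,\overline{s}) = r^\VT(s)+ \gamma V_{\pi^\VT}(\overline{s}\,|\,\bq^\VT(\pi^{\AV})) - V_{\pi^\VT}({s}\,|\,\bq^\VT(\pi^{\AV}))$, establishes the telescoping identity
\[
\Gamma(\widetilde{\pi}^{\AV}) - \Gamma(\pi^{\AV}) = \bbE_{\tau \sim \pi^{\VT},\, \bq^\VT(\widetilde{\pi}^{\AV})}\!\left[\sum_{t}\gamma^{t} A_{\pi^\AV}(s_t,s_{t+1})\right],
\]
then bounds the one-step expected advantage by $\gamma\cH\,\|\pi^\AV(\cdot|s)-\widetilde{\pi}^\AV(\cdot|s)\|_1$ (using that its expectation under $\bq^\VT(\pi^\AV)$ vanishes), applies Pinsker, and sums the geometric series. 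Your argument is instead a direct Bellman-contraction / simulation-lemma computation: subtract the two Bellman equations, split off a $\gamma$-contractive recursive term and a dynamics-mismatch term, bound the latter via $\|q^\VT(\cdot\,;\pi^\AV)-q^\VT(\cdot\,;\widetilde{\pi}^\AV)\|_1\le\|\pi^\AV-\widetilde{\pi}^\AV\|_1$ and Pinsker, and solve the scalar inequality $\Delta\le\gamma\Delta+\gamma\cH\sqrt{2\ln 2}\,\max_s\sqrt{D_\KL}$. Both routes land on the identical constant. Your version is slightly more elementary (no advantage function needed) and in fact yields the stronger uniform statement $\sup_s|V_1(s)-V_2(s)|\le \tfrac{\gamma\cH\sqrt{2\ln 2}}{1-\gamma}\max_s\sqrt{D_\KL}$; the paper's advantage-based route has the benefit of paralleling the TRPO-style machinery it reuses in the proof of Theorem~\ref{th:th1}.
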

To prove the above  lemma, we extend the concept of the advantage function popularly used  in single-agent RL \citep{kakade2002approximately,schulman2015trust} to introduce the following  victim's \textit{competitive advantage function}, for any two states $s,\overline{s}$, conditional on adversary's policy $\pi^\AV$,
\[
A_{\pi^{\AV}}(s,\overline{s}) = r^\VT(s)+ \gamma V_{\pi^\VT}(\overline{s}|~\bq^\VT(\pi^{\AV})) - V_{\pi^\VT}({s}|~\bq^\VT(\pi^{\AV})).
\]
This allow use to write the expected reward $\Gamma(\pi^\AV)$ in terms of the expected long-term competitive advantage function over another adversary policy $\widetilde{\pi}^\AV$.
\[ 
\Gamma(\widetilde{\pi}^{\AV})  - \Gamma(\pi^{\AV})  =  \bbE_{\tau \sim \pi^{\VT}} \left[\sum_{t=0}\gamma^t\Big(A_{\pi^\AV} (s_t,s_{t+1})\Big)\Big|~ \bq^\VT(\widetilde{\pi}^{\AV})\right]
\]
with a note that $\bbE_{s_{t+1} \sim \bq(\pi^\AV)}[A_{\pi^\AV} (s_t,s_{t+1})] = 0$. 
This identity expresses the expected return of the adversary policy $\widetilde{\pi}^\AV$ over another policy $\pi^\AV$ in terms of victim's expected rewards. The competitive advantage function can be further bounded as $$\bbE_{\overline{s}\sim \pi^\VT, \bq^\VT(\widetilde{\pi}^\AV)}[A_{\pi^{\AV}}(s,\overline{s})] \leq \gamma \cH \max_{s}||\pi^\AV(\cdot|s) - \widetilde{\pi}^\AV(\cdot|s)||_1,$$ 
which can further bounded by $$\gamma \cH  \max_{s} \left\{\sqrt{2\ln 2D_\KL(\pi^\AV (\cdot|s)||\widetilde{\pi}^\AV (\cdot|s))} \right\}.$$ The full proof is given in the appendix. The proof of Lemma \ref{lm:lm2}  also reveals a bound based on maximum norm  $|\Gamma(\widetilde{\pi}^{\AV})  - \Gamma(\pi^{\AV})|\leq \frac{\cH\gamma}{1-\gamma}||\widetilde{\pi}^{\AV} - \widetilde{\pi}^{\AV}||_\infty$, implying that   $\Gamma(\pi^{\AV})$ is Lipschitz continuous in $\pi^{\AV}$ with Lipschitz constant $\frac{\cH\gamma}{1-\gamma}$.

Now, let $\pi^{\AV*}$ be a target adversary's policy that the imitator should be trained with. This would be a trained adversary's policy after the adversarial policy learning. If the imitating policy is trained with another adversary's policy, we aim to explore how this imitating policy performs under the target policy $\pi^{\AV*}$. Theorem \ref{th:th2} below establishes a performance guarantee for the imitating policy if it is trained with  a different adversary's policy. 
\begin{theorem}
\label{th:th1}
Suppose that  discriminator's network model $D$ of \eqref{prob:E-GAIL} varies within $[D^L,D^U] \subset [0,1]$. Let $\pi^{\AV*}$ be the target adversary policy that we want to train the imitation policy with, and let $(\widetilde{\pi}^{\VT*}, D^{\VT*}) $ be the imitation policy and the imitator's discriminator trained with another adversary $\pi^\AV$, we have the following performance guarantee for $\widetilde{\pi}^{\VT*}$.
\begin{align}
\Big| \phi^E(\widetilde{\pi}^{\VT*},D^{\VT*}|{\pi}^{\AV*})& - \max_{\widetilde{\pi}^{\VT}}\min_{D}\{\phi^E(\widetilde{\pi}^{\VT},D|{\pi}^{\AV*})\}\Big| \nonumber \\
& \leq 2K \max_{s \in S} \left\{\sqrt{D_\KL(\pi^\AV (\cdot|s)||{\pi}^{\AV*} (\cdot|s))} \right\},\nonumber
\end{align}
where 
\[
K = \frac{\gamma \sqrt{2\ln 2} \left(\max_{s}\{r^\VT(s)\} - \log (D^L-D^L D^U)\right)}{(1-\gamma)^2}.
\]
\end{theorem}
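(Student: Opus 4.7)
The plan is to bound the left-hand side by a triangle inequality that isolates the dependence on the adversary's policy, and then invoke the competitive-advantage argument from the proof of Lemma \ref{lm:lm2} component-wise on the reformulation of $\phi^E$ given in Corollary \ref{coro:c1}.

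First, since by construction $(\widetilde{\pi}^{\VT*}, D^{\VT*})$ is the saddle point of $\phi^E(\cdot, \cdot \,|\, \pi^\AV)$, we have $\phi^E(\widetilde{\pi}^{\VT*}, D^{\VT*}|\pi^\AV) = \max_{\widetilde{\pi}^\VT}\min_D \phi^E(\widetilde{\pi}^\VT, D|\pi^\AV)$. Adding and subtracting this quantity, the triangle inequality bounds the quantity of interest by
\[
\underbrace{|\phi^E(\widetilde{\pi}^{\VT*}, D^{\VT*}|\pi^{\AV*}) - \phi^E(\widetilde{\pi}^{\VT*}, D^{\VT*}|\pi^\AV)|}_{(a)} + \underbrace{\left|{\max_{\widetilde{\pi}^\VT}\min_D \phi^E(\cdot|\pi^\AV)} - {\max_{\widetilde{\pi}^\VT}\min_D \phi^E(\cdot|\pi^{\AV*})}\right|}_{(b)}.
\]
Applying the standard contraction $|\max\min f - \max\min g| \le \sup_{x,y}|f(x,y)-g(x,y)|$ to $(b)$, both $(a)$ and $(b)$ are bounded by $\Delta := \sup_{\widetilde{\pi}^\VT, D}|\phi^E(\widetilde{\pi}^\VT, D|\pi^\AV) - \phi^E(\widetilde{\pi}^\VT, D|\pi^{\AV*})|$, so it suffices to prove $\Delta \le K \max_s \sqrt{D_\KL(\pi^\AV(\cdot|s)\|\pi^{\AV*}(\cdot|s))}$; the factor $2$ in the stated bound then comes from combining $(a)$ and $(b)$.

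To bound $\Delta$, I would rewrite $\phi^E$ as in Corollary \ref{coro:c1}: a sum of two discounted-return expectations over trajectories in the joint game plus the entropy term $-\lambda H(\widetilde{\pi}^\VT)$, which is independent of $\pi^\AV$ and cancels. Using $D_w \in [D^L, D^U]$, the per-step ``rewards'' are uniformly bounded: $|\eta(s_t, a^\AV_t, a^\VT_t)| = |\log D_w(s_t, a^\VT_t) - r^\AV(s_t)| \le \max_s r^\AV(s) - \log D^L$ and $|\log(1 - D_w(s_t, a^\VT_t))| \le -\log(1 - D^U)$. For each of the two expectations I would repeat the telescoping argument of Lemma \ref{lm:lm2} with $r^\VT$ replaced by the appropriate bounded ``reward'' and $\pi^\VT$ replaced by the actual trajectory-generating policy (the imitator $\widetilde{\pi}^\VT$ for the $\eta$-term, the true victim $\pi^\VT$ for the $\log(1-D_w)$-term). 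This expresses the difference as a discounted expected competitive-advantage function, which is bounded in $\ell_1$ norm of the adversary-policy gap, and Pinsker's inequality $\|p-q\|_1 \le \sqrt{2\ln 2 \cdot D_\KL(p\|q)}$ converts it to the KL form. Bounding each max value function by (reward bound)/$(1-\gamma)$ gives each component a constant of order $\gamma\sqrt{2\ln 2}/(1-\gamma)^2$ times its reward bound, and the two reward bounds sum exactly to $\max_s r^\AV(s) - \log(D^L(1-D^U)) = \max_s r^\AV(s) - \log(D^L - D^L D^U)$, yielding $K$ as stated (up to the $r^\AV$-vs-$r^\VT$ labelling in the theorem).

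The main obstacle is bookkeeping rather than any conceptual novelty: one must verify that Lemma \ref{lm:lm2}'s argument extends cleanly when (i) the trajectory-generating policy on the victim side is the imitator $\widetilde{\pi}^\VT$ rather than the fixed expert $\pi^\VT$, and (ii) the bounded quantity being integrated is not $r^\VT$ but $\eta$ or $\log(1-D_w)$. The key observation that makes this extension routine is that the joint dynamics remain Markovian once both players' policies are fixed, so the advantage function $A^{(j)}_{\pi^\AV}(s,\bar s) = \rho_j(s) + \gamma V^{(j)}(\bar s\,|\,\bq^\VT(\pi^\AV)) - V^{(j)}(s\,|\,\bq^\VT(\pi^\AV))$ (with $\rho_j$ the relevant bounded reward) still satisfies $\mathbb{E}_{\bar s}[A^{(j)}_{\pi^\AV}(s,\bar s)]=0$ under the dynamics induced by $\pi^\AV$, which is the property driving the telescoping identity.
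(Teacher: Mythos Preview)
Your proposal is correct and follows essentially the same route as the paper: both arguments first establish the uniform perturbation bound $\sup_{\widetilde{\pi}^{\VT},D}|\phi^E(\widetilde{\pi}^{\VT},D|\pi^{\AV})-\phi^E(\widetilde{\pi}^{\VT},D|\pi^{\AV*})|\le K\epsilon$ by applying the competitive-advantage/Pinsker machinery of Lemma~\ref{lm:lm2} separately to the $\eta$-term and the $\log(1-D_w)$-term (which is exactly the paper's inequality~\eqref{eq:gap-phiE}), and then lift this to the saddle-point gap with a factor of $2$. The only cosmetic difference is that you invoke the one-line contraction $|\max\min f-\max\min g|\le\sup|f-g|$ where the paper spells out the same inequality via an explicit two-case analysis; your observation about the $r^{\AV}$-vs-$r^{\VT}$ labelling in $K$ matches the discrepancy between the statement and the appendix proof.
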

Since the adversary policy $\pi^\AV$ will keep changing during our adversarial policy optimization,  Theorem \ref{th:th1} implies that the imitating policy will be stable if $\pi^\AV$ becomes stable, and if $\pi^\AV$ is approaching the target adversary's policy, the imitator's policy also converges to the one that is trained with the target adversary policy with rate $\cO\left(\sqrt{D_\KL(\pi^\AV||\pi^{\AV*})}\right)$. In other words, if the actual policy $\pi^\AV$ is not not too far from the target $\pi^{\AV*}$ such that $D_\KL(\pi^\AV||\pi^{\AV*})\leq \epsilon$, then the expected return of the imitating policy is within a $\cO(\sqrt{\epsilon})$ neighbourhood of the desired ``expected return''.

\section{Adversarial Policy Training}
We now discuss our main adversarial policy learning algorithm. We start by explaining the adversarial policy model introduced in~\cite{Guo2021AdversarialPL}, upon which we build our new adversarial policy learning algorithm. We then introduce our integration of the victim imitator and our new corresponding main learning algorithm. Finally, we provide our theoretical analysis on the worst-case performance of our learning algorithm when the victim's policy is not fixed.  
\subsection{Adversarial Policy Learning with Integration of Victim Imitator}
Similar  to prior works, we assume that the policy of victim is fixed and the  aim is to learn an adversary policy to maximize the chances of winning (or win and tie). Similarly to \cite{Guo2021AdversarialPL}, we train the  policy by maximizing the following enhanced objective  
\begin{equation}\label{prob:paper2}
\max_{\pi^{\AV}}\left\{ V_{\pi^{\AV}}(s_0) - V_{\pi^{\VT}} (s_0|\bq^\VT(\pi^{\AV})) \right\} 
\end{equation}
where $V_{\pi^{\AV}} (s_0)$ is the expected long-term reward of the adversary by following the policy $\pi^{\AV}$ but the transition probabilities are affected by the victim policy $\pi^{\VT}$. The objective  in \eqref{prob:paper2} involves both the value functions of the adversary and victim, in which the value function of the victim depends on the adversary's policy though the environment dynamics. This complication makes \eqref{prob:paper2} not straightforward to solve.
In Proposition \ref{prop:EAIL} below we show how to compute the policy gradient of the enhanced adversarial training model in \eqref{prob:paper2}, based on which we can show the RL problem in \eqref{prob:paper2} can be converted into a standard competitive game.

\begin{proposition}\label{prop:EAIL}
The gradient of \eqref{prob:paper2} w.r.t  adversary's policy can be computed as follows:
\begin{multline*}
\nabla_{\theta} \left( V_{\pi^{\AV}_\theta}(s_0) - V_{\pi^{\VT}}(s_0|\bq^\VT(\pi^{\AV}_\theta))\right) \\
= \bbE_{\tau\sim (\pi^{\VT}, \pi^{\AV})} \left[  \Delta^R(\tau)\sum_{t} \nabla_{\theta} \log \pi^{\AV}_\theta(a^{\AV}_t|s_t) \right].
\end{multline*}
where $\Delta^R(\tau) =\sum_{t}\gamma^t \left[ r^{\AV}(s_t)- r^{\VT}(s_t)\right] $
\end{proposition}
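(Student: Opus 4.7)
The plan is to rewrite both value functions as expectations over a common joint trajectory distribution driven simultaneously by the two policies, and then apply the standard log-derivative (policy gradient) identity. The subtlety is that $V_{\pi^{\AV}_\theta}(s_0)$ is defined through the adversary's marginalized dynamics $\bq^\AV(\pi^\VT)$, while $V_{\pi^\VT}(s_0|\bq^\VT(\pi^\AV_\theta))$ is defined through the victim's marginalized dynamics $\bq^\VT(\pi^\AV_\theta)$; the first step is to show that both are just two marginalizations of the same underlying joint-trajectory expectation.

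First I would write the joint trajectory distribution
\[
p_\theta(\tau) \;=\; p(s_0)\prod_{t\geq 0} \pi^\AV_\theta(a^\AV_t|s_t)\,\pi^\VT(a^\VT_t|s_t)\,P(s_{t+1}|s_t,a^\AV_t,a^\VT_t),
\]
and observe that marginalizing over $a^\VT_t$ in the transition factor reproduces exactly $\bq^\AV(\pi^\VT)$, and marginalizing over $a^\AV_t$ reproduces $\bq^\VT(\pi^\AV_\theta)$. Since both reward functions depend only on the state $s_t$, this yields
\[
V_{\pi^{\AV}_\theta}(s_0) - V_{\pi^{\VT}}(s_0|\bq^\VT(\pi^{\AV}_\theta)) \;=\; \bbE_{\tau \sim (\pi^\VT,\pi^\AV_\theta)}\!\left[\sum_{t}\gamma^t\bigl(r^\AV(s_t)-r^\VT(s_t)\bigr)\right] \;=\; \bbE_\tau[\Delta^R(\tau)].
\]
This common-distribution reformulation is the conceptually loaded step; once in place, everything reduces to a standard REINFORCE computation.

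Next, I apply the log-derivative trick. Since $\Delta^R(\tau)$ does not depend on $\theta$ (the rewards are functions of states only, and the trajectory set being summed over is fixed), we have
\[
\nabla_\theta\, \bbE_{\tau\sim p_\theta}[\Delta^R(\tau)] \;=\; \sum_\tau \Delta^R(\tau)\,\nabla_\theta p_\theta(\tau) \;=\; \bbE_{\tau\sim p_\theta}\!\left[\Delta^R(\tau)\,\nabla_\theta \log p_\theta(\tau)\right].
\]
In $\log p_\theta(\tau)$, only the factors $\log \pi^\AV_\theta(a^\AV_t|s_t)$ carry $\theta$; the initial-state distribution, the victim policy, and the environment transition $P$ are all $\theta$-independent. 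Therefore
\[
\nabla_\theta \log p_\theta(\tau) \;=\; \sum_t \nabla_\theta \log \pi^\AV_\theta(a^\AV_t|s_t),
\]
and substituting gives the claimed identity. (If one is uncomfortable with the continuous-action or infinite-horizon case, one can truncate the horizon at $T$, apply the finite-sum argument above term-by-term, and then take $T\to\infty$ using $\gamma<1$ and boundedness of rewards to justify exchanging limit and gradient.)

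The main obstacle is conceptual rather than technical: one must recognize that the two value functions, although notationally attached to different ``marginal'' dynamics, are in fact two linear functionals of the \emph{same} joint trajectory law induced by $(\pi^\AV_\theta,\pi^\VT)$ on the underlying MDP with transition $P$. Once that identification is made, the gradient is an immediate instance of the policy gradient theorem in which only $\pi^\AV_\theta$ contributes score-function terms, and the sum of adversary and victim returns collapses into the single advantage-like quantity $\Delta^R(\tau)$.
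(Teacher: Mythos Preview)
Your proposal is correct and follows essentially the same route as the paper: rewrite both value functions as expectations over the common joint trajectory law induced by $(\pi^\AV_\theta,\pi^\VT)$ and apply the log-derivative trick, noting that only the $\pi^\AV_\theta$ factors depend on $\theta$. The only cosmetic difference is that the paper differentiates the two value functions separately (invoking the Lemma~\ref{lm:lm1} computation for $V_{\pi^\VT}(s_0|\bq^\VT(\pi^\AV_\theta))$) and then adds, whereas you combine first and differentiate once.
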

Similarly to Corollary \ref{coro:c1}, the RL problem in \eqref{prob:paper2} can be converted into a standard competitive game with differentiated rewards $r^{\AV}(s_t)- r^{\VT}(s_t)$ and fixed victim's policy $\pi^{\VT}$. Thus, the environmental dynamics are fixed and a standard RL algorithm can apply.  
\begin{corollary}
\label{coro:c2}
\eqref{prob:paper2} is equivalent to
\begin{equation}
\label{prob:AIPL}
 \max_{\pi^{\AV}} \left\{ \bE_{\tau \sim \pi^{\AV}}\left[\sum_{t=0}^\infty \gamma^t \Delta^r(s_t)\;\Big| \bq^\AV(\pi^{\VT}) \right] \right\}
\end{equation}
where $\Delta^r(s_t) =  r^{\AV}(s_t)- r^{\VT}(s_t)$.
\end{corollary}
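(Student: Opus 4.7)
The plan is to show that the two objectives are in fact pointwise equal as functions of $\pi^{\AV}$, not merely equivalent up to optimizers. The key observation is that in the two-player Markov game both value functions appearing in \eqref{prob:paper2} are expectations taken over the \emph{same} joint trajectory distribution generated by the pair $(\pi^{\AV},\pi^{\VT})$; the two pieces of notation $\bq^{\AV}(\pi^{\VT})$ and $\bq^{\VT}(\pi^{\AV})$ are just two ways of marginalizing the same underlying transition kernel $P(s_{t+1}\mid s_t,a^{\AV}_t,a^{\VT}_t)$.

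Concretely, I would first unpack $V_{\pi^{\AV}}(s_0)$ as an expectation $\bbE_{\tau}\!\left[\sum_t \gamma^t r^{\AV}(s_t)\mid \bq^{\AV}(\pi^{\VT})\right]$, and note that the definition of $\bq^{\AV}(\pi^{\VT})$ given earlier in the paper is precisely the $a^{\VT}_t$-marginal of $P(s_{t+1}\mid s_t,a^{\AV}_t,a^{\VT}_t)$ under $\pi^{\VT}$. By iterating this marginalization step over $t$, the induced state-marginal along $\tau$ coincides with that of the full joint rollout where $a^{\AV}_t\sim\pi^{\AV}(\cdot\mid s_t)$, $a^{\VT}_t\sim\pi^{\VT}(\cdot\mid s_t)$, and $s_{t+1}\sim P(\cdot\mid s_t,a^{\AV}_t,a^{\VT}_t)$. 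The same argument applied to $V_{\pi^{\VT}}(s_0\mid \bq^{\VT}(\pi^{\AV}))$ shows that it also equals $\bbE_{\tau}\!\left[\sum_t \gamma^t r^{\VT}(s_t)\right]$ under this identical joint distribution over $\tau$.

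Once both terms are expressed as expectations over a common trajectory distribution, linearity of expectation collapses the difference into a single expectation of $\gamma^t\bigl(r^{\AV}(s_t)-r^{\VT}(s_t)\bigr)=\gamma^t\Delta^r(s_t)$, which is exactly the objective in \eqref{prob:AIPL}. Since the identity holds for every $\pi^{\AV}$, the two maximization problems have identical objective values and hence identical argmax sets. As a consistency check, differentiating the resulting expression via the standard REINFORCE identity reproduces the policy gradient formula already derived in Proposition~\ref{prop:EAIL}, confirming the equivalence from two directions.

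The only genuinely delicate step is the second one: verifying that the victim's value function $V_{\pi^{\VT}}(s_0\mid \bq^{\VT}(\pi^{\AV}))$, which is defined as a single-agent value in an environment whose dynamics have been marginalized by $\pi^{\AV}$, coincides with the victim's expected discounted return in the joint game when trajectories are sampled from the adversary's side with dynamics $\bq^{\AV}(\pi^{\VT})$. This is essentially a tower-property argument over the joint transition kernel $P$, but it is the part that deserves explicit bookkeeping because it is what makes the ``competitive game'' interpretation legitimate. Once that identification is spelled out, the corollary drops out immediately.
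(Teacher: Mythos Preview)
Your argument is correct and rests on exactly the identification that underlies the paper's treatment: both $V_{\pi^{\AV}}(s_0)$ and $V_{\pi^{\VT}}(s_0\mid \bq^{\VT}(\pi^{\AV}))$ are expectations over the \emph{same} joint trajectory law induced by $(\pi^{\AV},\pi^{\VT},P)$, so their difference collapses by linearity to the single expectation of $\gamma^t\Delta^r(s_t)$. The paper's own proof, however, does not redo this bookkeeping; it simply writes ``the equivalence can be straightforwardly deduced from Proposition~\ref{prop:EAIL}'', i.e.\ it appeals to the already-computed gradient formula and reads off that the objective of \eqref{prob:paper2} has the same policy gradient as the standard single-agent problem with reward $\Delta^r$. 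Your route is slightly more direct and yields the stronger conclusion of pointwise equality of the two objectives (not merely equality of gradients, which a priori only gives equivalence up to an additive constant), whereas the paper's route reuses work it has already done. The ``delicate step'' you flag---that marginalizing the joint kernel by $\pi^{\AV}$ and then running $\pi^{\VT}$ produces the same state marginals as the joint rollout---is precisely the computation the paper carried out explicitly in the proof of Corollary~\ref{coro:c1}, so spelling it out here is fine but not new content.
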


\paragraph{\textbf{Integrating the victim imitator}}
To integrate the imitation learning model to the adversarial policy optimization, we use the imitating policy $\widetilde{\pi}^{\VT}$ to predict  victim's intention (i.e., next actions)  and include this information into the state space of the adversary. Intuitively, we support the adversary by providing it more information about the  victim's next moves. Based on Corollary~\ref{coro:c2}, we train the adversary's policy by solving the optimization problem \eqref{prob:AIPL}
where the adversary's policy is now of the form $\pi^\AV \Big(a_t^{\AV}|s_t
, \widetilde{a}^{\VT}_t   \Big)$. 
That is, the  adversary's policy now is conditional on current state $s_t$ as well as a predicted next victim action $\widetilde{a}^{\VT}_t$ provided by the imitating policy model $\widetilde{\pi}^{\VT}$, and the  imitating policy model $\widetilde{\pi}^{\VT}$ takes the state $s_t$ to predict the next victim's actions. Finally, \eqref{prob:AIPL} can be solved using standard policy gradient algorithms such as PPO.


\paragraph{\textbf{Main learning algorithm}}
Putting all the results developed above together, we present our adversarial policy learning in Algorithm \ref{algo:algo1}.  Figure \ref{fig:train-attacker} illustrates the three components of our algorithm, including the adversary, the victim and the imitator, and connections between these three components. In short, both the adversary policy and imitator policy are simultaneously updated during interactions between the adversary agent and the victim agent. Observed trajectories are transferred to the imitator to update the imitating policy, following steps in Section \ref{sec:IM}. Simultaneously, our algorithm provides the imitator with the victim's current state to ask for victim's intention. This information will be passed to the adversary's policy network to update the policy adversarial learning. It is expected that when the adversary's policy gets stable and demonstrations from the victim agent are sufficient, the imitation policy also gets close to a desired one (as shown in Section \ref{sec:Effect-Adv-policy-IM}) and the imitator is able to accurately predict victim agent's next actions. Algorithm \ref{algo:algo1} shows the main steps of our adversarial policy training algorithm and we give a more detailed version in the appendix.   

\begin{algorithm}[t!]
\caption{Adversarial Policy Imitation Learning (brief version)}
\label{algo:algo1}
\begin{algorithmic}
\State \textbf{Input:} Adversary's policy network $\pi^\AV_\theta$; imitator's policy network $ \widetilde{\pi}^\VT_\psi$; imitator's discriminator $D_w$; initial parameters $\theta_0, \psi_0, w_0$.
\For{$i=0,1,2,...$}
   \State \comments{Updating imitator's policy}
  \State Sample trajectories $\tau_i \sim \pi^\AV_{\theta_i}, \pi^\VT$.
  \State Update discriminator $D_w$ network from $w_i$ to $w_{i+1}$ using \eqref{eq:update-Dw}.
  \State Update imitator's policy network from $\psi_i$ to $\psi_{i+1}$ based on TRPO or PPO using \eqref{eq:update-psi}.
   \State \comments{Updating adversary's policy}
  \State Generate imitator's predicted actions $\widetilde{a}^\VT \sim \pi^\AV_{\psi_{i+1}}$
  \State Update the adversary policy  from $\theta_i$ to $\theta_{i+1}$ based on TRPO or PPO, using the gradients in \eqref{prop:EAIL}.
\EndFor
\end{algorithmic}
\end{algorithm}

\begin{figure}
\caption{An overview of our adversary policy training algorithm. At each step $t$, state $s_t$ is forwarded to victim policy $\pi^\VT$ and to the imitator's generator policy $\widetilde{\pi}^\VT_{\psi}$ to generate the victim action $a^\VT_t$ and the imitator action $\widetilde{a}^\VT_t$ respectively. The adversary policy $\pi^\AV_\theta$ uses this imitator output $\widetilde{a}^\VT_t$ and current state $s_t$ to generate the adversary action $a^\AV_t$. The environment then transits to the next state given the combination of actions $(a^\VT_t, a^\AV_t)$. For each step, victim and imitator transitions are appended to corresponding trajectory buffers, which will be used to update the discriminator of the imitator $D_w$.} 
\label{fig:train-attacker}
\centering
\vspace{0.2cm}
\includegraphics[width=0.45\textwidth]{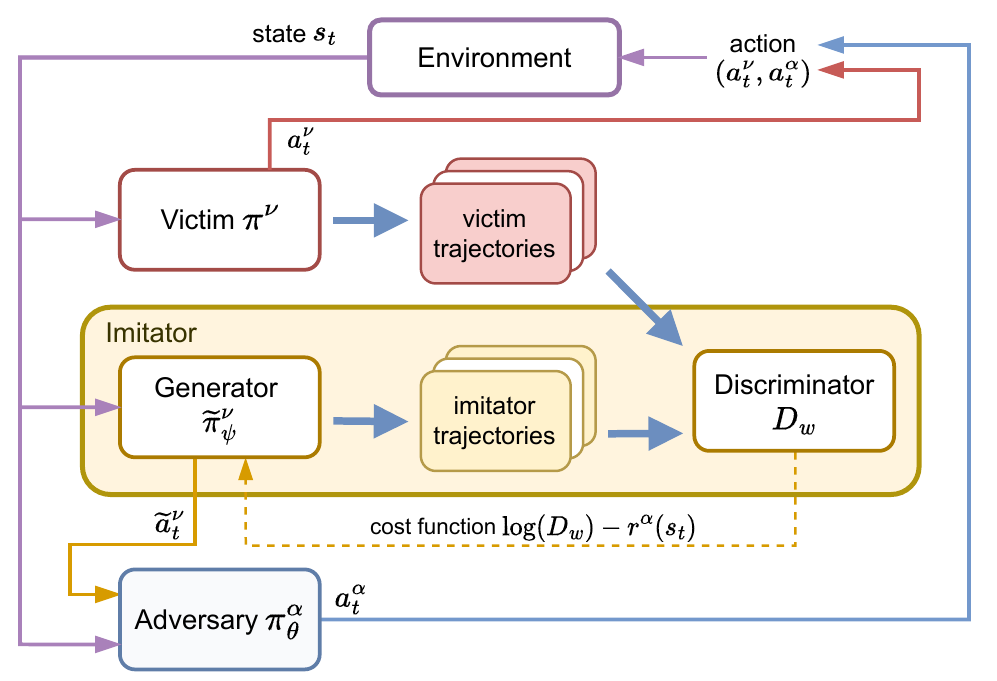}
\end{figure}

\subsection{Worst-case Performance When Training the Adversary with Unfixed Victim's Policy}
So far we train the adversary policy by assuming that the victim agent always follows a fixed policy. We explore, in this section, the question that,  if the victim's policy is not fixed, how the victim's unstable policy  would affect the adversarial  policy learning. To start our analysis, let $\pi^{\VT}_0$ be a ``true'' victim policy that the adversary agent should be trained with and suppose that, due to external causes, the adversary agent is only trained with victim policies that vary within the following set: 
$$\Omega(\epsilon) =  \{\pi^\VT\Big|~\max_{s\in \cS} D_{\KL}(\pi^\VT(s) || \pi^\VT_0 (s)) \leq \epsilon\}.$$ 
We define the worst-case expected return of the adversary agent when being trained with such varying victim policies 
$
 Y(\epsilon) =  \min_{\pi^\VT \in \Omega(\epsilon)}~\max_{\pi^{\AV}} \left\{ \bE_{\tau \sim (\pi^{\AV})}\left[\sum_{t=0}^\infty \gamma^t \Delta^r(s_t)\;\Big| \bq(\pi^{\VT}) \right] \right\}.
$
The following theorem gives a bound for the gap between the worst-case and the desired expected  return obtained from training with the "true" victim policy $Y^* = \max_{\pi^{\AV}} \Big\{ \bE_{\tau}\Big[\sum_{t} \gamma^t \Delta^r(s_t)\;\Big| \bq^\AV(\pi^{\VT}_0) \Big] \Big\}$. 
\begin{theorem}\label{th:th2}
For any $\epsilon>0$, we have the following bound
\[
\left|Y(\epsilon) - Y^* \right| \leq  \frac{\gamma\sqrt{2\ln 2} \max_s\{ |\Delta^r(s)|\}}{(1-\gamma)^2} \sqrt{\epsilon}. 
\]
\end{theorem}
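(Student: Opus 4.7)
The plan is to reduce this to a Lemma~\ref{lm:lm2}-style sensitivity bound applied to the adversary's expected return, viewed as a function of the victim's policy, and then to lift the pointwise bound to the $\min\max$ quantities $Y(\epsilon)$ and $Y^*$. Concretely, for any fixed adversary policy $\pi^\AV$, I would define
\[
U(\pi^\VT;\pi^\AV) = \bbE_{\tau \sim \pi^{\AV}}\!\left[\sum_{t=0}^\infty \gamma^t \Delta^r(s_t)\;\Big|\; \bq^\AV(\pi^{\VT})\right],
\]
so that $Y^* = \max_{\pi^\AV} U(\pi^\VT_0;\pi^\AV)$ and $Y(\epsilon) = \min_{\pi^\VT\in\Omega(\epsilon)}\max_{\pi^\AV} U(\pi^\VT;\pi^\AV)$. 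The first (and main) step is to show that for every $\pi^\AV$ and every $\pi^\VT\in\Omega(\epsilon)$,
\[
\big|U(\pi^\VT;\pi^\AV) - U(\pi^\VT_0;\pi^\AV)\big|\ \leq\ \frac{\gamma\sqrt{2\ln 2}\,\max_s|\Delta^r(s)|}{(1-\gamma)^2}\sqrt{\epsilon}.
\]

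To prove this inequality, I would replicate the competitive-advantage argument used for Lemma~\ref{lm:lm2}, but with the roles swapped: the adversary's policy is now fixed and the victim's policy changes. Introduce the advantage
\[
\widehat A_{\pi^{\VT}}(s,\overline{s}) = \Delta^r(s)+\gamma U(\pi^\VT;\pi^\AV)(\overline{s}) - U(\pi^\VT;\pi^\AV)(s),
\]
derive the telescoping identity
\[
U(\pi^\VT_0;\pi^\AV) - U(\pi^\VT;\pi^\AV) = \bbE_{\tau\sim\pi^\AV}\!\left[\sum_{t\ge 0}\gamma^t \widehat A_{\pi^{\VT}}(s_t,s_{t+1})\ \Big|\ \bq^\AV(\pi^\VT_0)\right],
\]
and then use $|U(\pi^\VT;\pi^\AV)|\le \max_s|\Delta^r(s)|/(1-\gamma)$ together with Pinsker's inequality
$\|\pi^\VT(\cdot|s)-\pi^\VT_0(\cdot|s)\|_1 \le \sqrt{2\ln 2\, D_\KL(\pi^\VT(\cdot|s)\|\pi^\VT_0(\cdot|s))}$
to bound the per-step advantage expectation by
$\gamma\,\max_s|\Delta^r(s)|/(1-\gamma)\cdot \sqrt{2\ln 2\,\epsilon}$. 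Summing the geometric series in $\gamma$ gives the displayed bound. This is the only delicate step, and it is essentially a copy of the proof of Lemma~\ref{lm:lm2} with $\cH$ specialised to $\max_s|\Delta^r(s)|/(1-\gamma)$; the main care needed is to verify that all constants line up so that the stated denominator $(1-\gamma)^2$ and numerator $\gamma\sqrt{2\ln 2}\max_s|\Delta^r(s)|$ emerge correctly.

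The second step is purely algebraic. Write $K = \gamma\sqrt{2\ln 2}\max_s|\Delta^r(s)|/(1-\gamma)^2$. The pointwise bound gives, for every $\pi^\VT\in\Omega(\epsilon)$,
\[
U(\pi^\VT_0;\pi^\AV) - K\sqrt{\epsilon}\ \le\ U(\pi^\VT;\pi^\AV)\ \le\ U(\pi^\VT_0;\pi^\AV) + K\sqrt{\epsilon}.
\]
Taking $\max_{\pi^\AV}$ on all three sides (the bounds are uniform in $\pi^\AV$) yields
$Y^*-K\sqrt{\epsilon}\le \max_{\pi^\AV}U(\pi^\VT;\pi^\AV)\le Y^*+K\sqrt{\epsilon}$, and then taking $\min_{\pi^\VT\in\Omega(\epsilon)}$ preserves the inequalities and gives $|Y(\epsilon)-Y^*|\le K\sqrt{\epsilon}$, which is exactly the theorem.

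I expect the only real obstacle to be the first step: carefully re-running the competitive-advantage telescoping argument in a setting where it is the victim (not the adversary) whose policy perturbation drives the change in dynamics, and ensuring the value-function bound $\max_s|\Delta^r(s)|/(1-\gamma)$ plays the role of $\cH$ in Lemma~\ref{lm:lm2}. The remainder is a standard uniform-bound transfer from pointwise Lipschitz-type estimates to $\min\max$ objectives, which introduces no new loss and no new constants.
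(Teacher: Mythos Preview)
Your proposal is correct and follows essentially the same route as the paper. The paper's proof is terser: it defines $\Lambda(\pi^\VT)=\max_{\pi^\AV} U(\pi^\VT;\pi^\AV)$ and asserts the bound $|\Lambda(\pi^\VT)-\Lambda(\pi^\VT_0)|\le K\sqrt{\epsilon}$ ``similarly to'' the Lemma~\ref{lm:lm2}-based derivation (with $\cH$ replaced by $\max_s|\Delta^r(s)|/(1-\gamma)$), then takes $\min_{\pi^\VT\in\Omega(\epsilon)}$; your version makes the two steps explicit by first proving the pointwise bound on $U$ and then lifting to $\max_{\pi^\AV}$ and $\min_{\pi^\VT}$, which is exactly how the paper's terse claim should be unpacked.
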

The above bound implies that the worst-case performance of the adversarial training would not too bad (i.e., within a neighbourhood $\cO(\sqrt{\epsilon})$) if the victim policy  that the adversary is trained with is not too far from the ``true'' victim policy. On the other hand, if the adversary is trained with an arbitrary victim policy, the training outcomes would be very bad. Let us use the Rock-paper-scissors game to illustrate this. If  the victim always plays ``rock'' during the adversary's training, it will not take long for the adversary agent to see that playing ``paper'' always gives a 100\% winning rate. But if the victim change their policy to playing ``scissors'', then that trained adversary's policy will always yield a 0\% winning rate.   

\section{Evaluation}
We evaluate our proposed Enhanced Adversarial Policy Imitation Learning (\textbf{E-APIL}) algorithm, i.e., the adversarial policy learning \eqref{prob:AIPL} with an enhanced imitator \eqref{prob:E-GAIL}, and the non-enhanced imitation version of our algorithm (named, \textbf{APIL}), i.e., the adversarial policy learning \eqref{prob:AIPL} with a non-enhanced imitator \eqref{prob:AIL}. 
 As to do so, we use four competitive MuJoCo game environments introduced by Emergent Complexity (EC) \cite{Bansal2018EmergentCV}, including \emph{Kick And Defend,} \emph{You Shall Not Pass}, \emph{Sumo Humans}, and \emph{Sumo Ants}. We compare the performance of our algorithms with: (i) the well-trained adversary/victim agents in \cite{Bansal2018EmergentCV} which we consider as \textbf{Baseline} agents; (ii) Attacking Deep Reinforcement Learning (\textbf{ADRL}) \cite{Gleave2020AdversarialPA}; and (iii)  Adversarial Policy Learning (\textbf{APL}) \cite{Guo2021AdversarialPL}. The two methods, ADRL and APL, are the state-of-the-art methods in adversarial policy learning. 
For fair comparisons, we use the same experiment settings (i.e., pre-trained parameters, hyperparameters, and evaluation metrics) as in \cite{Gleave2020AdversarialPA,Guo2021AdversarialPL}. Implementation details are specified in supplementary section.

\subsection{Adversarial Policy Performance: Training Adversary against Baseline Victim}
In this experiment, we train our adversary agent using our proposed algorithms to play against the baseline victim agent~\cite{Bansal2018EmergentCV}. We aim to examine if our generated adversarial policy can trigger the victim agent to perform poorly. 
Table \ref{tab:adv-performance} shows the winning rate (i.e., numbers in white cells) and the winning plus tie rate (numbers in gray cells) of our trained adversary agents playing against the  \textit{baseline} victim agent,  compared to those trained by other adversarial policy algorithms (also against the \textit{baseline} victim). 
Each reported value is calculated based on 1000 different rounds of game playing. 

\begin{figure}[t!]
\centering
\caption{Illustrative snapshots of a victim (in blue) against normal and adversarial opponents (in red) in SumoHumans simulator. Two players of the baseline method try to get close to each other and butt their opponents to win. However, APL learns to kneel to stay in the ring and its victims may find it harder to knock it down. Our algorithm even learns to stand better with two knees and dodge attacks from the victim.}
 \begin{tabular}{@{}cc@{}@{}c@{}@{}c@{}@{}c@{}} 
\begin{tabular}[c]{@{}c@{}} \rotatebox{90}{\textbf{Baseline}} \end{tabular} &
\showtsnapshot[0.1]{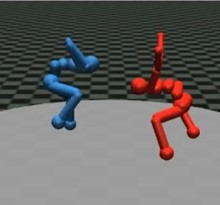} &
\showtsnapshot[0.1]{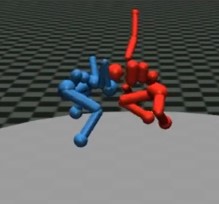} &
\showtsnapshot[0.1]{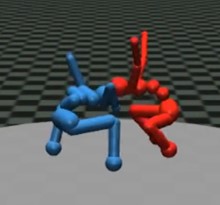} &
\showtsnapshot[0.1]{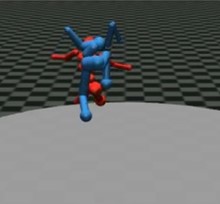}
\vspace{0.2cm} \\ 
\begin{tabular}[c]{@{}c@{}} \rotatebox{90}{\textbf{APL}} \end{tabular} &
\showtsnapshot[0.1]{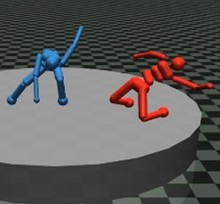} &
\showtsnapshot[0.1]{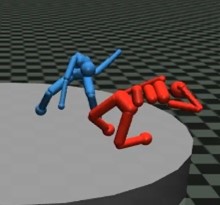} &
\showtsnapshot[0.1]{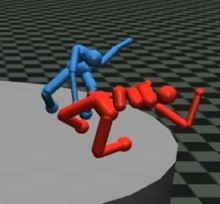} &
\showtsnapshot[0.1]{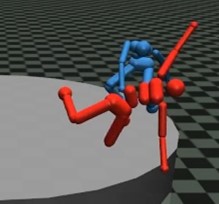}
\vspace{0.2cm} \\
\begin{tabular}[c]{@{}c@{}}\rotatebox{90}{\textbf{E-APIL}}\end{tabular} &
\showtsnapshot[0.1]{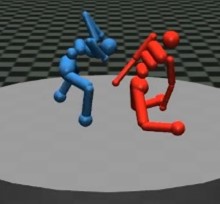} &
\showtsnapshot[0.1]{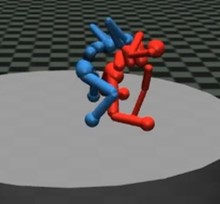} &
\showtsnapshot[0.1]{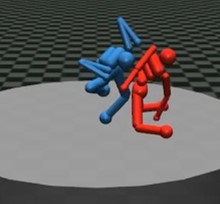} &
\showtsnapshot[0.1]{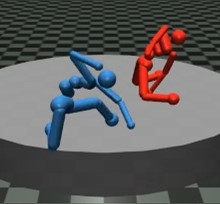}
 \end{tabular}
\end{figure}

Overall, our APIL and E-APIL methods achieve 
significantly higher winning rates for Kick-And-Defend and Sumo-Humans compared to all  existing methods. In You-Shall-Not-Pass, our method E-APIL achieves a winning rate which is only 1\% less than the best-performed method (APL) in this game environment, while significantly outperforming the others. On the other hand, in Sumo-Ants, we observe an interesting phenomenon. While we obtain the best winning-plus-tie rates in Sumo-Ants, we obtain a lower winning rate compared to the baseline adversary. This phenomenon also holds true for existing algorithms (ADRL and APL). The cause of this phenomenon comes from a unique underlying characteristic of Sumo-Ants, i.e.,  it is very challenging to reach the \emph{win} outcome --- the victim has a high chance to reach a draw outcome  by  just jumping to the ground without touching opponent.
As a result, our adversary was essentially trained to optimize the policy towards \emph{draw} outcomes in Sumo-Ants, at the sacrifice of the win rate. 

\begin{table}[t!]
\centering
\caption{Winning rate (white) and winning plus tie rate (gray) of new adversary vs baseline victim.}
\label{tab:adv-performance}
\begin{tabular}{c|g|g|g|g|g}
\rowcolor{white}
\textbf{} &
  \textbf{\begin{tabular}[c]{@{}c@{}}Base-\\ line\end{tabular}} &
  \textbf{ADRL} &
  \textbf{APL} &
  \begin{tabular}[c]{@{}c@{}}\textbf{APIL}\\ (ours)\end{tabular} &
  \begin{tabular}[c]{@{}c@{}}\textbf{E-APIL}\\ (ours)\end{tabular} \\ \hline
\rowcolor{white}
\multirow{2}{*}{\begin{tabular}[c]{@{}c@{}}Kick And\\ Defend\end{tabular}}   & 28\% & 48\% & 80\% & 85\% & \red{89\%} \\
                                                                             & 29\% & 49\% & 80\% & 86\% & \red{90\%} \\ \hline
\rowcolor{white}
\multirow{2}{*}{\begin{tabular}[c]{@{}c@{}}You Shall\\ Not Pass\end{tabular}} & 38\% & 56\% & \red{68\%} & 58\% & 67\% \\
                                                                             & 38\% & 56\% & \red{68\%} & 58\% & 67\% \\ \hline
\rowcolor{white}
\multirow{2}{*}{\begin{tabular}[c]{@{}c@{}}Sumo\\ Humans\end{tabular}}       & 7\%  & 22\% & 36\% & \red{73\%} & 72\% \\
                                                                             & 7\%  & 61\% & 77\% & \red{88\%} & 87\% \\ \hline
\rowcolor{white}
\multirow{2}{*}{\begin{tabular}[c]{@{}c@{}}Sumo\\ Ants\end{tabular}}         & \red{39\%} & 10\% & 2\%  & 2\%  & 3\%  \\
                                                                             & 56\% & 41\% & \red{81\%} & \red{81\%} & 80\%
\end{tabular}
\end{table}


\begin{figure}[t!]
\caption{t-SNE visualizations of the victim activations when playing against different opponents in MuJoCo games.}\label{fig:t-SNE}
\centering
\begin{subfigure}{0.45\textwidth}
  \centering
  \includegraphics[width=1.0\linewidth, trim={0.8cm 6.5cm 0 0},clip]{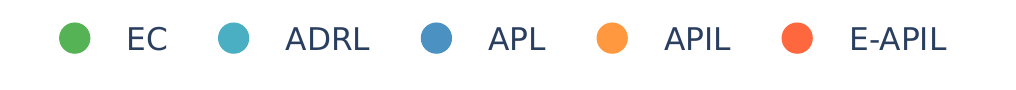}
  \captionsetup{justification=centering}
\end{subfigure}
\\
\showtsne[0.23]{KickAndDefend-v0}{tsne}{Kick And Defend}
\showtsne[0.23]{YouShallNotPassHumans-v0}{tsne}{You Shall Not Pass}
\\
\showtsne[0.23]{SumoHumans-v0}{tsne}{Sumo Humans}
\showtsne[0.23]{SumoAnts-v0}{tsne}{Sumo Ants}
\end{figure}

We now seek to better understand why our methods get higher winning rates than other algorithms. 
Figure \ref{fig:t-SNE} shows t-SNE visualization \citep{van2008visualizing} of the trained adversary against the baseline victim  by recording victim's 
policy activations. The t-SNE visualizations for all four game environments indicates that our algorithms APIL/E-APIL seek to activate different policy distribution regions of the victim (the orange and red regions) compared to existing algorithms, allowing our policy learning to converge to a better optimum.

Table \ref{tab:adv-performance} also shows that ADRL and APL are more focused on getting draw in the Sumo games than learning how to win. We plot in
Figure \ref{fig:adv-performance} the training performance of our algorithms for the four game environments, which
show that the tie rates are already high during early episodes. As mentioned, the victim in these games can easily get a draw by just jumping to the ground without touching the opponent, which makes the tie rates very high.  


Finally, Table \ref{tab:adv-performance} shows that our E-APIL with an enhanced imitator is significant better than APIL in both Kick-and-Defend and You-Shall-Not-Pass. This result implies that incorporating  the  adversary's expected rewards  into the imitator's value function 
definitely helps improve the quality of the  generated adversarial policy. 
In Sumo-Humans and Sumo-Ants where the tie rates account for a  large proportion of the outcomes, the performance of E-APIL and APIL are not substantially disparate. 

\begin{figure*}[htb]
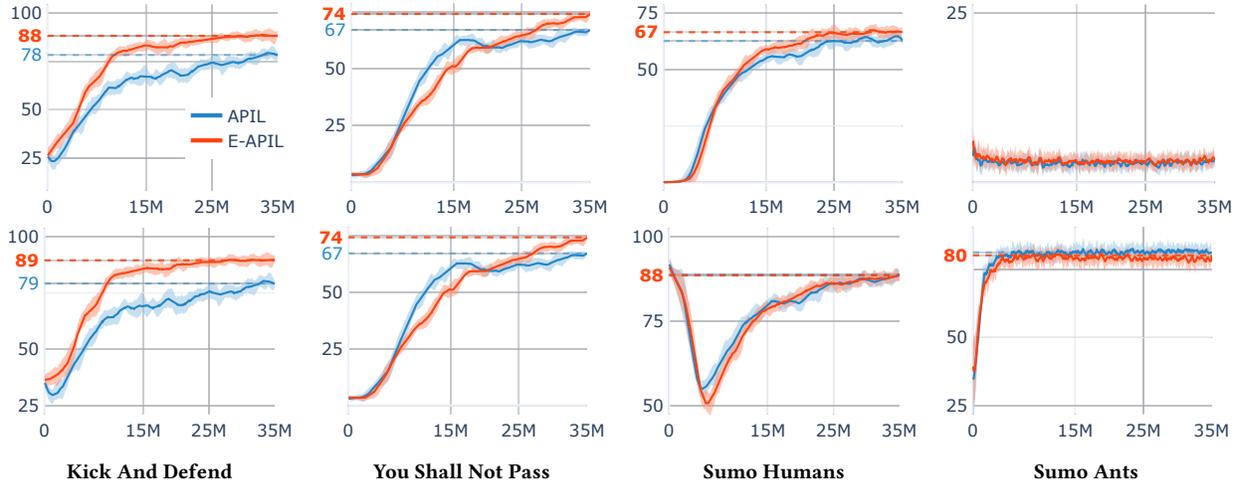

\centering
\caption{Performance of newly trained adversary vs. baseline victim while training against baseline victim. First row: win-rate. Second row: win-rate + tie-rate. Blue curves: AIPL, red curves: E-AIPL. In \textit{You-Shall-Not-Pass-Humans}, the tie rates are always zero because there is no declaration for a tie game.
} 
\label{fig:adv-performance}
\showinstance[0.22]{KickAndDefend-v0}{adv_win_legend}{}
\showinstance[0.22]{YouShallNotPassHumans-v0}{adv_win}{}
\showinstance[0.22]{SumoHumans-v0}{adv_win}{}
\showinstance[0.22]{SumoAnts-v0}{adv_win}{}
\showinstance[0.22]{KickAndDefend-v0}{adv_win+tie}{Kick And Defend}
\showinstance[0.22]{YouShallNotPassHumans-v0}{adv_win+tie}{You Shall Not Pass}
\showinstance[0.22]{SumoHumans-v0}{adv_win+tie}{Sumo Humans}
\showinstance[0.22]{SumoAnts-v0}{adv_win+tie}{Sumo Ants}
\end{figure*}

\subsection{Blinding the Trained Adversary}
To further understand the role of the imitator behind the efficiency of our adversarial training algorithms,  
we conduct the following experiment. First, we take the trained adversary agents  and let them play with the baseline victim, but now we 
blind the adversary's observation on the victim or, in other words, zero out the adversary observation pertaining to the victim. 
By blinding the adversary agents, we aim to demonstrate that the trained adversary still manages to make the victim to perform poorly just based on the imitator's policy output, despite of the \emph{blinded} disadvantage.

Tables 
\ref{tab:adv-performance-blind-adv} reports our experiments with blinded trained  adversary against the baseline victim. In general, 
 our APIL and E-APIL adversary agents outperform those trained by the other methods when playing against the baseline victim.
 Intuitively, even when being blinded, by taking feedback from the trained imitator, our trained adversary agents would still partially predict victim's intention to make better actions, compared to those trained by other methods.

\begin{table}[t!]
\centering
\caption{Winning rate (white) and winning plus tie rate (gray) of  games between  blinded adversary and baseline victim 
}
\label{tab:adv-performance-blind-adv}
\begin{tabular}{c|g|g|g|g|g}
\rowcolor{white}
\textbf{} &
  \textbf{\begin{tabular}[c]{@{}c@{}}Base-\\ line\end{tabular}} &
  \textbf{ADRL} &
  \textbf{APL} &
  \begin{tabular}[c]{@{}c@{}}\textbf{APIL}\\ (ours)\end{tabular} &
  \begin{tabular}[c]{@{}c@{}}\textbf{E-APIL}\\ (ours)\end{tabular} \\ \hline
\rowcolor{white}
\multirow{2}{*}{\begin{tabular}[c]{@{}c@{}}Kick And\\ Defend\end{tabular}}   & 28\% & 48\% & 80\% & 85\% & \red{89\%} \\
                                                                             & 29\% & 49\% & 80\% & 86\% & \red{90\%} \\ \hline
\rowcolor{white}
\multirow{2}{*}{\begin{tabular}[c]{@{}c@{}}You Shall\\ Not Pass\end{tabular}} & 1\%  & 48\% & 65\% & 62\% & \red{68\%} \\
                                                                             & 1\%  & 48\% & 65\% & 62\% & \red{68\%} \\ \hline
\rowcolor{white}
\multirow{2}{*}{\begin{tabular}[c]{@{}c@{}}Sumo\\ Humans\end{tabular}}       & 3\%  & 0\%  & 1\%  & \red{69\%} & 66\% \\
                                                                             & 7\%  & \red{83\%} & 62\% & \red{83\%} & 80\% \\ \hline
\rowcolor{white}
\multirow{2}{*}{\begin{tabular}[c]{@{}c@{}}Sumo\\ Ants\end{tabular}}         & \red{13\%} & 7\%  & 3\%  & 2\%  & 3\%  \\
                                                                             & 42\% & 38\% & 64\% & \red{81\%} & 79\%
\end{tabular}
\end{table}

In summary, our methods works well in interactive environments, even in the blinding setting, thanks to the capability of predicting opponent's intention through the trained imitator.

\subsection{Improving Victim Resiliency: Retraining Victim against  New Adversary}
Previous studies demonstrate that one could retrain the victim and thus improve its adversary resistance \cite{Gleave2020AdversarialPA, Guo2021AdversarialPL}. In this experiment, we also retrain the victim agent against the newly trained adversary agent to examine the resistance of the retrained victim agent against adversarial policies. We further explore the resilience transferability of the retrained victim agents. Specifically, similar to previous work, we retrain the victim agent against a mixed adversary agent of the new adversary (whose policy is trained based on one of the evaluated adversarial training algorithms (i.e., baseline, ADRL, APL and ours) and the baseline adversary.
We then have the retrained victim agent play against the baseline adversary for 1000 rounds and report its winning as well as winning plus tie rates. 


Table \ref{tab:vic-ori-performance} shows the winning and winning plus tie rates of the retrained victim agents playing against the baseline adversary. For example, the \textbf{E-APIL} column shows the game results between the victim agent (retrained based on interactions with an E-APIL adversary) and the Baseline adversary agent. Except for the Sumo-Ants game, our methods outperform other algorithms in terms of retraining the victim agent to be stronger. 
For all the games, the baseline victim generally yields good results, which is not a surprising observation as the OpenAI's baseline agents \citep{Bansal2018EmergentCV} are well trained against their opponents with around 1B steps and 4 GPUs. In our experiments, 
with only 35M + 10M steps and 1 GPU, we are able to make the  victim agent significantly stronger. Moreover, despite the fact that the APIL/E-APIL based victim agents are retrained against our APIL/E-APIL adversary agents, these retrained victim agents still manage to perform well against the baseline adversary as show in the last two columns of Table \ref{tab:vic-ori-performance}. This result clearly shows that the strong resilience of APIL/E-APIL based victim agents can be transferred to other game settings with different types of adversary agents (e.g., the baseline adversary in this experiment). 

%

\begin{table}[htb]
\centering
\caption{Winning rate (white) and winning plus tie rate (gray) of retrained victim agents vs baseline adversary.}
\label{tab:vic-ori-performance}
\begin{tabular}{c|g|g|g|g|g}
\rowcolor{white}
\textbf{} &
  \textbf{\begin{tabular}[c]{@{}c@{}}Base-\\ line\end{tabular}} &
  \textbf{ADRL} &
  \textbf{APL} &
  \begin{tabular}[c]{@{}c@{}}\textbf{APIL}\\ (ours)\end{tabular} &
  \begin{tabular}[c]{@{}c@{}}\textbf{E-APIL}\\ (ours)\end{tabular} \\ \hline
\rowcolor{white}
\multirow{2}{*}{\begin{tabular}[c]{@{}c@{}}Kick And\\ Defend\end{tabular}}   & 71\% & 62\% & 70\% & \red{87\%} & \red{87\%} \\
                                                                             & 72\% & 70\% & 77\% & 89\% & \red{90\%} \\ \hline
\rowcolor{white}
\multirow{2}{*}{\begin{tabular}[c]{@{}c@{}}You Shall\\ Not Pass\end{tabular}} & 62\% & 64\% & 63\% & 71\% & \red{72\%} \\
                                                                             & 62\% & 64\% & 63\% & 71\% & \red{72\%} \\ \hline
\rowcolor{white}
\multirow{2}{*}{\begin{tabular}[c]{@{}c@{}}Sumo\\ Humans\end{tabular}}       & 93\% & 76\% & 79\% & 94\% & \red{95\%} \\
                                                                             & 93\% & 85\% & 84\% & 95\% & \red{96\%} \\ \hline
\rowcolor{white}
\multirow{2}{*}{\begin{tabular}[c]{@{}c@{}}Sumo\\ Ants\end{tabular}}         & \red{44\%} & 24\% & 30\% & 29\% & 33\%  \\
                                                                             & \red{61\%} & 38\% & 48\% & 52\% & 55\%
\end{tabular}
\end{table}

We further test the performance of each retrained victim agent against our E-APIL adversary and report the  winning and winning plus tie rates in Table \ref{tab:our-best-adv-vs-others}. For the two non-sumo games, the winning rates of ADRL/APL retrained victims are less than 60\%, which are significantly smaller than our rates.  Obviously, our E-APIL retrained victim achieves better results because it's trained against our E-APIL adversary, but our APIL also gets much better winning rates than ADRL/APL methods. It generally indicates the robustness and efficiency of our algorithms, compared to other approaches,  in terms of retraining the victim agent to have better versions of it.

\begin{table}[htb]
\centering
\caption{Winning rate (white) and winning plus tie rate (gray) of retrained victim agents vs our E-APIL adversary}
\label{tab:our-best-adv-vs-others}

\begin{tabular}{c|g|g|g|g|g}
\rowcolor{white}
\textbf{} &
  \textbf{\begin{tabular}[c]{@{}c@{}}Base-\\ line\end{tabular}} &
  \textbf{ADRL} &
  \textbf{APL} &
  \begin{tabular}[c]{@{}c@{}}\textbf{APIL}\\ (ours)\end{tabular} &
  \begin{tabular}[c]{@{}c@{}}\textbf{E-APIL}\\ (ours)\end{tabular} \\ \hline
\rowcolor{white}
\multirow{2}{*}{\begin{tabular}[c]{@{}c@{}}Kick And\\ Defend\end{tabular}}   & 10\% & 31\% & 52\% & 82\% & \red{91\%} \\
                                                                             & 11\% & 34\% & 53\% & 83\% & \red{92\%} \\ \hline
\rowcolor{white}
\multirow{2}{*}{\begin{tabular}[c]{@{}c@{}}You Shall\\ Not Pass\end{tabular}} & 33\% & 46\% & 58\% & 78\% & \red{88\%} \\
                                                                             & 33\% & 46\% & 58\% & 78\% & \red{88\%} \\ \hline
\rowcolor{white}
\multirow{2}{*}{\begin{tabular}[c]{@{}c@{}}Sumo\\ Humans\end{tabular}}       & 14\% & 32\% & 46\% & \red{51\%} & 41\% \\
                                                                             & 28\% & 84\% & 85\% & 84\% & \red{91\%} \\ \hline
\rowcolor{white}
\multirow{2}{*}{\begin{tabular}[c]{@{}c@{}}Sumo\\ Ants\end{tabular}}         & \red{20\%} & 17\% & 19\% & \red{20\%} & \red{20\%}  \\
                                                                             & \red{97\%} & 96\% & \red{97\%} & \red{97\%} & \red{97\%}
\end{tabular}

\end{table}

\section{Conclusion}
This paper introduces a new effective adversarial policy learning algorithm based on a novel integration of a new victim-imitation learning into the adversarial policy training process. Our victim-imitation component (which is an enhanced version of the state-of-the-art imitation method GAIL) discovers underlying characteristics of the victim agent, enabling the prediction of the victim's next moves which can be leveraged to strengthen the adversarial policy generation. We present important theoretical results on the inter-dependency between the victim-imitation learning and the adversarial policy learning, showing the convergence of our learning algorithm. We demonstrate the superiority of our proposed algorithm compared to existing adversarial policy learning algorithms through extensive experiments on various game environments.  





\bibliographystyle{ACM-Reference-Format} 
\bibliography{sample}


\begin{thebibliography}{33}


\ifx \showCODEN    \undefined \def \showCODEN     #1{\unskip}     \fi
\ifx \showDOI      \undefined \def \showDOI       #1{#1}\fi
\ifx \showISBNx    \undefined \def \showISBNx     #1{\unskip}     \fi
\ifx \showISBNxiii \undefined \def \showISBNxiii  #1{\unskip}     \fi
\ifx \showISSN     \undefined \def \showISSN      #1{\unskip}     \fi
\ifx \showLCCN     \undefined \def \showLCCN      #1{\unskip}     \fi
\ifx \shownote     \undefined \def \shownote      #1{#1}          \fi
\ifx \showarticletitle \undefined \def \showarticletitle #1{#1}   \fi
\ifx \showURL      \undefined \def \showURL       {\relax}        \fi
\providecommand\bibfield[2]{#2}
\providecommand\bibinfo[2]{#2}
\providecommand\natexlab[1]{#1}
\providecommand\showeprint[2][]{arXiv:#2}

\bibitem[\protect\citeauthoryear{Bansal, Pachocki, Sidor, Sutskever, and
  Mordatch}{Bansal et~al\mbox{.}}{2018}]%
        {Bansal2018EmergentCV}
\bibfield{author}{\bibinfo{person}{Trapit Bansal}, \bibinfo{person}{Jakub~W.
  Pachocki}, \bibinfo{person}{Szymon Sidor}, \bibinfo{person}{Ilya Sutskever},
  {and} \bibinfo{person}{Igor Mordatch}.} \bibinfo{year}{2018}\natexlab{}.
\newblock \showarticletitle{Emergent Complexity via Multi-Agent Competition}.
\newblock \bibinfo{journal}{\emph{ArXiv}}  \bibinfo{volume}{abs/1710.03748}
  (\bibinfo{year}{2018}).
\newblock


\bibitem[\protect\citeauthoryear{Behzadan and Munir}{Behzadan and
  Munir}{2017}]%
        {behzadan2017vulnerability}
\bibfield{author}{\bibinfo{person}{Vahid Behzadan} {and}
  \bibinfo{person}{Arslan Munir}.} \bibinfo{year}{2017}\natexlab{}.
\newblock \showarticletitle{Vulnerability of deep reinforcement learning to
  policy induction attacks}. In \bibinfo{booktitle}{\emph{International
  Conference on Machine Learning and Data Mining in Pattern Recognition}}.
  Springer, \bibinfo{pages}{262--275}.
\newblock


\bibitem[\protect\citeauthoryear{Dosovitskiy, Ros, Codevilla, Lopez, and
  Koltun}{Dosovitskiy et~al\mbox{.}}{2017}]%
        {dosovitskiy2017carla}
\bibfield{author}{\bibinfo{person}{Alexey Dosovitskiy}, \bibinfo{person}{German
  Ros}, \bibinfo{person}{Felipe Codevilla}, \bibinfo{person}{Antonio Lopez},
  {and} \bibinfo{person}{Vladlen Koltun}.} \bibinfo{year}{2017}\natexlab{}.
\newblock \showarticletitle{CARLA: An open urban driving simulator}. In
  \bibinfo{booktitle}{\emph{Conference on robot learning}}. PMLR,
  \bibinfo{pages}{1--16}.
\newblock


\bibitem[\protect\citeauthoryear{Finn, Christiano, Abbeel, and Levine}{Finn
  et~al\mbox{.}}{2016a}]%
        {finn2016connection}
\bibfield{author}{\bibinfo{person}{Chelsea Finn}, \bibinfo{person}{Paul
  Christiano}, \bibinfo{person}{Pieter Abbeel}, {and} \bibinfo{person}{Sergey
  Levine}.} \bibinfo{year}{2016}\natexlab{a}.
\newblock \showarticletitle{A connection between generative adversarial
  networks, inverse reinforcement learning, and energy-based models}.
\newblock \bibinfo{journal}{\emph{arXiv preprint arXiv:1611.03852}}
  (\bibinfo{year}{2016}).
\newblock


\bibitem[\protect\citeauthoryear{Finn, Levine, and Abbeel}{Finn
  et~al\mbox{.}}{2016b}]%
        {finn2016guided}
\bibfield{author}{\bibinfo{person}{Chelsea Finn}, \bibinfo{person}{Sergey
  Levine}, {and} \bibinfo{person}{Pieter Abbeel}.}
  \bibinfo{year}{2016}\natexlab{b}.
\newblock \showarticletitle{Guided cost learning: Deep inverse optimal control
  via policy optimization}. In \bibinfo{booktitle}{\emph{International
  conference on machine learning}}. PMLR, \bibinfo{pages}{49--58}.
\newblock


\bibitem[\protect\citeauthoryear{Fu, Luo, and Levine}{Fu et~al\mbox{.}}{2017}]%
        {fu2017learning}
\bibfield{author}{\bibinfo{person}{Justin Fu}, \bibinfo{person}{Katie Luo},
  {and} \bibinfo{person}{Sergey Levine}.} \bibinfo{year}{2017}\natexlab{}.
\newblock \showarticletitle{Learning robust rewards with adversarial inverse
  reinforcement learning}.
\newblock \bibinfo{journal}{\emph{arXiv preprint arXiv:1710.11248}}
  (\bibinfo{year}{2017}).
\newblock


\bibitem[\protect\citeauthoryear{Gleave, Dennis, Kant, Wild, Levine, and
  Russell}{Gleave et~al\mbox{.}}{2020}]%
        {Gleave2020AdversarialPA}
\bibfield{author}{\bibinfo{person}{Adam Gleave}, \bibinfo{person}{Michael
  Dennis}, \bibinfo{person}{Neel Kant}, \bibinfo{person}{Cody Wild},
  \bibinfo{person}{Sergey Levine}, {and} \bibinfo{person}{Stuart~J. Russell}.}
  \bibinfo{year}{2020}\natexlab{}.
\newblock \showarticletitle{Adversarial Policies: Attacking Deep Reinforcement
  Learning}.
\newblock \bibinfo{journal}{\emph{ArXiv}}  \bibinfo{volume}{abs/1905.10615}
  (\bibinfo{year}{2020}).
\newblock


\bibitem[\protect\citeauthoryear{Guo, Wu, Huang, and Xing}{Guo
  et~al\mbox{.}}{2021}]%
        {Guo2021AdversarialPL}
\bibfield{author}{\bibinfo{person}{Wenbo Guo}, \bibinfo{person}{Xian Wu},
  \bibinfo{person}{Sui Huang}, {and} \bibinfo{person}{Xinyu Xing}.}
  \bibinfo{year}{2021}\natexlab{}.
\newblock \showarticletitle{Adversarial Policy Learning in Two-player
  Competitive Games}. In \bibinfo{booktitle}{\emph{ICML}}.
\newblock


\bibitem[\protect\citeauthoryear{Ho and Ermon}{Ho and Ermon}{2016}]%
        {ho2016generative}
\bibfield{author}{\bibinfo{person}{Jonathan Ho} {and} \bibinfo{person}{Stefano
  Ermon}.} \bibinfo{year}{2016}\natexlab{}.
\newblock \showarticletitle{Generative adversarial imitation learning}.
\newblock \bibinfo{journal}{\emph{Advances in neural information processing
  systems}}  \bibinfo{volume}{29} (\bibinfo{year}{2016}).
\newblock


\bibitem[\protect\citeauthoryear{Huang, Papernot, Goodfellow, Duan, and
  Abbeel}{Huang et~al\mbox{.}}{2017}]%
        {Huang2017AdversarialAO}
\bibfield{author}{\bibinfo{person}{Sandy~H. Huang}, \bibinfo{person}{Nicolas
  Papernot}, \bibinfo{person}{Ian~J. Goodfellow}, \bibinfo{person}{Yan Duan},
  {and} \bibinfo{person}{P. Abbeel}.} \bibinfo{year}{2017}\natexlab{}.
\newblock \showarticletitle{Adversarial Attacks on Neural Network Policies}.
\newblock \bibinfo{journal}{\emph{ArXiv}}  \bibinfo{volume}{abs/1702.02284}
  (\bibinfo{year}{2017}).
\newblock


\bibitem[\protect\citeauthoryear{Kakade and Langford}{Kakade and
  Langford}{2002}]%
        {kakade2002approximately}
\bibfield{author}{\bibinfo{person}{Sham Kakade} {and} \bibinfo{person}{John
  Langford}.} \bibinfo{year}{2002}\natexlab{}.
\newblock \showarticletitle{Approximately optimal approximate reinforcement
  learning}. In \bibinfo{booktitle}{\emph{In Proc. 19th International
  Conference on Machine Learning}}. Citeseer.
\newblock


\bibitem[\protect\citeauthoryear{Kos and Song}{Kos and Song}{2017}]%
        {kos2017delving}
\bibfield{author}{\bibinfo{person}{Jernej Kos} {and} \bibinfo{person}{Dawn
  Song}.} \bibinfo{year}{2017}\natexlab{}.
\newblock \showarticletitle{Delving into adversarial attacks on deep policies}.
\newblock \bibinfo{journal}{\emph{arXiv preprint arXiv:1705.06452}}
  (\bibinfo{year}{2017}).
\newblock


\bibitem[\protect\citeauthoryear{Lee, Ghadai, Tan, Hegde, and Sarkar}{Lee
  et~al\mbox{.}}{2020}]%
        {lee2020spatiotemporally}
\bibfield{author}{\bibinfo{person}{Xian~Yeow Lee}, \bibinfo{person}{Sambit
  Ghadai}, \bibinfo{person}{Kai~Liang Tan}, \bibinfo{person}{Chinmay Hegde},
  {and} \bibinfo{person}{Soumik Sarkar}.} \bibinfo{year}{2020}\natexlab{}.
\newblock \showarticletitle{Spatiotemporally constrained action space attacks
  on deep reinforcement learning agents}. In
  \bibinfo{booktitle}{\emph{Proceedings of the AAAI Conference on Artificial
  Intelligence}}, Vol.~\bibinfo{volume}{34}. \bibinfo{pages}{4577--4584}.
\newblock


\bibitem[\protect\citeauthoryear{Lewis, Yarats, Dauphin, Parikh, and
  Batra}{Lewis et~al\mbox{.}}{2017}]%
        {lewis2017deal}
\bibfield{author}{\bibinfo{person}{Mike Lewis}, \bibinfo{person}{Denis Yarats},
  \bibinfo{person}{Yann~N Dauphin}, \bibinfo{person}{Devi Parikh}, {and}
  \bibinfo{person}{Dhruv Batra}.} \bibinfo{year}{2017}\natexlab{}.
\newblock \showarticletitle{Deal or no deal? end-to-end learning for
  negotiation dialogues}.
\newblock \bibinfo{journal}{\emph{arXiv preprint arXiv:1706.05125}}
  (\bibinfo{year}{2017}).
\newblock


\bibitem[\protect\citeauthoryear{Lin, Dzeparoska, Zhang, Leon-Garcia, and
  Papernot}{Lin et~al\mbox{.}}{2020}]%
        {lin2020robustness}
\bibfield{author}{\bibinfo{person}{Jieyu Lin}, \bibinfo{person}{Kristina
  Dzeparoska}, \bibinfo{person}{Sai~Qian Zhang}, \bibinfo{person}{Alberto
  Leon-Garcia}, {and} \bibinfo{person}{Nicolas Papernot}.}
  \bibinfo{year}{2020}\natexlab{}.
\newblock \showarticletitle{On the robustness of cooperative multi-agent
  reinforcement learning}. In \bibinfo{booktitle}{\emph{2020 IEEE Security and
  Privacy Workshops (SPW)}}. IEEE, \bibinfo{pages}{62--68}.
\newblock


\bibitem[\protect\citeauthoryear{Lin, Hong, Liao, Shih, Liu, and Sun}{Lin
  et~al\mbox{.}}{2017}]%
        {lin2017tactics}
\bibfield{author}{\bibinfo{person}{Yen-Chen Lin}, \bibinfo{person}{Zhang-Wei
  Hong}, \bibinfo{person}{Yuan-Hong Liao}, \bibinfo{person}{Meng-Li Shih},
  \bibinfo{person}{Ming-Yu Liu}, {and} \bibinfo{person}{Min Sun}.}
  \bibinfo{year}{2017}\natexlab{}.
\newblock \showarticletitle{Tactics of adversarial attack on deep reinforcement
  learning agents}.
\newblock \bibinfo{journal}{\emph{arXiv preprint arXiv:1703.06748}}
  (\bibinfo{year}{2017}).
\newblock


\bibitem[\protect\citeauthoryear{Ma, Zhang, Sun, and Zhu}{Ma
  et~al\mbox{.}}{2019}]%
        {ma2019policy}
\bibfield{author}{\bibinfo{person}{Yuzhe Ma}, \bibinfo{person}{Xuezhou Zhang},
  \bibinfo{person}{Wen Sun}, {and} \bibinfo{person}{Jerry Zhu}.}
  \bibinfo{year}{2019}\natexlab{}.
\newblock \showarticletitle{Policy poisoning in batch reinforcement learning
  and control}.
\newblock \bibinfo{journal}{\emph{Advances in Neural Information Processing
  Systems}}  \bibinfo{volume}{32} (\bibinfo{year}{2019}).
\newblock


\bibitem[\protect\citeauthoryear{Nazari, Oroojlooy, Snyder, and
  Tak{\'a}c}{Nazari et~al\mbox{.}}{2018}]%
        {nazari2018reinforcement}
\bibfield{author}{\bibinfo{person}{Mohammadreza Nazari},
  \bibinfo{person}{Afshin Oroojlooy}, \bibinfo{person}{Lawrence Snyder}, {and}
  \bibinfo{person}{Martin Tak{\'a}c}.} \bibinfo{year}{2018}\natexlab{}.
\newblock \showarticletitle{Reinforcement learning for solving the vehicle
  routing problem}.
\newblock \bibinfo{journal}{\emph{Advances in neural information processing
  systems}}  \bibinfo{volume}{31} (\bibinfo{year}{2018}).
\newblock


\bibitem[\protect\citeauthoryear{Noonan}{Noonan}{2017}]%
        {noonan2017jpmorgan}
\bibfield{author}{\bibinfo{person}{Laura Noonan}.}
  \bibinfo{year}{2017}\natexlab{}.
\newblock \showarticletitle{JPMorgan develops robot to execute trades}.
\newblock \bibinfo{journal}{\emph{Financial Times}} (\bibinfo{year}{2017}),
  \bibinfo{pages}{1928--1937}.
\newblock


\bibitem[\protect\citeauthoryear{Rakhsha, Radanovic, Devidze, Zhu, and
  Singla}{Rakhsha et~al\mbox{.}}{2020}]%
        {rakhsha2020policy}
\bibfield{author}{\bibinfo{person}{Amin Rakhsha}, \bibinfo{person}{Goran
  Radanovic}, \bibinfo{person}{Rati Devidze}, \bibinfo{person}{Xiaojin Zhu},
  {and} \bibinfo{person}{Adish Singla}.} \bibinfo{year}{2020}\natexlab{}.
\newblock \showarticletitle{Policy teaching via environment poisoning:
  Training-time adversarial attacks against reinforcement learning}. In
  \bibinfo{booktitle}{\emph{International Conference on Machine Learning}}.
  PMLR, \bibinfo{pages}{7974--7984}.
\newblock


\bibitem[\protect\citeauthoryear{Rakhsha, Zhang, Zhu, and Singla}{Rakhsha
  et~al\mbox{.}}{2021}]%
        {rakhsha2021reward}
\bibfield{author}{\bibinfo{person}{Amin Rakhsha}, \bibinfo{person}{Xuezhou
  Zhang}, \bibinfo{person}{Xiaojin Zhu}, {and} \bibinfo{person}{Adish Singla}.}
  \bibinfo{year}{2021}\natexlab{}.
\newblock \showarticletitle{Reward poisoning in reinforcement learning: Attacks
  against unknown learners in unknown environments}.
\newblock \bibinfo{journal}{\emph{arXiv preprint arXiv:2102.08492}}
  (\bibinfo{year}{2021}).
\newblock


\bibitem[\protect\citeauthoryear{Russo and Proutiere}{Russo and
  Proutiere}{2019}]%
        {russo2019optimal}
\bibfield{author}{\bibinfo{person}{Alessio Russo} {and}
  \bibinfo{person}{Alexandre Proutiere}.} \bibinfo{year}{2019}\natexlab{}.
\newblock \showarticletitle{Optimal attacks on reinforcement learning
  policies}.
\newblock \bibinfo{journal}{\emph{arXiv preprint arXiv:1907.13548}}
  (\bibinfo{year}{2019}).
\newblock


\bibitem[\protect\citeauthoryear{Schulman, Levine, Abbeel, Jordan, and
  Moritz}{Schulman et~al\mbox{.}}{2015}]%
        {schulman2015trust}
\bibfield{author}{\bibinfo{person}{John Schulman}, \bibinfo{person}{Sergey
  Levine}, \bibinfo{person}{Pieter Abbeel}, \bibinfo{person}{Michael Jordan},
  {and} \bibinfo{person}{Philipp Moritz}.} \bibinfo{year}{2015}\natexlab{}.
\newblock \showarticletitle{Trust region policy optimization}. In
  \bibinfo{booktitle}{\emph{International conference on machine learning}}.
  PMLR, \bibinfo{pages}{1889--1897}.
\newblock


\bibitem[\protect\citeauthoryear{Schulman, Wolski, Dhariwal, Radford, and
  Klimov}{Schulman et~al\mbox{.}}{2017}]%
        {schulman2017proximal}
\bibfield{author}{\bibinfo{person}{John Schulman}, \bibinfo{person}{Filip
  Wolski}, \bibinfo{person}{Prafulla Dhariwal}, \bibinfo{person}{Alec Radford},
  {and} \bibinfo{person}{Oleg Klimov}.} \bibinfo{year}{2017}\natexlab{}.
\newblock \showarticletitle{Proximal policy optimization algorithms}.
\newblock \bibinfo{journal}{\emph{arXiv preprint arXiv:1707.06347}}
  (\bibinfo{year}{2017}).
\newblock


\bibitem[\protect\citeauthoryear{Song, Ren, Sadigh, and Ermon}{Song
  et~al\mbox{.}}{2018}]%
        {song2018multi}
\bibfield{author}{\bibinfo{person}{Jiaming Song}, \bibinfo{person}{Hongyu Ren},
  \bibinfo{person}{Dorsa Sadigh}, {and} \bibinfo{person}{Stefano Ermon}.}
  \bibinfo{year}{2018}\natexlab{}.
\newblock \showarticletitle{Multi-agent generative adversarial imitation
  learning}.
\newblock \bibinfo{journal}{\emph{Advances in neural information processing
  systems}}  \bibinfo{volume}{31} (\bibinfo{year}{2018}).
\newblock


\bibitem[\protect\citeauthoryear{Sun, Zhang, Xie, Ma, Zheng, Chen, and Liu}{Sun
  et~al\mbox{.}}{2020}]%
        {sun2020stealthy}
\bibfield{author}{\bibinfo{person}{Jianwen Sun}, \bibinfo{person}{Tianwei
  Zhang}, \bibinfo{person}{Xiaofei Xie}, \bibinfo{person}{Lei Ma},
  \bibinfo{person}{Yan Zheng}, \bibinfo{person}{Kangjie Chen}, {and}
  \bibinfo{person}{Yang Liu}.} \bibinfo{year}{2020}\natexlab{}.
\newblock \showarticletitle{Stealthy and efficient adversarial attacks against
  deep reinforcement learning}. In \bibinfo{booktitle}{\emph{Proceedings of the
  AAAI Conference on Artificial Intelligence}}, Vol.~\bibinfo{volume}{34}.
  \bibinfo{pages}{5883--5891}.
\newblock


\bibitem[\protect\citeauthoryear{Thomas and Joy}{Thomas and Joy}{2006}]%
        {thomas2006elements}
\bibfield{author}{\bibinfo{person}{MTCAJ Thomas} {and}
  \bibinfo{person}{A~Thomas Joy}.} \bibinfo{year}{2006}\natexlab{}.
\newblock \bibinfo{booktitle}{\emph{Elements of information theory}}.
\newblock \bibinfo{publisher}{Wiley-Interscience}.
\newblock


\bibitem[\protect\citeauthoryear{Todorov, Erez, and Tassa}{Todorov
  et~al\mbox{.}}{2012}]%
        {todorov2012mujoco}
\bibfield{author}{\bibinfo{person}{Emanuel Todorov}, \bibinfo{person}{Tom
  Erez}, {and} \bibinfo{person}{Yuval Tassa}.} \bibinfo{year}{2012}\natexlab{}.
\newblock \showarticletitle{Mujoco: A physics engine for model-based control}.
  In \bibinfo{booktitle}{\emph{2012 IEEE/RSJ international conference on
  intelligent robots and systems}}. IEEE, \bibinfo{pages}{5026--5033}.
\newblock


\bibitem[\protect\citeauthoryear{Van~der Maaten and Hinton}{Van~der Maaten and
  Hinton}{2008}]%
        {van2008visualizing}
\bibfield{author}{\bibinfo{person}{Laurens Van~der Maaten} {and}
  \bibinfo{person}{Geoffrey Hinton}.} \bibinfo{year}{2008}\natexlab{}.
\newblock \showarticletitle{Visualizing data using t-SNE.}
\newblock \bibinfo{journal}{\emph{Journal of machine learning research}}
  \bibinfo{volume}{9}, \bibinfo{number}{11} (\bibinfo{year}{2008}).
\newblock


\bibitem[\protect\citeauthoryear{Xiao, Pan, He, Peng, Sun, Yi, Liu, Li, and
  Song}{Xiao et~al\mbox{.}}{2019}]%
        {xiao2019characterizing}
\bibfield{author}{\bibinfo{person}{Chaowei Xiao}, \bibinfo{person}{Xinlei Pan},
  \bibinfo{person}{Warren He}, \bibinfo{person}{Jian Peng},
  \bibinfo{person}{Mingjie Sun}, \bibinfo{person}{Jinfeng Yi},
  \bibinfo{person}{Mingyan Liu}, \bibinfo{person}{Bo Li}, {and}
  \bibinfo{person}{Dawn Song}.} \bibinfo{year}{2019}\natexlab{}.
\newblock \showarticletitle{Characterizing attacks on deep reinforcement
  learning}.
\newblock \bibinfo{journal}{\emph{arXiv preprint arXiv:1907.09470}}
  (\bibinfo{year}{2019}).
\newblock


\bibitem[\protect\citeauthoryear{Yu, Song, and Ermon}{Yu et~al\mbox{.}}{2019}]%
        {yu2019multi}
\bibfield{author}{\bibinfo{person}{Lantao Yu}, \bibinfo{person}{Jiaming Song},
  {and} \bibinfo{person}{Stefano Ermon}.} \bibinfo{year}{2019}\natexlab{}.
\newblock \showarticletitle{Multi-agent adversarial inverse reinforcement
  learning}. In \bibinfo{booktitle}{\emph{International Conference on Machine
  Learning}}. PMLR, \bibinfo{pages}{7194--7201}.
\newblock


\bibitem[\protect\citeauthoryear{Zhang, Chen, Boning, and Hsieh}{Zhang
  et~al\mbox{.}}{2021}]%
        {zhang2021robust}
\bibfield{author}{\bibinfo{person}{Huan Zhang}, \bibinfo{person}{Hongge Chen},
  \bibinfo{person}{Duane Boning}, {and} \bibinfo{person}{Cho-Jui Hsieh}.}
  \bibinfo{year}{2021}\natexlab{}.
\newblock \showarticletitle{Robust reinforcement learning on state observations
  with learned optimal adversary}.
\newblock \bibinfo{journal}{\emph{arXiv preprint arXiv:2101.08452}}
  (\bibinfo{year}{2021}).
\newblock


\bibitem[\protect\citeauthoryear{Zhao, Shumailov, Cui, Gao, Mullins, and
  Anderson}{Zhao et~al\mbox{.}}{2020}]%
        {zhao2020blackbox}
\bibfield{author}{\bibinfo{person}{Yiren Zhao}, \bibinfo{person}{Ilia
  Shumailov}, \bibinfo{person}{Han Cui}, \bibinfo{person}{Xitong Gao},
  \bibinfo{person}{Robert Mullins}, {and} \bibinfo{person}{Ross Anderson}.}
  \bibinfo{year}{2020}\natexlab{}.
\newblock \showarticletitle{Blackbox attacks on reinforcement learning agents
  using approximated temporal information}. In \bibinfo{booktitle}{\emph{2020
  50th Annual IEEE/IFIP International Conference on Dependable Systems and
  Networks Workshops (DSN-W)}}. IEEE, \bibinfo{pages}{16--24}.
\newblock


\end{thebibliography}


\clearpage
\appendix

\onecolumn
\begin{center}
{\huge \textbf{Appendix}}
\end{center}

\section{Missing Proofs}
\subsection{Proof of Lemma \ref{lm:lm1}}
\begin{lemma}
The gradient of $V_{\pi^{\AV}}(s_0|\bq^\AV(\widetilde{\pi}^{\VT}_\psi))$ w.r.t  $\psi$ can be computed as follows:
\begin{equation}
\nabla_\psi \left(V_{\pi^{\AV}}(s_0|\bq^\AV(\widetilde{\pi}^{\VT}_\psi))\right) = \bbE_{\tau\sim \widetilde{\pi}^{\VT}_\psi} \left[ R^{\AV}(\tau)\sum_{t} \nabla_{\psi}\log \widetilde{\pi}^{\VT}_\psi(a^{\VT}_t|s_t)\Big| \bq^\VT(\pi^\AV) \right]. 
\end{equation}
\end{lemma}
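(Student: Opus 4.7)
The plan is to treat the claim as a REINFORCE-style policy-gradient identity, but with an unusual role-reversal: the quantity being differentiated is the \emph{adversary's} expected return, while the parameters $\psi$ sit inside the \emph{victim imitator}, which enters only through the transition kernel $\bq^\AV(\widetilde{\pi}^{\VT}_\psi)$. The key observation is that, once we unroll the trajectory distribution over the joint state–action sequence $\tau=(s_0,a^\AV_0,a^\VT_0,s_1,a^\AV_1,a^\VT_1,\ldots)$, the $\psi$-dependence is entirely concentrated in factors of the form $\widetilde{\pi}^{\VT}_\psi(a^\VT_t\mid s_t)$, so a routine log-derivative manipulation will produce the desired score-function estimator.

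First I would write
\[
V_{\pi^{\AV}}(s_0\mid \bq^\AV(\widetilde{\pi}^{\VT}_\psi)) \;=\; \sum_{\tau} P_\psi(\tau)\, R^\AV(\tau),
\]
where
\[
P_\psi(\tau) \;=\; \prod_{t\ge 0} \pi^{\AV}(a^\AV_t\mid s_t)\,\widetilde{\pi}^{\VT}_\psi(a^\VT_t\mid s_t)\, P(s_{t+1}\mid s_t,a^\AV_t,a^\VT_t),
\]
using the decomposition of the adversary's transition $\bq^\AV$ already spelled out in the paper's system description. Note that $\pi^\AV$ and the underlying environment transition $P$ do not depend on $\psi$.

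Second, I would apply the log-trick $\nabla_\psi P_\psi(\tau) = P_\psi(\tau)\,\nabla_\psi \log P_\psi(\tau)$ and use additivity of $\log$ to kill the $\psi$-independent factors:
\[
\nabla_\psi \log P_\psi(\tau) \;=\; \sum_{t\ge 0} \nabla_\psi \log \widetilde{\pi}^{\VT}_\psi(a^\VT_t\mid s_t).
\]
Swapping $\nabla_\psi$ and the sum/expectation (which is standard for bounded, geometrically discounted returns and smoothly parameterized policies), I would obtain
\[
\nabla_\psi V_{\pi^{\AV}}(s_0\mid \bq^\AV(\widetilde{\pi}^{\VT}_\psi)) \;=\; \bbE_{\tau\sim \widetilde{\pi}^{\VT}_\psi}\!\left[R^\AV(\tau)\sum_{t\ge 0}\nabla_\psi\log\widetilde{\pi}^{\VT}_\psi(a^\VT_t\mid s_t)\right],
\]
which matches the claim (the conditioning $\bq^\VT(\pi^\AV)$ in the statement is just the sampling-distribution tag, since sampling $\tau$ under $\widetilde{\pi}^{\VT}_\psi$ interacting with $\pi^\AV$ through the environment $P$ is equivalent to sampling under the induced $\bq^\VT(\pi^\AV)$).

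The only mildly delicate step is the interchange of differentiation and the infinite-horizon trajectory sum; I would handle this in one sentence by invoking bounded rewards plus $\gamma<1$ together with standard smoothness of $\widetilde{\pi}^{\VT}_\psi$ in $\psi$ (the same regularity implicit in every policy-gradient proof). Everything else is bookkeeping, so I do not expect a genuine obstacle here; the novelty is conceptual rather than technical, namely recognizing that the imitator parameters act as a ``policy'' from the adversary's perspective even though they sit on the opposing agent.
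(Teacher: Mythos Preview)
Your proposal is correct and matches the paper's own proof essentially step for step: expand the adversary's value as a sum over joint trajectories, factor the trajectory probability into $\pi^\AV$, $\widetilde{\pi}^{\VT}_\psi$, and $P$, apply the log-derivative trick, and drop the $\psi$-independent factors. If anything, your version is slightly more careful in flagging the interchange of $\nabla_\psi$ with the infinite-horizon sum, which the paper leaves implicit.
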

\begin{proof}
We write the adversary's expected reward as
\begin{align}
\bbE_{\tau\sim \pi^\AV} \left[ \sum_{t} \gamma^t r^\AV(s_t) \Big| \bq^\AV(\widetilde{\pi}^\VT_\psi) \right] &= \sum_{\tau} R^\AV(\tau)\prod_t \pi^\AV(a^\AV_t|s_t) P(s_{t+1}|s_t,a^\AV)\nonumber \\
&=  \sum_{\tau = \{(s_t,a^\AV_t, a^\VT_t)\}} R^\AV(\tau)\prod_t \pi^\AV(a^\AV_t|s_t) \widetilde{\pi}^\VT_\psi(a^\VT_t|s_t)P(s_{t+1}|s_t,a^\AV_t,a^\VT_t)
\end{align}
Taking the derivative of  the above expected value w.r.t. $\psi$ we get
\begin{align}
\nabla_\psi\left(\bbE_{\tau\sim \pi^\AV} \left[ \sum_{t} \gamma^t r^\AV(s_t) \Big| \bq^\AV(\widetilde{\pi}^\VT_\psi) \right]\right) 
&=  \sum_{\tau = \{(s_t,a^\AV_t, a^\VT_t)\}} R^\AV(\tau)P(\tau)\nabla_{\psi}\log \left(\prod_t \pi^\AV(a^\AV_t|s_t) \widetilde{\pi}^\VT_\psi(a^\VT_t|s_t)P(s_{t+1}|s_t,a^\AV_t,a^\VT_t)\right)\nonumber\\
&= \sum_{\tau = \{(s_t,a^\AV_t, a^\VT_t)\}} R^\AV(\tau)P(\tau) \sum_t \nabla_\psi\log\widetilde{\pi}^\VT_\psi(a^\VT_t|s_t) \nonumber \\
&= \bbE_{\tau \sim \widetilde{\pi}^\VT_\psi} \left[R^\AV(\tau) \sum_t \nabla_\psi\log\widetilde{\pi}^\VT_\psi(a^\VT_t|s_t)\Big|~\bq^\VT(\pi^\AV) \right],\nonumber
\end{align}
which is the desired equality. 
\end{proof}


\subsection{Proof of Proposition \ref{prop:IM-gradient}}

\begin{proposition}
The gradient of the objective \eqref{prob:E-GAIL} w.r.t. $\psi$ can be computed as follows: 
\[
\bbE_{\tau\sim \widetilde{\pi}^{\VT}_{\psi}}\left[\sum_{t} \gamma^t \eta(s_t, a^{\AV}_t, a^{\VT}_t) \sum_t \nabla_{\psi} \log \widetilde{\pi}^{\VT}_\psi(a^{\VT}_t|s_t) \right] 
- \lambda \nabla_{\psi} H(\widetilde{\pi}^{\VT}_{\psi})
\]
where 
$
\eta(s_t, a^{\AV}_t, a^{\VT}_t) = \log (D(s_t, a^{\VT}_t)) - r^{\AV}(s_t).
$
\end{proposition}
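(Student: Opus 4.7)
The plan is to decompose $\phi^E$ into its constituent pieces and differentiate each one, reusing Lemma \ref{lm:lm1} for the adversary value term. Writing
\[
\phi^E(\widetilde{\pi}^{\VT}_\psi, D_w \mid \pi^\AV) = \underbrace{\bbE_{\tau \sim \widetilde{\pi}^{\VT}_\psi}\!\!\left[\sum_t \log D_w(s_t,a^\VT_t) \,\Big|\, \bq^\VT(\pi^\AV)\right]}_{(\mathrm{I})} + \underbrace{\bbE_{\tau \sim \pi^\VT}\!\!\left[\sum_t \log(1-D_w(s_t,a^\VT_t)) \,\Big|\, \bq^\VT(\pi^\AV)\right]}_{(\mathrm{II})} - \underbrace{\lambda H(\widetilde{\pi}^{\VT}_\psi)}_{(\mathrm{III})} - \underbrace{V_{\pi^\AV}(s_0 \mid \bq^\AV(\widetilde{\pi}^{\VT}_\psi))}_{(\mathrm{IV})},
\]
I would first note that (\textrm{II}) does not depend on $\psi$: the sampling distribution $\pi^\VT$ is the fixed victim policy and the dynamics $\bq^\VT(\pi^\AV)$ are determined by the (fixed, for this step) adversary $\pi^\AV$. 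So $\nabla_\psi (\mathrm{II}) = 0$.

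For (\textrm{I}), the dependence on $\psi$ enters only through the trajectory distribution induced by $\widetilde{\pi}^\VT_\psi$ acting in $\bq^\VT(\pi^\AV)$. I would apply the standard likelihood-ratio (REINFORCE) identity exactly as in the proof of Lemma \ref{lm:lm1}: expand the trajectory probability as a product over time of $\widetilde{\pi}^\VT_\psi(a^\VT_t|s_t)\,\pi^\AV(a^\AV_t|s_t)\,P(s_{t+1}|s_t,a^\AV_t,a^\VT_t)$, use $\nabla_\psi P(\tau) = P(\tau)\sum_t \nabla_\psi \log \widetilde{\pi}^\VT_\psi(a^\VT_t|s_t)$ (only the imitator factors survive), and collect terms to obtain
\[
\nabla_\psi (\mathrm{I}) \;=\; \bbE_{\tau \sim \widetilde{\pi}^\VT_\psi}\!\left[\sum_t \gamma^t \log D_w(s_t,a^\VT_t)\,\sum_t \nabla_\psi \log \widetilde{\pi}^\VT_\psi(a^\VT_t|s_t)\right].
\]
For (\textrm{IV}) I would quote Lemma \ref{lm:lm1} directly, which gives
\[
\nabla_\psi (\mathrm{IV}) \;=\; \bbE_{\tau \sim \widetilde{\pi}^\VT_\psi}\!\left[R^\AV(\tau)\sum_t \nabla_\psi \log \widetilde{\pi}^\VT_\psi(a^\VT_t|s_t)\right] = \bbE_{\tau \sim \widetilde{\pi}^\VT_\psi}\!\left[\sum_t \gamma^t r^\AV(s_t)\,\sum_t \nabla_\psi \log \widetilde{\pi}^\VT_\psi(a^\VT_t|s_t)\right].
\]
Subtracting $\nabla_\psi (\mathrm{IV})$ from $\nabla_\psi (\mathrm{I})$, the two summations combine linearly inside the expectation because both share the same score $\sum_t \nabla_\psi \log \widetilde{\pi}^\VT_\psi(a^\VT_t|s_t)$; this yields the weight $\gamma^t\bigl(\log D_w(s_t,a^\VT_t) - r^\AV(s_t)\bigr) = \gamma^t \eta(s_t,a^\AV_t,a^\VT_t)$. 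Finally, term (\textrm{III}) contributes the explicit entropy gradient $-\lambda \nabla_\psi H(\widetilde{\pi}^\VT_\psi)$, giving the claimed formula.

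The only subtle point — and what I would flag as the main thing to get right — is the bookkeeping for the $\psi$ dependence in (\textrm{I}): the integrand $\log D_w(s_t,a^\VT_t)$ itself does not depend on $\psi$, but the trajectory distribution does, through both the action draws $a^\VT_t \sim \widetilde{\pi}^\VT_\psi(\cdot|s_t)$ and, implicitly, through the induced state visitation. The REINFORCE derivation handles both simultaneously because the full trajectory measure factors through $\widetilde{\pi}^\VT_\psi$ at every time step, so no separate accounting of the state-visitation gradient is needed. Once that is in place, the collapse of (\textrm{I}) and (\textrm{IV}) into a single expectation with weight $\gamma^t \eta$ is purely algebraic.
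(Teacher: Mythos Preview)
Your proof is correct and follows essentially the same route as the paper: compute the REINFORCE gradient of the GAIL term (I), invoke Lemma \ref{lm:lm1} for the adversary value term (IV), and combine the two into a single expectation with weight $\gamma^t\eta$, with the entropy term handled separately and term (II) vanishing since it is independent of $\psi$. If anything, your decomposition is more explicit than the paper's, which compresses (I)--(III) into ``the first part of \eqref{prob:E-GAIL}'' and then combines with Lemma \ref{lm:lm1} in one line.
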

\begin{proof}
The first part of \eqref{prob:E-GAIL} is a standard long-term reward whose gradients can be computed as
\[
 \nabla_\psi \left(\phi( \widetilde{\pi}^\VT_\psi,D)\right) = \bbE_{\tau\sim \widetilde{\pi}^\VT_\psi}\left[ \sum_{t}\gamma^t \log (D(s_t,a^\AV_t))\sum_t \log \widetilde{\pi}^\VT_{\psi}(a^\VT_t|s_t)\Big|~ \bq^\VT(\pi^\AV)\right]\nonumber
\]
Combine this with the derivation in Lemma \ref{lm:lm1} we get
\[
\nabla_\psi \left(\phi^E( \widetilde{\pi}_\psi,D)\right) = \bbE_{\tau\sim \widetilde{\pi}_\psi}\left[ \sum_{t}\gamma^t \Big(\log (D(s_t,a^\AV_t)) - r^\AV(s_t)\Big)\sum_t \log \widetilde{\pi}^\VT_{\psi}(a^\VT_t|s_t)\Big|~ \bq^\VT(\pi^\AV)\right],
\]
as desired. 
\end{proof}


\subsection{Proof of Corollary \ref{coro:c1}}
\begin{corollary}
The enhanced imitation learning model \eqref{prob:E-GAIL} is equivalent to GAIL with the modified discriminator objective:
\begin{align}
\max_{\widetilde{\pi}^{\VT}} \; & \min_{D\in [0,1]}  \Bigg\{\phi^E(\widetilde{\pi}^{\VT},D|\pi^\AV) = \bbE_{\tau \sim \widetilde{\pi}^{\VT}} \left[\sum_{t} \eta(s_t, a^{\AV}_t, a^{\VT}_t)  ~\big| \bq^\VT(\pi^{\AV})\right]+ \bbE_{\tau \sim {\pi}^{\VT}} \left[\sum_{t} \log (1 - D(s_t,a^{\VT}_t)) ~\big| \bq^\VT(\pi^{\AV})\right]  - \lambda H(\widetilde{\pi}^{\VT}) \Bigg\}
\end{align}
\end{corollary}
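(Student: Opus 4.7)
The plan is to observe that the only difference between $\phi^E$ and the claimed modified objective lies in the adversary's value function term $V_{\pi^{\AV}}(s_0|\bq^\AV(\widetilde{\pi}^{\VT}_\psi))$, and that this term can be rewritten as an expectation under the very same joint trajectory distribution that appears in the first expectation of $\phi$. Once this is done, linearity of expectation collapses the two expectations into a single one with integrand $\eta(s_t,a^\AV_t,a^\VT_t)=\log D_w(s_t,a^\VT_t) - r^\AV(s_t)$, which is exactly what the corollary asserts.

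First, I would expand $V_{\pi^{\AV}}(s_0|\bq^\AV(\widetilde{\pi}^{\VT}_\psi))$ using its definition and the derivation already carried out in the proof of Lemma~\ref{lm:lm1}, which shows that this value function equals the expected discounted sum of $r^\AV(s_t)$ over trajectories of the two-player game in which the adversary follows $\pi^\AV$ and the imitator follows $\widetilde{\pi}^\VT_\psi$:
\[
V_{\pi^{\AV}}(s_0|\bq^\AV(\widetilde{\pi}^{\VT}_\psi)) = \bbE_{\tau \sim \widetilde{\pi}^\VT_\psi}\left[\sum\nolimits_t \gamma^t r^\AV(s_t)\;\Big|\;\bq^\VT(\pi^\AV)\right].
\]
This is the key reconciliation step: the two descriptions ``$\tau\sim\pi^\AV$ with dynamics $\bq^\AV(\widetilde{\pi}^\VT_\psi)$'' and ``$\tau\sim\widetilde{\pi}^\VT_\psi$ with dynamics $\bq^\VT(\pi^\AV)$'' refer to precisely the same joint law on sequences $(s_t,a^\AV_t,a^\VT_t)$; they just foreground different players.

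Next, I would substitute this reformulation into $\phi^E(\widetilde{\pi}^{\VT}_\psi,D_w|\pi^\AV) = \phi(\widetilde{\pi}^{\VT}_\psi,D_w) - V_{\pi^{\AV}}(s_0|\bq^\AV(\widetilde{\pi}^{\VT}_\psi))$ and combine the two expectations that are now taken under the same joint distribution using linearity of expectation. The result is the first expectation of the stated modified objective, with integrand $\log D_w(s_t,a^\VT_t) - r^\AV(s_t) = \eta(s_t,a^\AV_t,a^\VT_t)$. The second expectation in $\phi$, taken over true victim trajectories $\tau\sim \pi^\VT$, does not involve the adversary's value function and carries over unchanged, as does the entropy regularizer $-\lambda H(\widetilde{\pi}^\VT_\psi)$. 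Finally, since $D_w$ does not appear in $V_{\pi^{\AV}}(s_0|\bq^\AV(\widetilde{\pi}^{\VT}_\psi))$, the inner minimization over $D_w$ is identical in the two formulations, so the full saddle-point problems coincide.

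There is no real technical obstacle here; the only delicate point is the notational one of recognizing that the two expectations involved really are over the same joint trajectory distribution induced by the pair $(\pi^\AV,\widetilde{\pi}^\VT_\psi)$ in the underlying two-player game. As a sanity check, one can verify that differentiating the combined expression with respect to $\psi$ reproduces exactly the policy gradient derived in Proposition~\ref{prop:IM-gradient}, which confirms that nothing has been lost or double-counted in the reformulation.
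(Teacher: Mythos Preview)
Your proposal is correct and follows essentially the same approach as the paper: the paper's proof also rewrites $V_{\pi^{\AV}}(s_0|\bq^\AV(\widetilde{\pi}^{\VT}_\psi))$ as $\bbE_{\tau\sim \widetilde{\pi}^\VT}\big[\sum_t \gamma^t r^\AV(s_t)\,\big|\,\bq^\VT(\pi^\AV)\big]$ by expanding the joint trajectory distribution (exactly your ``same joint law, different player foregrounded'' observation), and then notes that the equivalence follows directly. The paper additionally remarks that the result can alternatively be deduced from Proposition~\ref{prop:IM-gradient}, which you invoke as a sanity check rather than a primary route.
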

\begin{proof}
The corollary can be deduced from Proposition \ref{prop:IM-gradient}, or one can write 
\begin{align}
V_{\pi^{\AV}}(s_0|\bq^\AV(\widetilde{\pi}^{\VT}_\psi)) = \bbE_{\tau\sim \pi^\AV} \left[ \sum_{t} \gamma^t r^\AV(s_t) \Big| \bq^\AV(\widetilde{\pi}^\VT_\psi) \right] &= \sum_{\tau} R^\AV(\tau)\prod_t \pi^\AV(a^\AV_t|s_t) P(s_{t+1}|s_t,a^\AV)\nonumber \\
&=  \sum_{\tau = \{(s_t,a^\AV_t, a^\VT_t)\}} R^\AV(\tau)\prod_t \pi^\AV(a^\AV_t|s_t) \widetilde{\pi}^\VT_\psi(a^\VT_t|s_t)P(s_{t+1}|s_t,a^\AV_t,a^\VT_t) \nonumber \\
&= \bbE_{\tau\sim \widetilde{\pi}^\VT} \left[ \sum_{t} \gamma^t r^\AV(s_t) \Big| \bq^\VT({\pi}^\AV) \right],\nonumber
\end{align}
which directly leads the desired equivalence. 
\end{proof}

\subsection{Proof of Lemma \ref{lm:lm2}}
\begin{lemma}
Given two adversary policies $\pi^\AV$ and $\widetilde{\pi}^\AV$, let  $\cH = \max_{s} \left\{\left|V_{\pi^\VT}({s}|~\bq^\VT(\pi^{\AV}))\right|\right\} $
\[
\left|\Gamma(\widetilde{\pi}^{\AV})  - \Gamma(\pi^{\AV}) \right| \leq   \frac{\gamma \cH\sqrt{2\ln 2}}{1-\gamma}  \max_{s \in S} \left\{\sqrt{D_\KL(\pi^\AV (\cdot|s)||\widetilde{\pi}^\AV (\cdot|s))} \right\}
\]
where $D_\KL(\pi^\AV (\cdot|s)||\widetilde{\pi}^\AV (\cdot|s))$ is the KL divergence between $\widetilde{\pi}^\AV$ and  ${\pi}^\AV$.
\end{lemma}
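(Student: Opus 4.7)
The plan is to mirror the classical Kakade--Langford performance-difference argument, built around the competitive advantage function $A_{\pi^\AV}(s,\bar s) = r^\VT(s) + \gamma V_{\pi^\VT}(\bar s|\bq^\VT(\pi^\AV)) - V_{\pi^\VT}(s|\bq^\VT(\pi^\AV))$ already introduced in the paper. The first step is to verify the telescoping identity
\[
\Gamma(\widetilde{\pi}^{\AV}) - \Gamma(\pi^{\AV}) = \bbE_{\tau \sim \pi^\VT}\Big[\sum_{t=0}^\infty \gamma^t A_{\pi^\AV}(s_t,s_{t+1})\ \Big|\ \bq^\VT(\widetilde{\pi}^\AV)\Big].
\]
Expanding $A_{\pi^\AV}$ produces $\sum_t \gamma^t r^\VT(s_t) + \sum_t[\gamma^{t+1}V_{\pi^\VT}(s_{t+1}) - \gamma^t V_{\pi^\VT}(s_t)]$; under the trajectory distribution induced by $(\pi^\VT,\bq^\VT(\widetilde{\pi}^\AV))$ the first sum has expectation $\Gamma(\widetilde{\pi}^\AV)$, while the telescoping sum collapses to $-V_{\pi^\VT}(s_0|\bq^\VT(\pi^\AV)) = -\Gamma(\pi^\AV)$.

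Next I would control $\big|\bbE_{s_{t+1}\sim \bq^\VT(\widetilde{\pi}^\AV)(\cdot|s_t,a^\VT_t)}[A_{\pi^\AV}(s_t,s_{t+1})]\big|$ at each $(s_t,a^\VT_t)$. The Bellman identity forces $\bbE_{s_{t+1}\sim \bq^\VT(\pi^\AV)(\cdot|s_t,a^\VT_t)}[A_{\pi^\AV}(s_t,s_{t+1})] = 0$, so subtracting this zero-mean expression turns the quantity of interest into
\[
\sum_{s'}\Big[\bq^\VT(\widetilde{\pi}^\AV)(s'|s_t,a^\VT_t) - \bq^\VT(\pi^\AV)(s'|s_t,a^\VT_t)\Big]\,\gamma V_{\pi^\VT}(s'|\bq^\VT(\pi^\AV));
\]
the $s_t$-only pieces of $A_{\pi^\AV}$ drop out because both conditional distributions sum to one. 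Combined with $|V_{\pi^\VT}|\leq \cH$, this gives the pointwise bound $\gamma\cH\cdot\|\bq^\VT(\widetilde{\pi}^\AV)(\cdot|s_t,a^\VT_t) - \bq^\VT(\pi^\AV)(\cdot|s_t,a^\VT_t)\|_1$. Writing $\bq^\VT(s'|s,a^\VT)=\sum_{a^\AV}\pi^\AV(a^\AV|s)P(s'|s,a^\AV,a^\VT)$ and applying the triangle inequality collapses the dynamics $L_1$ gap to the policy $L_1$ gap: $\|\bq^\VT(\widetilde{\pi}^\AV)(\cdot|s,a^\VT) - \bq^\VT(\pi^\AV)(\cdot|s,a^\VT)\|_1 \leq \|\widetilde{\pi}^\AV(\cdot|s) - \pi^\AV(\cdot|s)\|_1$.

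The remaining steps are standard: invoke Pinsker's inequality $\|\pi^\AV(\cdot|s) - \widetilde{\pi}^\AV(\cdot|s)\|_1 \leq \sqrt{2\ln 2\cdot D_\KL(\pi^\AV(\cdot|s)\,\|\,\widetilde{\pi}^\AV(\cdot|s))}$, take the supremum over $s$, and sum $\sum_{t\geq 0}\gamma^{t+1} = \gamma/(1-\gamma)$ to recover the stated bound.

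The main obstacle I anticipate is the bookkeeping in the first step: unlike single-agent RL, the value function $V_{\pi^\VT}$ is defined under dynamics $\bq^\VT(\pi^\AV)$ while the outer trajectory expectation runs under $\bq^\VT(\widetilde{\pi}^\AV)$, so the telescoping must be carried out with the ``wrong'' dynamics. This mismatch is exactly what makes the shift-by-constant trick in step two essential: using the zero-mean property of $A_{\pi^\AV}$ under $\bq^\VT(\pi^\AV)$ eliminates the state-only part of the advantage and yields the tight factor $\gamma\cH$ rather than the looser $\cO(\cH)$ bound that a naive $\ell_\infty$ argument on $A_{\pi^\AV}$ would produce.
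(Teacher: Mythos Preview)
Your proposal is correct and follows essentially the same route as the paper: both establish the telescoping identity $\Gamma(\widetilde{\pi}^{\AV}) - \Gamma(\pi^{\AV}) = \bbE_{\tau\sim\pi^\VT,\bq^\VT(\widetilde{\pi}^\AV)}\big[\sum_t \gamma^t A_{\pi^\AV}(s_t,s_{t+1})\big]$, subtract the zero-mean of $A_{\pi^\AV}$ under $\bq^\VT(\pi^\AV)$ to isolate the $\gamma V_{\pi^\VT}(\bar s)$ term, bound by $\gamma\cH$ times the $L_1$ distance between the induced next-state distributions, reduce that to the policy $L_1$ gap, apply Pinsker, and sum the geometric series. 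Your intermediate step making explicit the inequality $\|\bq^\VT(\widetilde{\pi}^\AV)(\cdot|s,a^\VT)-\bq^\VT(\pi^\AV)(\cdot|s,a^\VT)\|_1\leq\|\widetilde{\pi}^\AV(\cdot|s)-\pi^\AV(\cdot|s)\|_1$ is a detail the paper skips, but the argument is otherwise identical.
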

\begin{proof}
Recall that we define  $\Gamma(\pi^\AV)$ as  the  expected reward of the victim with policy $\pi^\VT$ when the adversary follows policy $\pi^\AV$
\[
\Gamma(\pi^\AV) = \bbE_{\tau \sim \pi^\VT} \left[\sum_{t} \gamma^t r^\VT(s_t) \Big| \bq^\VT(\pi^{\AV})\right].  
\]
Given two adversary policies $\pi^\AV$ and $\widetilde{\pi}^{\AV}$, we define the following victim's \textit{competitive  advantage function} $A_{\pi^{\AV}}(s,\overline{s})$ for two states  $s,\overline{s} \in \cS$
\[
A_{\pi^{\AV}}(s,\overline{s}) = r^\VT(s)+ \gamma V_{\pi^\VT}(\overline{s}|~\bq^\VT(\pi^{\AV})) - V_{\pi^\VT}({s}|~\bq^\VT(\pi^{\AV})),
\]
where $V_{\pi^\VT}({s}|~\bq^\VT(\pi^{\AV})) =\bbE_{\tau \sim \pi^\VT} \left[\sum_{t} \gamma^t r^\VT(s_t) \Big| \bq(\pi^{\AV}), s_0 = s\right].$
We then compute the advantage of the adversary policy $\pi^{\AV}$ over $\widetilde{\pi}^{\AV}$ but in terms of victim's expected rewards  as follows
\begin{align}
    \Gamma(\widetilde{\pi}^{\AV})  - \Gamma(\pi^{\AV}) &=  \bbE_{\tau \sim \pi^\VT} \left[\sum_{t} \gamma^t r^\VT(s_t) \Big| \bq^\VT(\widetilde{\pi}^{\AV})\right]  -  V_{\pi^\VT}(s_0|~\bq^\VT(\pi^{\AV})) \nonumber \\
    & \stackrel{(a)}{=} \bbE_{\tau \sim \pi^\VT, \bq^\VT(\widetilde{\pi}^{\AV})} \left[\sum_{t} \gamma^t r^\VT(s_t)\right]  + \bbE_{\tau \sim \pi^{\VT}, \bq^\VT(\widetilde{\pi}^{\AV})} \left[\sum_{t}\gamma^t( \gamma V_{\pi^\VT}(s_{t+1}|~\bq^\VT(\pi^{\AV})) - V_{\pi^\VT}(s_{t}|~\bq^\VT(\pi^{\AV})))\right] \nonumber \\
     &=   \bbE_{\tau \sim \pi^{\VT}, \bq^\VT(\widetilde{\pi}^{\AV})} \left[\sum_{t}\gamma^t\Big( r^\VT(s_t) +\gamma V_{\pi^\VT}(s_{t+1}|~\bq^\VT(\pi^{\AV})) - V_{\pi^\VT}(s_{t}|~\bq^\VT(\pi^{\AV}))\Big)\right] \nonumber\\
     & \stackrel{(b)}{=}  \bbE_{\tau \sim \pi^{\VT}, \bq^\VT(\widetilde{\pi}^{\AV})} \left[\sum_{t}\gamma^t\Big(A_{\pi^\AV} (s_t,s_{t+1})\Big)\right],\label{eq:lm2-proof-eq1}
\end{align}
where $(a)$ is due to the fact that 
\[
\sum_{t}\gamma^t( \gamma V_{\pi^\VT}(s_{t+1}|~\bq^\VT(\pi^{\AV})) - V_{\pi^\VT}(s_{t}|~\bq^\VT(\pi^{\AV})))  = V_{\pi^\VT}(s_0|~\bq^\VT(\pi^{\AV})), 
\]
and $(b)$ is due to the definition of the competitive advantage function $A_{\pi^\AV} (s_t,s_{t+1})$. 
Here we note that 
\[
\bbE_{\overline{s}^{\VT} \sim \pi^\VT,\bq(\pi^\AV)|s^\VT}\Big[A_{\pi^{\AV}}(s^{\VT},\overline{s}^{\VT})\Big] = \bbE_{\overline{s}^{\VT} \sim \pi^\VT,\bq(\pi^\AV)|s^\VT}\Big[r(s^\VT)+ \gamma V_{\pi^\VT}({s}^{\VT}|~\bq(\pi^{\AV})) - V_{\pi^\VT}(\overline{s}^{\VT}|~\bq(\pi^{\AV}))\Big] = 0. 
\]
We further have the following bound for the competitive advantage function. 
\begin{align}
\bbE_{\overline{s} \sim \pi^\VT,\bq^\VT(\widetilde{\pi}^\AV)|s}\Big[A_{\pi^{\AV}}(s,\overline{s})\Big] &=  \bbE_{\overline{s} \sim \pi^\VT, \bq^\VT(\widetilde{\pi}^{\AV})}\left[ r^\VT(s)+ \gamma V_{\pi^\VT}(\overline{s}|~\bq^\VT(\pi^{\AV}))\right] - \bbE_{\overline{s} \sim \pi^\VT, \bq^\VT({\pi}^{\AV})}\left[ r^\VT(s)+ \gamma V_{\pi^\VT}(\overline{s}|~\bq^\VT(\pi^{\AV}))\right] \nonumber \\
&=\gamma \left(\sum_{\overline{s}} V_{\pi^\VT}(\overline{s}|~\bq^\VT(\pi^{\AV})) \Big(P(\overline{s}|s,\pi^\VT,\widetilde{\pi}^\AV) - P(\overline{s}|s,\pi^\VT,\pi^\AV)\Big)\right)\nonumber \\
&\leq \gamma \cH \bbE_{\overline{s} \sim \pi^\VT| s}\left[ ||\pi^\AV (\cdot|\overline{s}) -\pi^\AV (\cdot|\overline{s})||_1\right] \nonumber  \\
&\stackrel{(c)}{\leq} \gamma \cH ~ \max_{s \in \cS} \left\{\sqrt{2\ln 2 \KL(\pi^\AV (\cdot|s)||\widetilde{\pi}^\AV (\cdot|s))} \right\},\label{eq:lm2-proof-eq2}
\end{align}
where $\cH = \max_{s}\{|V_{\pi^\VT} (s|\bq^\VT(\pi^\AV))|\}$ and $(c)$ is due to the inequality $||p-q||_1 \leq \sqrt{2\ln 2 D_\KL (p||q)}$ for two distributions $p,q$ \citep{thomas2006elements}.
Moreover,  if we define
\[\epsilon = 
\max_{s}\left\{\Big|\bbE_{\overline{s} \sim \pi^\VT,\bq(\widetilde{\pi}^\AV)|s}\Big[A_{\pi^{\AV}}(s,\overline{s})\Big]\Big|\right\}, 
\]
then $\epsilon \rightarrow 0$ if $\widetilde{\pi}^{\AV} \rightarrow {\pi}^{\AV}$. Moreover,  we can see from  \eqref{eq:lm2-proof-eq1} that 
\begin{equation}\label{eq:lm2-proof-eq3}
\left|\Gamma(\widetilde{\pi}^{\AV})  - \Gamma(\pi^{\AV}) \right|  =   \bbE_{\tau \sim \pi^{\VT}, \bq^\VT(\widetilde{\pi}^{\AV})} \left[\sum_{t}\gamma^t\Big(A_{\pi^\AV} (s_t,s_{t+1})\Big)\right]\leq  \bbE_{\tau \sim \pi^{\VT}, \bq(\widetilde{\pi}^{\AV})} \left[\sum_{t=0}^{\infty}\gamma^t\epsilon\right] = \frac{\epsilon}{1-\gamma}.
\end{equation}
Putting \eqref{eq:lm2-proof-eq2} and \eqref{eq:lm2-proof-eq3} together, we can bound the gap $\left|\Gamma(\widetilde{\pi}^{\AV})  - \Gamma(\pi^{\AV}) \right|$ as  
\begin{align}
     \left|\Gamma(\widetilde{\pi}^{\AV})  - \Gamma(\pi^{\AV})\right| &\leq \frac{\max_{s}\left\{\Big|\bbE_{\overline{s} \sim \pi^\VT,\bq(\widetilde{\pi}^\AV)|s}\Big[A_{\pi^{\AV}}(s,\overline{s})\Big]\Big|\right\}}{1-\gamma} \nonumber \\
    &\leq  \frac{\gamma \cH}{1-\gamma}
 ~ \max_{s \in \cS} \left\{\sqrt{2\ln 2 \KL(\pi^\AV (\cdot|s)||\widetilde{\pi}^\AV (\cdot|s))} \right\},\nonumber
\end{align}
as desired. 
\end{proof}

\subsection{Proof of Theorem \ref{th:th1}}
\begin{theorem}
Suppose that  discriminator's network model $D$ of \eqref{prob:E-GAIL} varies within $[D^L,D^U] \subset [0,1]$. Let $\pi^{\AV*}$ be the target adversary policy that we want to train the imitation policy with, and let $(\widetilde{\pi}^{\VT*}, D^{\VT*}) $ be the imitation policy and the imitator's discriminator trained with another adversary $\pi^\AV$, we have the following performance guarantee for $\widetilde{\pi}^{\VT*}$.
\begin{align}
\Big| \phi^E(\widetilde{\pi}^{\VT*},D^{\VT*}|{\pi}^{\AV*})& - \max_{\widetilde{\pi}^{\VT}}\min_{D}\{\phi^E(\widetilde{\pi}^{\VT},D|{\pi}^{\AV*})\}\Big| \leq 2K \max_{s \in S} \left\{\sqrt{D_\KL(\pi^\AV (\cdot|s)||{\pi}^{\AV*} (\cdot|s))} \right\},
\end{align}
where 
\[
K = \frac{\gamma \sqrt{2\ln 2} \left(\max_{s}\{r^\VT(s)\} - \log (D^L-D^L D^U)\right)}{(1-\gamma)^2}.
\]
\end{theorem}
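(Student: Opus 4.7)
The plan is to decompose the argument into a uniform perturbation bound on $\phi^E$ as a function of the adversary policy, followed by a standard saddle-point stability step. The key preliminary observation, via Corollary \ref{coro:c1}, is that
\[
\phi^E(\widetilde{\pi}^{\VT},D\,|\,\pi^{\AV}) \;=\; \bbE_{\tau\sim\widetilde{\pi}^{\VT}}\!\left[\sum_t\gamma^t \eta(s_t,a^\AV_t,a^\VT_t)\,\big|\,\bq^\VT(\pi^\AV)\right] + \bbE_{\tau\sim\pi^{\VT}}\!\left[\sum_t\gamma^t \log(1-D(s_t,a^\VT_t))\,\big|\,\bq^\VT(\pi^\AV)\right] - \lambda H(\widetilde{\pi}^\VT),
\]
so $\pi^\AV$ enters only through the dynamics $\bq^\VT(\pi^\AV)$ in the two expectations (the entropy term is independent of $\pi^\AV$). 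Since $D\in[D^L,D^U]$, both ``reward'' functions are uniformly bounded: $|\eta|\le \max_s r^{\AV}(s) - \log D^L$ and $|\log(1-D)|\le -\log(1-D^U)$, yielding a combined bound that eventually produces the $-\log(D^L-D^L D^U)=-\log(D^L(1-D^U))$ factor in $K$.

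The first step is to establish the uniform bound
\[
\sup_{\widetilde{\pi}^{\VT},\,D\in[D^L,D^U]^{\,|\cS||\cA^\VT|}} \Big|\phi^E(\widetilde{\pi}^{\VT},D\,|\,\pi^{\AV*}) - \phi^E(\widetilde{\pi}^{\VT},D\,|\,\pi^{\AV})\Big| \;\le\; K\,\max_{s\in\cS}\sqrt{D_\KL(\pi^\AV(\cdot|s)\,\|\,\pi^{\AV*}(\cdot|s))}.
\]
To obtain this, I would replay the proof of Lemma~\ref{lm:lm2} in a slightly generalized form: for any fixed policy $\mu$ and any bounded reward $\tilde r$ with $|\tilde r|\le R_{\max}$, the competitive advantage construction gives
\[
\Big|\bbE_{\tau\sim\mu}\!\big[\sum_t\gamma^t \tilde r\,\big|\,\bq^\VT(\pi^{\AV*})\big] - \bbE_{\tau\sim\mu}\!\big[\sum_t\gamma^t \tilde r\,\big|\,\bq^\VT(\pi^{\AV})\big]\Big| \;\le\; \tfrac{\gamma\sqrt{2\ln 2}\, R_{\max}}{(1-\gamma)^2}\,\max_s\sqrt{D_\KL(\pi^\AV(\cdot|s)\,\|\,\pi^{\AV*}(\cdot|s))},
\]
where the extra $1/(1-\gamma)$ factor (relative to Lemma~\ref{lm:lm2}'s statement) arises because $\cH\le R_{\max}/(1-\gamma)$ when rewards are bounded by $R_{\max}$. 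Applying this to each of the two expectations appearing in $\phi^E$, with $R_{\max}$'s coming from the $\eta$ and $\log(1-D)$ bounds above, and summing, produces exactly the constant $K$ stated in the theorem.

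The second step is a standard max--min stability argument. Write $V^\AV=\max_{\widetilde{\pi}^\VT}\min_D \phi^E(\widetilde{\pi}^\VT,D\,|\,\pi^\AV)=\phi^E(\widetilde{\pi}^{\VT*},D^{\VT*}\,|\,\pi^\AV)$ and $V^{\AV*}=\max_{\widetilde{\pi}^\VT}\min_D \phi^E(\widetilde{\pi}^\VT,D\,|\,\pi^{\AV*})$, and let $\delta=K\max_s\sqrt{D_\KL(\pi^\AV(\cdot|s)\,\|\,\pi^{\AV*}(\cdot|s))}$. Uniform $\delta$-closeness of the two objectives implies $|V^\AV-V^{\AV*}|\le \delta$ (the $\max$-$\min$ operator is $1$-Lipschitz in the sup norm), and uniform closeness evaluated at the particular point $(\widetilde{\pi}^{\VT*},D^{\VT*})$ gives $|\phi^E(\widetilde{\pi}^{\VT*},D^{\VT*}\,|\,\pi^{\AV*}) - V^\AV|\le\delta$. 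The triangle inequality then delivers the announced $2\delta=2K\max_s\sqrt{D_\KL}$ bound.

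The main obstacle is Step~1: carefully generalizing the competitive-advantage telescoping argument of Lemma~\ref{lm:lm2} to reward functions that are not $r^\VT$, and tracking the contributions of $\eta$ and $\log(1-D)$ so that the uniform bounds merge cleanly into $\max_s r^\VT(s)-\log(D^L(1-D^U))$ (or the closely related quantity in the paper's statement). Once this reward-agnostic version of Lemma~\ref{lm:lm2} is in hand, Step~2 is essentially routine continuity of saddle values under sup-norm perturbations.
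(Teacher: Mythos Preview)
Your proposal is correct and follows essentially the same approach as the paper: a reward-agnostic generalization of Lemma~\ref{lm:lm2} to obtain a uniform $K\epsilon$ bound on $|\phi^E(\widetilde{\pi}^{\VT},D\,|\,\pi^{\AV*}) - \phi^E(\widetilde{\pi}^{\VT},D\,|\,\pi^{\AV})|$, followed by a saddle-point stability argument yielding the factor $2$. The paper carries out Step~2 via explicit case splits (first on the sign of $\min_D\phi^E(\cdot|\pi^\AV)-\min_D\phi^E(\cdot|\pi^{\AV*})$, then on the sign of the final gap), whereas your $1$-Lipschitz-plus-triangle-inequality phrasing is a cleaner packaging of the same computation; you also correctly flag that the reward appearing in $K$ should be $r^\AV$ (coming from $\eta$), consistent with the paper's proof though at odds with the theorem statement's $r^\VT$.
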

\begin{proof}
From the proof of Lemma \ref{lm:lm2} we can deduce the following, for any policies $\pi^\AV$, $\widetilde{\pi}^\AV$, and any reward function $r^\VT(a)$, 
\begin{align}
    &\left|\bbE_{\tau \sim \pi^\VT} \left[\sum_{t} \gamma^t r^\VT(s_t) \Big| \bq^\VT(\pi^{\AV})\right]  -  \bbE_{\tau \sim \pi^\VT} \left[\sum_{t} \gamma^t r^\VT(s_t) \Big| \bq^\VT(\widetilde{\pi}^{\AV})\right]  \right|  \nonumber  \\ 
    & \qquad \qquad \leq  \frac{\gamma}{1-\gamma} \max_{s}\{|V_{\pi^\VT} (s|\bq^\VT(\pi^\AV))|\}  \max_{s \in \cS} \left\{\sqrt{2\ln 2 \KL(\pi^\AV (\cdot|s)||\widetilde{\pi}^\AV (\cdot|s))} \right\} \nonumber \\
    &\qquad\qquad \leq \frac{\gamma \max_s |r^\VT(s)|}{(1-\gamma)^2} \left\{\sqrt{2\ln 2 \KL(\pi^\AV (\cdot|s)||\widetilde{\pi}^\AV (\cdot|s))} \right\}. \label{eq:this-for-theorem2}
\end{align}
We now using this to bound the
gap $  \left|\phi^E(\widetilde{\pi}^{\VT},D|\widetilde{\pi}^\AV) -  \phi^E(\widetilde{\pi}^{\VT},D|\pi^\AV)\right|$ as follows. We first write 
\begin{align}
    &\left|\phi^E(\widetilde{\pi}^{\VT},D|\widetilde{\pi}^\AV) -  \phi^E(\widetilde{\pi}^{\VT},D|\pi^\AV)\right|\leq \left|\bbE_{\tau \sim \widetilde{\pi}^{\VT}} \left[\sum_{t} \gamma^t(\log (D) - r(s_t))  ~\big| \bq^\VT(\widetilde{\pi}^{\AV})\right] - \bbE_{\tau \sim \widetilde{\pi}^{\VT}} \left[\sum_{t} \gamma^t(\log (D) - r(s_t))  ~\big| \bq^\VT(\pi^{\AV})\right] \right| + \nonumber \\
    &\qquad\qquad\qquad+ \left|\bbE_{\tau \sim {\pi}^{\VT}} \left[\sum_{t} \gamma^t\log (1 - D) ~\big| \bq^\VT(\widetilde{\pi}^{\AV})\right] - \bbE_{\tau \sim {\pi}^{\VT}} \left[\sum_{t} \gamma^t\log (1 - D) ~\big| \bq^\VT(\pi^{\AV})\right]\right|\nonumber \\
    &\leq  \frac{\gamma \sqrt{2\ln 2}}{(1-\gamma)^2}  \max_{s} \left\{\sqrt{\KL(\pi^\AV (\cdot|s)||\widetilde{\pi}^\AV (\cdot|s))} \right\} \left(\max_{s,D}\{|r^\AV(s) - \log (D)\} + \max_D |\log(1-D)|\right)\nonumber\\
     &\stackrel{(d)}{\leq}  \frac{\gamma \sqrt{2\ln 2}}{(1-\gamma)^2}  \max_{s} \left\{\sqrt{\KL(\pi^\AV (\cdot|s)||\widetilde{\pi}^\AV (\cdot|s))} \right\} \left(\max_{s}\{r(s^{\AV})\} - \log (D^L) - \log(1-D^U)\right),\label{eq:gap-phiE}
\end{align}
where $(d)$ is because $D\in [D^L,D^U]$.
For ease of notation, let 
\[
K = \frac{\gamma \sqrt{2\ln 2} \left(\max_{s^{\AV}}\{r(s^{\AV})\} - \log (D^L) - \log(1-D^U)\right)}{(1-\gamma)^2};~~\epsilon = \sqrt{D_\KL(\pi^\AV (\cdot|s)||\widetilde{\pi}^\AV (\cdot|s))}.
\]
We first try to bound  $\left|\min_{D}\{\phi^E(\widetilde{\pi}^{\VT},D|{\pi}^\AV)\}  - \min_{D}\{\phi^E(\widetilde{\pi}^{\VT},D|{\pi}^{\AV*})\}  \right|$ as follows.  
\begin{itemize}
    \item If $\min_{D}\{\phi^E(\widetilde{\pi}^{\VT},D|{\pi}^\AV)\}  \geq  \min_{D}\{\phi^E(\widetilde{\pi}^{\VT},D|{\pi}^{\AV*})\}$, then we let $D^* = \text{argmin}_D \{\phi^E(\widetilde{\pi}^{\VT},D|{\pi}^{\AV*})\}$ to have
    \begin{align}
        \left|\min_{D}\{\phi^E(\widetilde{\pi}^{\VT},D|{\pi}^\AV)\}  - \min_{D}\{\phi^E(\widetilde{\pi}^{\VT},D|{\pi}^{\AV*})\}  \right| &=\min_{D}\{\phi^E(\widetilde{\pi}^{\VT},D|{\pi}^\AV)\}  - \min_{D}\{\phi^E(\widetilde{\pi}^{\VT},D|{\pi}^{\AV*})\} \nonumber \\
        &\leq \phi^E(\widetilde{\pi}^{\VT},D^*|{\pi}^\AV) - \phi^E(\widetilde{\pi}^\VT,D^*|{\pi}^{\AV*})\nonumber\\
        &\leq K\epsilon
    \end{align}
    \item If $\min_{D}\{\phi^E(\widetilde{\pi}^{\VT},D|{\pi}^\AV)\}  \leq  \min_{D}\{\phi^E(\widetilde{\pi}^{\VT},D|{\pi}^{\AV*})\}$, then we let $D^* = \text{argmin}_D \{\phi^E(\widetilde{\pi}^{\VT},D|{\pi}^{\AV})\}$  to have a similar evaluation
      \begin{align}
         \left|\min_{D}\{\phi^E(\widetilde{\pi}^{\VT},D|{\pi}^\AV)\}  - \min_{D}\{\phi^E(\widetilde{\pi}^{\VT},D|{\pi}^{\AV*})\}  \right| &= \min_{D}\{\phi^E(\widetilde{\pi}^{\VT},D|{\pi}^{\AV*})\}  - \min_{D}\{\phi^E(\widetilde{\pi}^{\VT},D|{\pi}^\AV)\} \nonumber \\
        &\leq \phi^E(\widetilde{\pi}^{\VT},D^*|{\pi}^{\AV*}) - \phi^E(\widetilde{\pi}^\VT,D^*|{\pi}^{\AV})\nonumber\\
        &\leq K\epsilon.
        \end{align}
\end{itemize}
So we always have 
\begin{equation}
\label{eq:eq3211}
   \left|\min_{D}\{\phi^E(\widetilde{\pi}^{\VT},D|{\pi}^\AV)\}  - \min_{D}\{\phi^E(\widetilde{\pi}^{\VT},D|{\pi}^{\AV*})\}  \right| \leq K\epsilon. 
\end{equation}
Now, suppose $\pi^{\AV*}$ is a target adversary policy that we want the imitation learning model to train with,  and let $(\widetilde{\pi}^{\VT*}, D^{{\VT*}})$ be an imitation learning policy and the discriminator network that are trained with adversary policy $\pi^{\AV}$. 
To bound the gap between $| \phi^E(\widetilde{\pi}^{\VT*},D^{\VT*}|{\pi}^{\AV*}) - \max_{\widetilde{\pi}^{\VT}}\min_{D}\phi^E(\widetilde{\pi}^{\VT},D|{\pi}^{\AV*})|$, we consider the following two cases 
\begin{itemize}
    \item If $\phi^E(\widetilde{\pi}^{\VT*},D^{\VT*}|{\pi}^{\AV*}) \geq  \max_{\widetilde{\pi}^{\VT}}\min_{D}\phi^E(\widetilde{\pi}^{\VT},D|{\pi}^{\AV*})$, then 
\begin{align}
    &\left| \phi^E(\widetilde{\pi}^{\VT*},D^{\VT*}|{\pi}^{\AV*}) -\max_{\widetilde{\pi}^{\VT}}\min_{D}\phi^E(\widetilde{\pi}^{\VT},D|{\pi}^{\AV*}) \right|=  \phi^E(\widetilde{\pi}^{\VT*},D^{\VT*}|{\pi}^{\AV*}) -\max_{\widetilde{\pi}^{\VT}}\min_{D}\phi^E(\widetilde{\pi}^{\VT},D|{\pi}^{\AV*}) \nonumber \\
    &\stackrel{(e)}{\leq} K\epsilon +  \phi^E(\widetilde{\pi}^{\VT*},D^{\VT*}|{\pi}^{\AV}) -\min_{D}\phi^E(\widetilde{\pi}^{\VT*},D|{\pi}^{\AV*}) \nonumber \\
    &\stackrel{(f)}{ = } K\epsilon +  \min_D \phi^E(\widetilde{\pi}^{\VT*},D|{\pi}^{\AV}) - \min_{D}\phi^E(\widetilde{\pi}^{\VT*},D|{\pi}^{\AV*}) \nonumber \\
    &{\leq} K\epsilon + \left|\min_{D}\{\phi^E(\widetilde{\pi}^{\VT*},D|{\pi}^\AV)\}  - \min_{D}\{\phi^E(\widetilde{\pi}^{\VT*},D|{\pi}^{\AV*})\}  \right| \nonumber \\
    &\stackrel{(g)}{\leq} 2K \epsilon,
\end{align}
where $(e)$ is because  $|\phi^E(\widetilde{\pi}^{\VT*},D|{\pi}^{\AV*}) - \phi^E(\widetilde{\pi}^{\VT*},D|{\pi}^{\AV})|\leq K\epsilon$ (according to \eqref{eq:gap-phiE}), $(f)$ is due to $D^{\VT*} = \text{argmax}_D \phi^E(\widetilde{\pi}^{\VT*},D|{\pi}^{\AV})$ and $(g)$ is because of 
\eqref{eq:eq3211}.
\item If $\phi^E(\widetilde{\pi}^{\VT*},D^{\VT*}|{\pi}^{\AV*}) \geq  \max_{\widetilde{\pi}^{\VT}}\min_{D}\phi^E(\widetilde{\pi}^{\VT},D|{\pi}^{\AV*})$, we let $\widetilde{\pi}^{\VT**} = \text{argmax}_{\widetilde{\pi}^{\VT}}\min_{D}\phi^E(\widetilde{\pi}^{\VT},D|{\pi}^{\AV*})$ and write
\begin{align}
    &\left| \phi^E(\widetilde{\pi}^{\VT*},D^{\VT*}|{\pi}^{\AV*}) -\max_{\widetilde{\pi}^{\VT}}\min_{D}\phi^E(\widetilde{\pi}^{\VT},D|{\pi}^{\AV*}) \right|=  \max_{\widetilde{\pi}^{\VT}}\min_{D}\phi^E(\widetilde{\pi}^{\VT},D|{\pi}^{\AV*}) - \phi^E(\widetilde{\pi}^{\VT*},D^{\VT*}|{\pi}^{\AV*}) \nonumber \\
    &\stackrel{(h)}{\leq}  \min_{D}\phi^E(\widetilde{\pi}^{\VT**},D|{\pi}^{\AV*})     - \phi^E(\widetilde{\pi}^{\VT*},D^{\VT*}|{\pi}^{\AV}) + K\epsilon \nonumber \\
    &\stackrel{(i)}{\leq }  K\epsilon +  \min_D \phi^E(\widetilde{\pi}^{\VT**},D|{\pi}^{\AV*}) - \min_{D}\phi^E(\widetilde{\pi}^{\VT**},D|{\pi}^{\AV}) \nonumber \\
    &{\leq} K\epsilon + \left| \min_D \phi^E(\widetilde{\pi}^{\VT**},D|{\pi}^{\AV*}) - \min_{D}\phi^E(\widetilde{\pi}^{\VT**},D|{\pi}^{\AV})   \right| \nonumber \\
    &\stackrel{(j)}{\leq} 2K \epsilon,
\end{align}
where $(h)$ is due to $ \phi^E(\widetilde{\pi}^{\VT*},D^{\VT*}|{\pi}^{\AV})-\phi^E(\widetilde{\pi}^{\VT*},D^{\VT*}|{\pi}^{\AV*})\leq K\epsilon$ (see \eqref{eq:gap-phiE}), $(i)$ is due to $\phi^E(\widetilde{\pi}^{\VT*},D^{\VT*}|{\pi}^{\AV}) = \max_{\widetilde{\pi}^{\VT}}\min_D \phi^E(\widetilde{\pi}^{\VT},D|{\pi}^{\AV}) \geq  \min_D \phi^E(\widetilde{\pi}^{\VT**},D|{\pi}^{\AV}) $, and $(j)$ is because of 
\eqref{eq:eq3211}.
\end{itemize}
Combine the two cases above we obtain he desired bound. 
\[
|\phi^E(\widetilde{\pi}^{\VT*},D^{\VT*}|{\pi}^{\AV*}) - \max_{\widetilde{\pi}^{\VT}}\min_{D}\phi^E(\widetilde{\pi}^{\VT},D|{\pi}^{\AV*})| \leq 2K \max_{s \in \cS} \left\{\sqrt{\KL(\pi^\AV (\cdot|s)||\widetilde{\pi}^\AV (\cdot|s))} \right\}
\].
\end{proof}


\subsection{Proof of Proposition \ref{prop:EAIL} }
\begin{proposition}
The gradient of \eqref{prob:paper2} w.r.t  adversary's policy can be computed as follows:
\[
\nabla_{\theta} \left( V_{\pi^{\AV}_\theta}(s_0) - V_{\pi^{\VT}}(s_0|\bq^\VT(\pi^{\AV}_\theta))\right) 
= \bbE_{\tau\sim (\pi^{\VT}, \pi^{\AV})} \left[  \Delta^R(\tau)\sum_{t} \nabla_{\theta} \log \pi^{\AV}_\theta(a^{\AV}_t|s_t) \right]
\]
where $\Delta^R(\tau) =\sum_{t}\gamma^t ( r^{\AV}(s_t)- r^{\VT}(s_t))$.
\end{proposition}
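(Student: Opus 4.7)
The plan is to mirror the derivation already carried out for Lemma~\ref{lm:lm1}, now treating both $V_{\pi^{\AV}_\theta}(s_0)$ and $V_{\pi^{\VT}}(s_0|\bq^\VT(\pi^{\AV}_\theta))$ as expectations over joint trajectories $\tau=\{(s_t,a^{\AV}_t,a^{\VT}_t)\}_t$ drawn from the combined policy $(\pi^{\AV}_\theta,\pi^{\VT})$. First I would ``unroll'' the marginalized dynamics $\bq^{\AV}(\pi^{\VT})$ and $\bq^{\VT}(\pi^{\AV}_\theta)$ so that both value functions share the same joint sampling law; specifically
\[
V_{\pi^{\AV}_\theta}(s_0) = \sum_\tau R^{\AV}(\tau) \prod_t \pi^{\AV}_\theta(a^{\AV}_t|s_t)\,\pi^{\VT}(a^{\VT}_t|s_t)\,P(s_{t+1}|s_t,a^{\AV}_t,a^{\VT}_t),
\]
and the analogous identity holds for $V_{\pi^{\VT}}(s_0|\bq^\VT(\pi^{\AV}_\theta))$ with $R^{\VT}(\tau)=\sum_t \gamma^t r^{\VT}(s_t)$ replacing $R^{\AV}(\tau)$. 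The two expectations then differ only through the reward accumulated along the trajectory, not through the sampling law.

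Next I would apply the standard log-derivative (REINFORCE) trick. Since $\theta$ enters $\Pr(\tau)$ only through the factors $\pi^{\AV}_\theta(a^{\AV}_t|s_t)$, we have $\nabla_\theta \log \Pr(\tau)=\sum_t \nabla_\theta \log \pi^{\AV}_\theta(a^{\AV}_t|s_t)$, and differentiating term by term yields
\[
\nabla_\theta V_{\pi^{\AV}_\theta}(s_0) = \bbE_{\tau\sim(\pi^{\AV}_\theta,\pi^{\VT})}\!\left[R^{\AV}(\tau)\sum_t \nabla_\theta \log \pi^{\AV}_\theta(a^{\AV}_t|s_t)\right],
\]
together with the analogous expression involving $R^{\VT}(\tau)$ for the victim value term. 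Subtracting the two and collecting the reward factor gives $R^{\AV}(\tau)-R^{\VT}(\tau)=\sum_t \gamma^t(r^{\AV}(s_t)-r^{\VT}(s_t))=\Delta^R(\tau)$, which is precisely the claimed gradient.

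The main obstacle is the bookkeeping in the first step: one must ensure that both value functions are consolidated into expectations over exactly the same joint trajectory distribution before applying the log-derivative identity, because $V_{\pi^{\VT}}(s_0|\bq^\VT(\pi^{\AV}_\theta))$ is initially written as an expectation under victim-only trajectories with $\pi^{\AV}_\theta$ hidden inside the dynamics. The same marginalization device used in the proof of Lemma~\ref{lm:lm1}, with the roles of $\pi^{\AV}$ and $\widetilde{\pi}^{\VT}_\psi$ swapped, resolves this cleanly; once the two expectations are expressed over the joint law, the differentiation is mechanical and the result follows.
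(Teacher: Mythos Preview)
Your proposal is correct and follows essentially the same route as the paper: unroll both value functions into expectations over the joint trajectory law $(\pi^{\AV}_\theta,\pi^{\VT})$, apply the log-derivative trick (noting that $\theta$ enters only through the $\pi^{\AV}_\theta$ factors), and subtract to collect $\Delta^R(\tau)$. The paper simply invokes Lemma~\ref{lm:lm1} with the roles swapped to handle the $V_{\pi^{\VT}}$ term and then combines, exactly as you outline.
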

\begin{proof}
Similarly to the proof of Lemma \ref{lm:lm1}, we compute the gradient of $V_{\pi^\VT}(\cdot)$ as
\begin{align}
    \nabla_\theta \left(V_{\pi^{\VT}}(s_0|\bq^\VT(\pi^{\AV}_\theta))\right) &=\sum_{\tau = \{(s_t,a^\AV_t, a^\VT_t)\}\sim \pi^\AV, \pi^\VT} \left(\sum_t \gamma^t r^\VT(s_t)\right)P(\tau) \sum_t \nabla_\theta\log{\pi}^\AV_\theta(a^\AV_t|s_t) \nonumber \\
&= \bbE_{\tau = \{(s_t,a^\AV_t, a^\VT_t)\}\sim \pi^\AV, \pi^\VT} \left[\left(\sum_t \gamma^t r^\VT(s_t)\right) \sum_t \nabla_\psi\log{\pi}^\AV_\theta(a^\AV_t|s_t) \right],\nonumber  
\end{align}
Thus, we can write the gradient of \eqref{prob:paper2} as
\[
\nabla_{\theta} \left( V_{\pi^{\AV}_\theta}(s_0) - V_{\pi^{\VT}}(s_0|\bq^\VT(\pi^{\AV}_\theta))\right) 
= \bbE_{\tau\sim (\pi^{\VT}, \pi^{\AV})} \left[  \left(\sum_t \gamma^t (r^\AV(s_t) - r^\VT(s_t))\right)\sum_{t} \nabla_{\theta} \log \pi^{\AV}_\theta(a^{\AV}_t|s_t) \right],
\]
which concludes the proof. 
\end{proof}

\subsection{Proof of Corollary \ref{coro:c2}}
\begin{corollary}
\eqref{prob:paper2} is equivalent to
\[
 \max_{\pi^{\AV}} \left\{ \bE_{\tau \sim \pi^{\AV}}\left[\sum_{t=0}^\infty \gamma^t \Delta^r(s_t)\;\Big| \bq^\AV(\pi^{\VT}) \right] \right\},
\]
where $\Delta^r(s_t) =  r^{\AV}(s_t)- r^{\VT}(s_t)$.
\end{corollary}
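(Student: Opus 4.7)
The plan is to show that, when the victim policy $\pi^{\VT}$ is fixed, the two value functions appearing in \eqref{prob:paper2} are expectations of state-only rewards under the \emph{same} state-trajectory distribution, so their difference collapses into a single expectation with reward $\Delta^r(s_t) = r^{\AV}(s_t) - r^{\VT}(s_t)$ under the adversary's MDP with dynamics $\bq^{\AV}(\pi^{\VT})$. Maximizing that single expectation is exactly the right-hand side of the corollary.

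The key step is the following state-marginal identity. Integrating out both players' actions in the ground-truth transition $P(s_{t+1}|s_t,a^{\AV},a^{\VT})$, the one-step state transition induced by the joint execution of $(\pi^{\AV},\pi^{\VT})$ is
\[
\widetilde{P}(s_{t+1}|s_t) \;=\; \sum_{a^{\AV},a^{\VT}} \pi^{\AV}(a^{\AV}|s_t)\,\pi^{\VT}(a^{\VT}|s_t)\, P(s_{t+1}|s_t,a^{\AV},a^{\VT}).
\]
By the definitions of $\bq^{\AV}(\pi^{\VT})$ and $\bq^{\VT}(\pi^{\AV})$ stated in Section~3, this equals both $\sum_{a^{\AV}} \pi^{\AV}(a^{\AV}|s_t)\, q^{\AV}(s_{t+1}|s_t,a^{\AV})$ and $\sum_{a^{\VT}} \pi^{\VT}(a^{\VT}|s_t)\, q^{\VT}(s_{t+1}|s_t,a^{\VT})$. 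Hence for any state-only function $f$,
\[
\bbE_{\tau \sim \pi^{\AV}}\!\left[\sum_t \gamma^t f(s_t)\,\Big|\,\bq^{\AV}(\pi^{\VT})\right] \;=\; \bbE_{\tau \sim \pi^{\VT}}\!\left[\sum_t \gamma^t f(s_t)\,\Big|\,\bq^{\VT}(\pi^{\AV})\right].
\]
Specializing to $f = r^{\VT}$ rewrites $V_{\pi^{\VT}}(s_0|\bq^{\VT}(\pi^{\AV}))$ as an expectation under the adversary's own MDP, and linearity of expectation then gives
\[
V_{\pi^{\AV}}(s_0) - V_{\pi^{\VT}}(s_0|\bq^{\VT}(\pi^{\AV})) \;=\; \bbE_{\tau \sim \pi^{\AV}}\!\left[\sum_t \gamma^t \Delta^r(s_t)\,\Big|\,\bq^{\AV}(\pi^{\VT})\right],
\]
so maximizing both sides over $\pi^{\AV}$ yields the claimed equivalence.

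As an alternative (or sanity check), one can appeal directly to Proposition~\ref{prop:EAIL}: the policy gradient of $V_{\pi^{\AV}_\theta}(s_0) - V_{\pi^{\VT}}(s_0|\bq^{\VT}(\pi^{\AV}_\theta))$ is exactly the REINFORCE-style gradient of the standard RL objective $\bbE_{\tau\sim \pi^{\AV}}\!\left[\sum_t \gamma^t \Delta^r(s_t)\,|\,\bq^{\AV}(\pi^{\VT})\right]$. Since the two smooth functions of $\theta$ share the same gradient everywhere, they differ by an additive constant independent of $\pi^{\AV}$, and therefore have the same maximizer, which is what equivalence means here.

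The main (mild) obstacle is just the bookkeeping to verify the state-marginal identity cleanly from the definitions of $\bq^{\AV}$ and $\bq^{\VT}$; after that, everything reduces to linearity of expectation, so no genuine technical difficulty arises.
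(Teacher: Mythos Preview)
Your second route---appealing to Proposition~\ref{prop:EAIL} and noting that the two objectives share the same policy gradient, hence the same maximizer---is exactly the paper's proof, which consists of the single sentence ``The equivalence can be straightforwardly deduced from Proposition~\ref{prop:EAIL}.''

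Your first route, the state-marginal identity, is a genuinely different and more elementary argument. It establishes something strictly stronger than the paper: you show the two objective \emph{values} are equal for every $\pi^{\AV}$, whereas the gradient-matching argument only yields equality up to an additive constant (which is enough for the maximization to be equivalent, but less informative). The state-marginal argument is also self-contained---it does not rely on Proposition~\ref{prop:EAIL} at all---and in fact mirrors the technique the paper itself uses in the appendix proof of Corollary~\ref{coro:c1}, where the adversary's value function is rewritten as an expectation under the imitator's MDP by expanding over joint trajectories. So your direct approach is entirely in the spirit of the paper's other proofs, just applied here where the paper chose the shortcut through the proposition.
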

\begin{proof}
The equivalence can be straightforwardly deduced from Proposition \ref{prop:EAIL}.
\end{proof}

\subsection{Proof of Theorem \ref{th:th2}}
\begin{theorem}
For any $\epsilon>0$, we have the following bound
\[
\left|Y(\epsilon) - Y^* \right| \leq  \frac{\gamma\sqrt{2\ln 2} \max_s\{ |\Delta^r(s)|\}}{(1-\gamma)^2} \sqrt{\epsilon}. 
\]
\end{theorem}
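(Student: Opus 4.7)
The plan is to reuse the Lipschitz-in-KL bound established inside the proof of Lemma \ref{lm:lm2} --- specifically the intermediate inequality \eqref{eq:this-for-theorem2} --- but applied with the roles of adversary and victim swapped. Define
\[
G(\pi^{\AV},\pi^{\VT}) \;=\; \bbE_{\tau \sim \pi^{\AV}}\!\left[\sum_{t=0}^{\infty} \gamma^{t} \Delta^{r}(s_{t})\,\Big|\,\bq^{\AV}(\pi^{\VT})\right],
\]
so that $Y^{\ast} = \max_{\pi^{\AV}} G(\pi^{\AV},\pi^{\VT}_{0})$ and $Y(\epsilon) = \min_{\pi^{\VT} \in \Omega(\epsilon)} \max_{\pi^{\AV}} G(\pi^{\AV},\pi^{\VT})$. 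Since $\pi^{\VT}_{0} \in \Omega(\epsilon)$ trivially (its KL-divergence to itself is $0$), we immediately get $Y(\epsilon) \leq Y^{\ast}$, so only the one-sided gap $Y^{\ast}-Y(\epsilon)$ needs to be controlled.

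For that gap I would fix an arbitrary adversary policy $\pi^{\AV}$ and view $G(\pi^{\AV},\cdot)$ as a function of the victim policy alone, treating $\Delta^{r}(s_{t})$ as the ``reward''. The argument behind Lemma \ref{lm:lm2} is really a statement about how the discounted return of any agent running a fixed policy changes when the environment dynamics are driven by a perturbed opponent policy; the competitive-advantage telescoping identity and the Pinsker-type step $\|p-q\|_{1}\leq\sqrt{2\ln 2\,D_{\KL}(p\|q)}$ never use which agent is called ``adversary'' and which is called ``victim''. Replicating that derivation with the two roles swapped and with $|\Delta^{r}(s)|$ playing the role formerly played by $|r^{\VT}(s)|$ yields, for every $\pi^{\VT}\in\Omega(\epsilon)$,
\[
\bigl|G(\pi^{\AV},\pi^{\VT}) - G(\pi^{\AV},\pi^{\VT}_{0})\bigr|
\;\leq\; \frac{\gamma\sqrt{2\ln 2}\,\max_{s}|\Delta^{r}(s)|}{(1-\gamma)^{2}}\,\sqrt{\max_{s}D_{\KL}(\pi^{\VT}(\cdot|s)\|\pi^{\VT}_{0}(\cdot|s))}
\;\leq\; \frac{\gamma\sqrt{2\ln 2}\,\max_{s}|\Delta^{r}(s)|}{(1-\gamma)^{2}}\,\sqrt{\epsilon}.
\]
The crucial feature is that the right-hand side is uniform in $\pi^{\AV}$.

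The last step is to pass this uniform bound through the max over $\pi^{\AV}$ and then through the min over $\pi^{\VT}\in\Omega(\epsilon)$. Uniformity gives, for every $\pi^{\VT}\in\Omega(\epsilon)$,
\[
\bigl|\max_{\pi^{\AV}} G(\pi^{\AV},\pi^{\VT}) - \max_{\pi^{\AV}} G(\pi^{\AV},\pi^{\VT}_{0})\bigr|
\;\leq\; \frac{\gamma\sqrt{2\ln 2}\,\max_{s}|\Delta^{r}(s)|}{(1-\gamma)^{2}}\,\sqrt{\epsilon},
\]
and specializing to the minimizing $\pi^{\VT}$ in $\Omega(\epsilon)$ yields $Y^{\ast}-Y(\epsilon) \leq \tfrac{\gamma\sqrt{2\ln 2}\,\max_{s}|\Delta^{r}(s)|}{(1-\gamma)^{2}}\sqrt{\epsilon}$, which combined with $Y(\epsilon)\leq Y^{\ast}$ is exactly the claim. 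The only mildly delicate point I anticipate is the symmetry claim in the middle step: one has to verify that the competitive-advantage identity and the subsequent $\ell_{1}$-to-KL bound used in Lemma \ref{lm:lm2} are genuinely agnostic to which player's policy is perturbed and which agent's reward stream is being evaluated, so that the Lipschitz-in-KL result transfers with no loss to the current perturbed-victim setting.
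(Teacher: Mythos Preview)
Your proposal is correct and follows essentially the same route as the paper: swap the roles of adversary and victim in the Lipschitz-in-KL inequality \eqref{eq:this-for-theorem2} with $\Delta^{r}$ in place of $r^{\VT}$, then pass the uniform bound through the $\max_{\pi^{\AV}}$ and the $\min_{\pi^{\VT}\in\Omega(\epsilon)}$. If anything, your write-up is more explicit than the paper's own proof, which compresses the uniformity-in-$\pi^{\AV}$ step and the passage through the outer optimizers into a single ``similarly to'' sentence.
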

\begin{proof}
In analogy to the proofs of Lemma \ref{lm:lm2} and Theorem \ref{th:th1}, if we define $\Lambda(\pi^\VT)$ as the adversary's expected return if the victim's policy is $\pi^\VT$
\[
\Lambda(\pi^\VT) = \max_{\pi^{\AV}} \left\{ \bE_{\tau \sim \pi^{\AV}}\left[\sum_{t=0}^\infty \gamma^t \Delta^r(s_t)\;\Big| \bq^\AV(\pi^{\VT}) \right] \right\}
\]
then, similarly to the derivation  in \eqref{eq:this-for-theorem2}, we can get the following bound for any $\pi^\VT \in\Omega(\epsilon)$
\begin{align}
\left|\Lambda(\pi^\VT) - \Lambda(\pi_0^\VT) \right| &\leq \frac{\gamma \max_s |\Delta^r(s)|}{(1-\gamma)^2} \left\{\sqrt{2\ln 2 \KL(\pi^\VT (\cdot|s)||\widetilde{\pi}^\VT (\cdot|s))} \right\}\nonumber \\
&\leq \frac{\sqrt{2\ln 2 \epsilon }\gamma \max_s |\Delta^r(s)|}{(1-\gamma)^2} 
\end{align}
which also implies that
\[
\left|\min_{\pi^\VT \in \Omega(\epsilon)}\{\Lambda(\pi^\VT)\} - Y^* \right| \leq \frac{\sqrt{2\ln 2 \epsilon }\gamma \max_s |\Delta^r(s)|}{(1-\gamma)^2},
\]
which is also the desired result. 
\end{proof}

\section{Adversarial Policy Imitation Learning Algorithm}
Algorithm \ref{algo:main-EAPIL} shows the detailed steps of our Adversarial Policy Imitation Learning algorithms and Figure \ref{fig:train-attacker} provides
a more detailed overview  of our framework.

\begin{algorithm*}[htb]
\caption{Adversarial Policy Imitation Learning}
\label{algo:main-EAPIL}
\begin{algorithmic}
\State \textbf{Input:} Environment $env$; adversary policy $\pi^\AV_\theta$; defender policy $\pi^\VT$; imitator defender policy $\widetilde{\pi}^\VT_\psi$; discriminator $D_w$; adversary replay buffer $\mathcal{D}^\AV$; imitator replay buffer $\widetilde{\mathcal{D}}^\VT$; expert trajectory buffer $\mathcal{D}^\VT_\tau$; imitator trajectory buffer $\widetilde{\mathcal{D}}^\VT_\tau$; initial trainable parameters $\theta_0, \psi_0, w_0$.

\While{repeat a certain number of times}
\State $s_0 \gets env.\text{reset}()$ \Comment{Start new episode}
\For{$t=0,1,2,...$}
  \State \comments{Sampling}
  \State Sample victim action $a_t^\VT$ and imitator action $\widetilde{a}_t^\VT$ by $\pi^\VT (s_t)$ and $\widetilde{\pi}^\VT_\psi (s_t)$.
  \State Sample adversary action $a_t^\AV \sim \pi^\AV_\theta (s_t')$, where $s_t' = (s_t, \widetilde{a}_t^\VT)$. \Comment{Sample actions}
  \State $(s_{t+1}, r_t, d_t) \gets env.\text{step}(a_t^\VT, a_t^\AV)$ \Comment{Next step}
  \State $\widetilde{r}_t^\VT \gets \eta(s_t, a^{\AV}_t, a^{\VT}_t)$
  \State Append transition $(s_t', a_t^\AV, r_t^\AV)$, $(s_t, \widetilde{a}_t^\VT, \widetilde{r}_t^\VT)$, $(s_t, a_t^\VT)$ and $(s_t, \widetilde{a}_t^\VT)$ respectively to $\mathcal{D}^\AV$, $\widetilde{\mathcal{D}}^\VT$, $\mathcal{D}^\VT_\tau$ and $\widetilde{\mathcal{D}}^\VT_\tau$.
  \If{any buffer is full} \Comment{Update parameters}
    \For{$i=0,1,2,...$}
      \State Sample trajectories from $\mathcal{D}^\AV$, $\widetilde{\mathcal{D}}^\VT$, $\mathcal{D}^\VT_\tau$ and $\widetilde{\mathcal{D}}^\VT_\tau$
      \State \comments{Updating imitator's discriminator parameters}
      \State Update imitator's discriminator parameters from $w_i$ to $w_{i+1}$ using \eqref{eq:update-Dw}.
      \State Take an imitator's generator policy step from $\psi_i$ to $\psi_{i+1}$ using TRPO rule using gradients in \eqref{eq:update-psi}
      \State \comments{Updating adversary's policy network}
      \State Take an adversary policy step from $\theta_i$ to $\theta_{i+1}$ via PPO using the gradients given in \eqref{prop:EAIL}.
    \EndFor
    \State Clear all buffers and update trainable parameters $\theta_0, \psi_0, w_0$.
  \EndIf
  \If{$d_t$ is true} \Comment{Terminated}
      \State \textbf{break}
  \EndIf
\EndFor
\EndWhile
\end{algorithmic}
\end{algorithm*}

\begin{figure*}[htp]
\caption{An overview of our imitation-based adversarial policy learning (APIL) platform.}
\label{fig:train-attacker}
\centering
\vspace{0.2cm}
\includegraphics[width=0.9\textwidth]{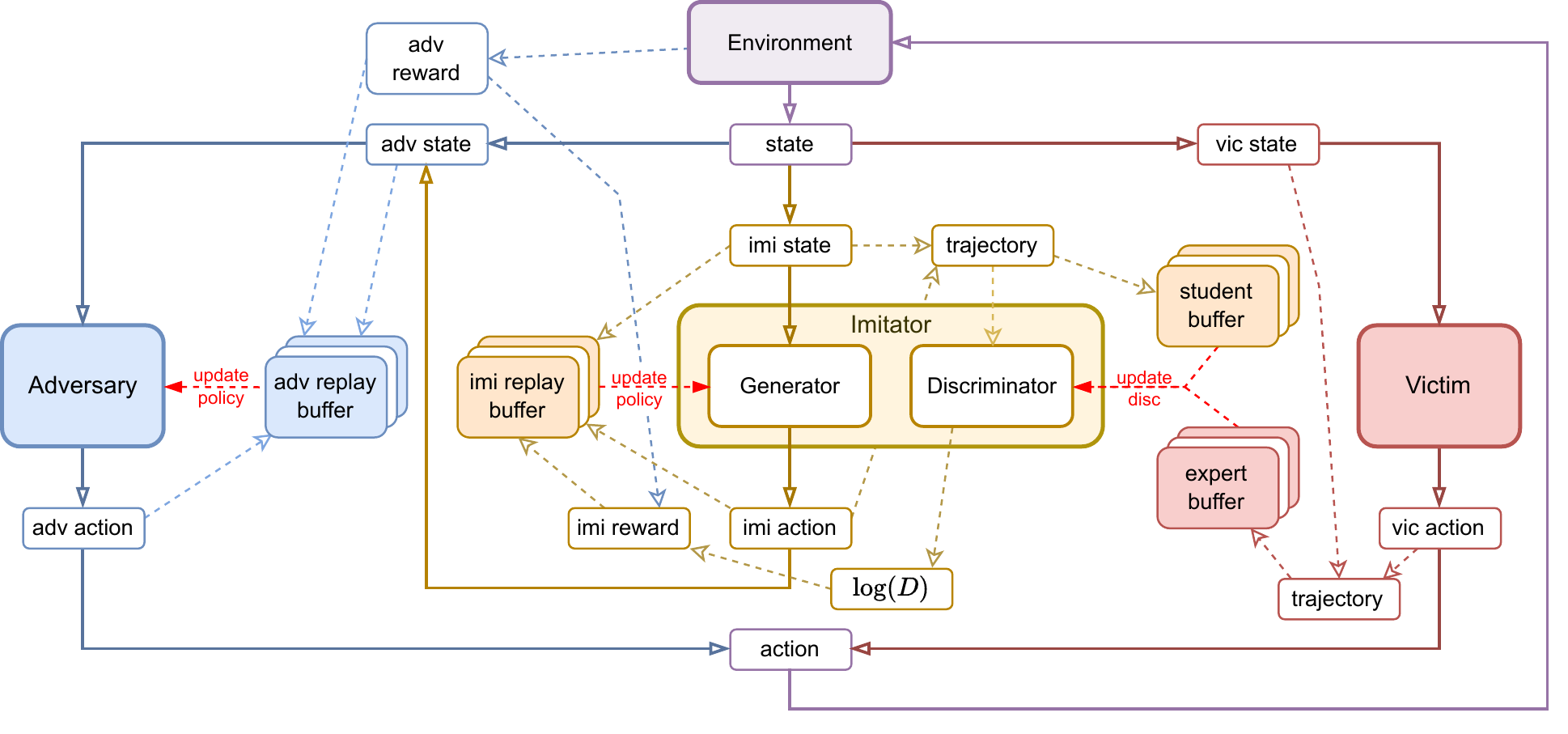}
\end{figure*}

\section{Experimental Settings and Additional Experiments}

Similar to APL's settings \citep{Guo2021AdversarialPL}, we train our adversary agents with 35M steps and retrain victim with 10M steps. In addition, we use the same model architectures, hidden dimensions and  hyperparameters with 1 GPU Nvidia GeForce RTX 2080 Ti, 24-core CPU Intel Xeon 4116 @ 2.1GHz and 64G RAM. All evaluation are tested individually with different seeds over 1000 episodes. 

When retraining the victim, we let the victim agent play with a mixing adversary which is a combination of the newly trained adversary and the baseline one. That is, we randomly taking actions from both adversaries. In Figures \ref{fig:vic-new adv} and \ref{fig:vic-old adv}, we  report the performance of the retraining victim, but with each adversary separately. It is clear that the victim retraining performance improves over episodes  for both adversaries for \textit{Kick-and-Defend, You-Shall-Not-Pass} and \textit{Sumo-Humans}. However, for \textit{Sumo-Ants},   the performance from retraining with our trained adversary improves, but that from retraining with the baseline adversary degrades.  This also indicates an advantage of our trained adversary agent in terms of retraining the victim.

\begin{figure*}[htb]
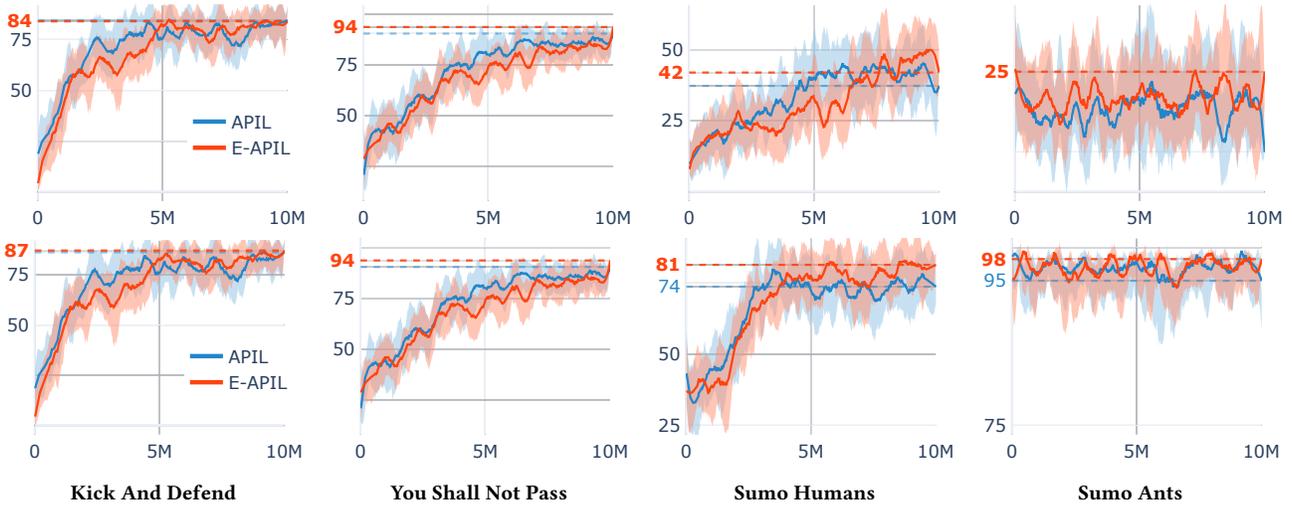

\caption{Performance of retraining victim with   trained adversary. \\ First line: win-rate. Second line: win-rate + tie-rate.}\label{fig:vic-new adv}
\centering
\showinstance[0.23]{KickAndDefend-v0}{vic_win_legend}{}
\showinstance[0.23]{YouShallNotPassHumans-v0}{vic_win}{}
\showinstance[0.23]{SumoHumans-v0}{vic_win}{}
\showinstance[0.23]{SumoAnts-v0}{vic_win}{}
\showinstance[0.23]{KickAndDefend-v0}{vic_win+tie_legend}{Kick And Defend}
\showinstance[0.23]{YouShallNotPassHumans-v0}{vic_win+tie}{You Shall Not Pass}
\showinstance[0.23]{SumoHumans-v0}{vic_win+tie}{Sumo Humans}
\showinstance[0.23]{SumoAnts-v0}{vic_win+tie}{Sumo Ants}
\end{figure*}

\begin{figure*}[htb]
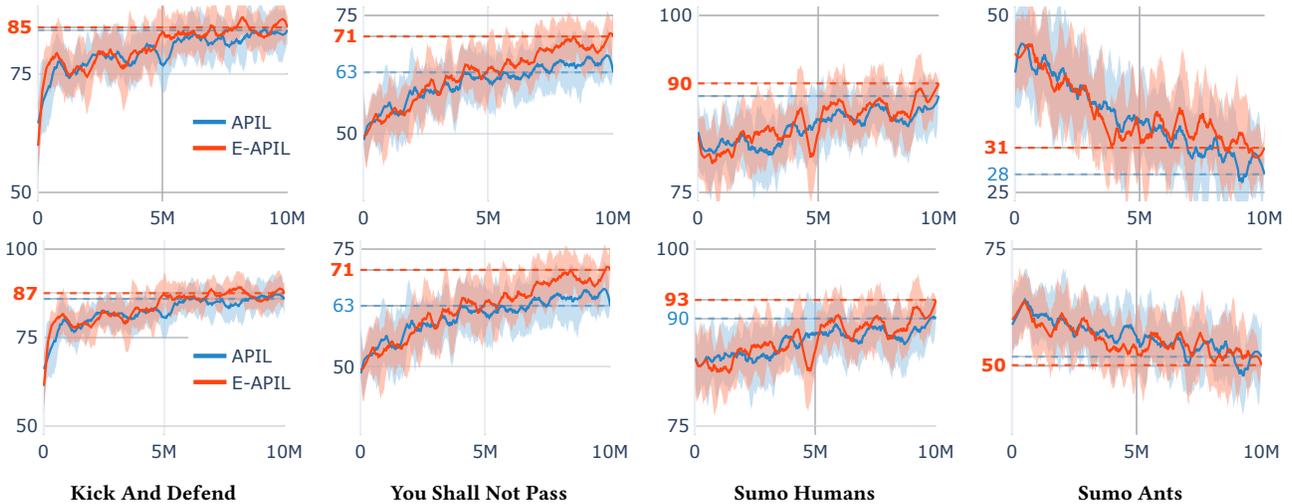

\caption{Performance of retraining victim with  baseline adversary. \\ First row: win-rate. Second row: win-rate + tie-rate.} \label{fig:vic-old adv}
\centering
\showinstance[0.23]{KickAndDefend-v0}{vic_ori_win_legend}{}
\showinstance[0.23]{YouShallNotPassHumans-v0}{vic_ori_win}{}
\showinstance[0.23]{SumoHumans-v0}{vic_ori_win}{}
\showinstance[0.23]{SumoAnts-v0}{vic_ori_win}{}
\showinstance[0.23]{KickAndDefend-v0}{vic_ori_win+tie_legend}{Kick And Defend}
\showinstance[0.23]{YouShallNotPassHumans-v0}{vic_ori_win+tie}{You Shall Not Pass}
\showinstance[0.23]{SumoHumans-v0}{vic_ori_win+tie}{Sumo Humans}
\showinstance[0.23]{SumoAnts-v0}{vic_ori_win+tie}{Sumo Ants}
\end{figure*}


\end{document}
